\documentclass[11pt]{article}

\usepackage{fix-cm}
\usepackage[utf8]{inputenc}
\usepackage[margin=1in]{geometry}

\usepackage[utf8]{inputenc}
\usepackage{fancyhdr}
\usepackage{lastpage}
\usepackage{amsfonts}
\usepackage{amsmath}
\usepackage{amssymb}
\usepackage{bm}
\usepackage{bbm} 
\usepackage{url}
\usepackage{graphicx}
\usepackage{color}
\usepackage{multirow}
\usepackage{bbm}
\usepackage{array}
\usepackage{mathtools}
\usepackage{xcolor}
\usepackage[dvipsnames]{xcolor}
\newcommand\inprod[1]{\langle #1 \rangle}
\newcommand{\norm}[1]{\left \| #1 \right \|}
\definecolor{rose}{RGB}{180,60,80}

\newcommand{\simp}{\Delta_d^\circ}
\newcommand{\dom}{\mathcal{W}}
\newcommand{\w}{w}
\newcommand{\initbeta}{\beta(w_0, \epsilon)}
\usepackage{algorithm}
\usepackage{algpseudocode}

\usepackage{wrapfig}
\usepackage{caption}

\usepackage{booktabs}
\usepackage{tabularx}
\usepackage{makecell}

\usepackage{tikz}
\usetikzlibrary{arrows.meta}
\usetikzlibrary{calc}

\newcommand{\dual}[1]{\| #1 \|_*}

\usepackage[round]{natbib}

\usepackage{csquotes}

\usepackage[shortlabels]{enumitem}

\usepackage[english]{babel}

\usepackage[colorlinks=true, allcolors=blue]{hyperref}

\usepackage{amsthm}
\usepackage{aliascnt}
\usepackage[noabbrev,capitalise]{cleveref}
\newtheorem{theorem}{Theorem}

\newtheorem{definition}{Definition}

\newaliascnt{lemma}{theorem}
\newtheorem{lemma}[lemma]{Lemma}
\aliascntresetthe{lemma}

\crefname{lemma}{Lemma}{Lemmas}
\Crefname{lemma}{Lemma}{Lemmas}
\newcommand{\samethanks}[1][\value{footnote}]{\footnotemark[#1]}

\title{Mirror Descent Beyond Euclidean Stability: \\
An Exponential Separation in Initialization Sensitivity}

\makeatletter
\renewcommand{\@fnsymbol}[1]{%
  \ensuremath{%
    \ifcase#1
    \or \dagger
    \or \ddagger
    \or \mathsection
    \or \mathparagraph
    \or \|
    \else\@ctrerr
    \fi
  }%
}
\makeatother

\author{%
    Shira Vansover-Hager%
    \thanks{\raggedright Blavatnik School of Computer Science and AI, Tel Aviv University; 
    \texttt{\char`\{shirav,schliserman,ofirs4\char`\}@mail.tau.ac.il}.}
    \and%
    Matan Schliserman\samethanks[1]
    \and%
    Ofir Schlisselberg\samethanks[1]
    \and%
    Tomer Koren%
    \thanks{\raggedright Blavatnik School of Computer Science and AI, Tel Aviv University, and Google Research; 
    \texttt{tkoren@tauex.tau.ac.il}.}
}
\date{}

\newcommand{\R}{\mathbb{R}}
\renewcommand{\epsilon}{\varepsilon}

\begin{document}

\maketitle

\begin{abstract}
Mirror Descent (MD) extends Gradient Descent (GD) beyond Euclidean geometry and
has recently reappeared as a lens for KL-regularized policy optimization in
reinforcement learning and LLM post-training. This raises a basic robustness question, 
crucial to reproducibility and reliability: how sensitive are MD dynamics to their inputs? We focus on initialization, often itself a pretrained or previously aligned model.
Quadratic-regularized MD, including GD and Mahalanobis geometries, is well-known to be stable
for convex smooth objectives. We show a sharp contrast: once the regularizer is
non-quadratic, MD can be exponentially more sensitive to initialization than
GD, even with a well-conditioned regularizer in Euclidean norm.
We give a three-dimensional construction with a convex, smooth
objective and a strongly convex, smooth, well-conditioned regularizer where an initial $\varepsilon$ perturbation is quickly amplified  to
$
    \smash{
    \min\{
        \text{polylog}^{-1}(1/\varepsilon),
        \varepsilon e^{\Omega(\eta T)}
    \}
    }
$
after $T$ iterations of MD with step size $\eta$.
For canonical KL-regularized MD on the simplex, we show that even linear objectives can amplify an
initial $\varepsilon$ perturbation exponentially fast in high-dimensional or near-boundary regimes.
Finally, we show that adding a Bregman regularization term toward an anchor point can stabilize the dynamics while largely preserving the optimization guarantees, and that the choice of anchor is crucial: anchoring at the initialization only partially mitigates the instability, whereas anchoring at a fixed point yields a more stable mechanism.
\end{abstract}

\allowdisplaybreaks
\section{Introduction}\label{sec:introduction}
Mirror Descent (MD) \citep{nemirovskii1983problem,beck2003mirror} is a
fundamental optimization paradigm that adapts its updates to the geometry of
the parameter space. Its updates, typically written as
\[
w_{t+1}=
\arg\min_{w}
\left\{
\eta \langle \nabla F(w_t), w \rangle
+
D_R(w , w_t)
\right\},
\]
where \(D_R\) is the Bregman divergence induced by a regularizer \(R\) and
\(\eta\) is the step size, optimize a local linearization of the objective
while penalizing deviations according to the geometry specified by \(R\).
Notable instances include Gradient Descent
\citep{nesterov1998introductory} and multiplicative weights updates
\citep{littlestone1994weighted,freund1997decision,arora2012multiplicative}.

The stability of these dynamics is increasingly important in modern machine
learning. MD has reappeared as a useful lens for KL-regularized policy
optimization in reinforcement learning
\citep{schulman2015high,schulman2017proximal,akkaya2019solving} and LLM
post-training \citep{ouyang2022training,shao2024deepseekmath}. In such
settings, optimization is often initialized from a pretrained, supervised
fine-tuned, or otherwise aligned model. Changes in pretraining data,
randomness, or checkpoint choice therefore perturb the initialization itself,
and potentially the final model. 
This raises a basic robustness question, central to reproducibility and
reliability: how much can mirror-descent dynamics amplify a small perturbation
in their initialization after $T$ sequential steps? We study this question
through the lens of \emph{initialization stability}: the worst-case change in
the output of the \(T\)-step algorithm under an \(\varepsilon\)-perturbation of
its starting point. Equivalently, this is a local worst-case robustness, or
finite-time sensitivity, notion for the algorithm's output as a function of the
initialization.

For quadratic regularization in Euclidean geometry, the answer is well known. 
This class includes Gradient Descent
and Mahalanobis geometries, and for convex smooth objectives these algorithms
are known to be extremely stable: small initialization perturbations remain small along
the trajectory and at the final output \citep{hardt2016train}. Much less is
known for non-Euclidean mirror maps. A key difference from the quadratic-regularized case  
relates to the conditioning
of the regularizer: if the mirror map is poorly conditioned, a small primal
perturbation can correspond to a much larger displacement in the geometry used
by the update. Conversely, one might hope that uniformly well-conditioned
mirror maps behave similarly to quadratic ones and retain their favorable stability. 

The central finding of this paper is that this hope is false: even for convex smooth objectives,
\emph{general mirror descent can be exponentially more sensitive to initialization as compared to Gradient Descent}.
We establish this phenomenon in two complementary regimes, separating the role of nonquadratic geometry from the additional effects of ill-conditioning. First, we show that exponential instability is not merely an artifact of a poorly conditioned regularizer. In three-dimensional Euclidean space equipped
with the usual \(\ell_2\) norm, we construct a convex, smooth objective and a strongly convex, smooth, well-conditioned (nonquadratic) regularizer, all with respect to this norm, for which an \(\varepsilon\)-perturbation of the initialization is
amplified by a factor \(\smash{e^{\Omega(\eta T)}}\), up to $\smash{\widetilde\Theta(1)}$ saturation scale. This phenomenon can arise even for step-size choices for which MD
is guaranteed to optimize the objective. This implies an exponential separation from the stability enjoyed 
by quadratic regularization and standard Gradient Descent.

Second, we study the canonical entropic geometry on the simplex, where \(R(w)=\sum_i w_i\log w_i\) and \(D_R\) is the KL divergence. This geometry underlies classical multiplicative-weights methods and modern KL-regularized updates in policy optimization and model post-training. Here the conditioning mechanism is visible more directly: negative entropy is highly ill-conditioned in \(\ell_1\) geometry near low-mass coordinates, and we show that this alone can drive exponential amplification even for linear objectives. The resulting lower bound is uniform over all simplex initializations in the high-dimensional regime, and also captures the low-dimensional, near-boundary regime. Notably, this instability can arise even for step-size ranges in which MD still converges to a minimizer. We complement this lower bound with a matching exponential upper bound for entropy MD, and with an extension to more general Legendre regularizations frequent in online optimization and RL~\citep{cesa2006prediction}.

Finally, we ask whether this instability of MD can be mitigated without abandoning the geometry and generality of the method. For this, we introduce two variants of MD that stabilize the algorithm by anchoring it to a reference point through an additional Bregman regularization term.
First, motivated by practical settings such as KL-regularized fine-tuning, where the optimization process is initialized from a pretrained model that also serves as the reference point \citep[e.g.,][]{ouyang2022training,shao2024deepseekmath}, we study \emph{Initialization-Anchored MD}. In this method, the additional regularization term is given by the Bregman distance \emph{to the initialization point}. We show that this variant stabilizes MD in the well-conditioned setting, assuming the regularizer is also smooth, achieving initialization stability of \(O(\varepsilon + 1/\sqrt{T\log T})\) together with optimization error \(O(\log T/T)\). However, in ill-conditioned settings, the guarantee of this variant may become vacuous due to its dependence on local smoothness of the regularizer at initialization. To overcome this, we introduce a second variant, \emph{Fixed-Anchor MD}, in which the Bregman regularization is anchored at a \emph{fixed reference point} independent of the initialization. We prove that this method remains stable even for ill-conditioned regularizers, achieving \(O(1/T)\) initialization stability while preserving the optimization guarantees of MD up to logarithmic factors, with optimization error \(O(\log (T)/T)\). These results extend the regularization-based stabilization perspective of \citet{attia2022uniform} from uniform stability to initialization stability.

Taken together, our results suggest that initialization sensitivity should be treated as a
primary consideration when MD is used as a modeling abstraction for modern
optimization pipelines. In KL-regularized policy optimization or LLM
post-training, small differences in the starting reference model can be quickly amplified
through only a few sequential updates, and this may occur even if the initial model has significant entropy. At the same time, the Bergman-regularized
algorithms indicate that this sensitivity is not inevitable: stability can be
improved by augmenting with additional regularization in the same geometry, preserving optimization rates (up to log factors).
\subsection{Summary of contributions}

In more detail, our main contributions in this paper are as follows.
\begin{itemize}[leftmargin=!]
\item 
We show that MD exhibits exponential initialization sensitivity already in a
low-dimensional, well-conditioned Euclidean geometry. Specifically, in
dimension \(d=3\), with respect to the standard \(\ell_2\) norm, we construct a convex smooth
objective and a strongly convex, smooth, well-conditioned nonquadratic
regularizer for which MD has initialization instability
\[
    \Omega\!\left(
    \min\left\{
        \operatorname{polylog}^{-1}(1/\varepsilon),
        \varepsilon e^{\Omega(\eta T)}
    \right\}\right).
\]
This gives an exponential separation from quadratic regularization: for
quadratic MD, including Gradient Descent and Mahalanobis geometries,
initialization perturbations remain bounded by \(O((\beta/\alpha)\varepsilon)\) 
throughout the algorithm's trajectory.

\item 
For the canonical entropic/KL geometry on the simplex, we give a sharp
characterization of the initialization stability of MD, which is again
exponential in \(\eta T\). We prove that negative-entropy MD can amplify an
\(\varepsilon\)-perturbation by
\(\Omega(\min\{1,\varepsilon e^{\eta T}\})\), even for linear objectives. In the
high-dimensional regime \(d\ge1/\varepsilon\), this result holds uniformly over
all initializations; the same result also captures near-boundary worst-case
initializations in low dimension. We complement it with a matching exponential
upper bound for entropy MD and an extension to Legendre
regularizers that are central in online optimization and RL.

We present two Bregman-regularized variants of MD for mitigating initialization instability. The first, \emph{Initialization-Anchored MD}, adds a Bregman regularization term centered at the initialization. In the well-conditioned setting, where the regularizer is also smooth, it achieves initialization stability \(O(\varepsilon + 1/\sqrt{T\log T})\) and optimization error \(O(\log (T) / T)\), however, its guarantees may become vacuous for ill-conditioned regularizers. The second variant, \emph{Fixed-Anchor MD}, adds a Bregman regularization term centered at a fixed reference point independent of the initialization. This method also handles ill-conditioned regularizers, achieving initialization stability \(O(1/T)\) while preserving the convergence guarantees of MD up to logarithmic factors, with optimization error \(O(\log (T) / T)\).
\end{itemize}

\newcolumntype{C}[1]{>{\centering\arraybackslash}p{#1}}

\begin{figure}[t]
\centering
\scriptsize
\setlength{\tabcolsep}{2pt}
\renewcommand{\arraystretch}{1.2}

\begin{tabularx}{\textwidth}{%
    C{2cm}
    C{1.5cm}
    C{1.5cm}
    C{2cm}
    C{1.5cm}
    C{5cm}
    C{2.0cm}
}
\toprule
\textbf{Algorithm}
& \textbf{Type}
& \textbf{Domain}
& \textbf{Regularizer}
& \textbf{Assumptions}
& \textbf{Bound}
& \textbf{Reference}
\\
\midrule

MD
& Upper
& Convex
& Quadratic, $\kappa$-conditioned
& $\eta \leq \frac{\alpha}{L}$
& $O \left(\frac{\beta}{\alpha}\,\varepsilon\right)$
& \citet{hardt2016train} (see~\cref{thm:quad-upper})
\\

\specialrule{1.1pt}{1.5pt}{1.5pt}

MD
& Lower
& Convex
& Euclidean, $\kappa = O(1)$, nonquadratic
& --
& $\Omega(\min\!\left\{
        \operatorname{polylog}^{-1}(\frac{1}{\varepsilon}),
        \varepsilon e^{\Omega(\eta T)}
    \right\})$
& \cref{thm:endpoint_exponential_origin_local_feasible_set}
\\

\specialrule{1.1pt}{1.5pt}{1.5pt}

MD
& Lower
& Simplex
& Negative entropy
& $d \geq \frac{1}{\varepsilon}$ or $w_0^{\min} \leq \varepsilon$
& $\Omega\!\left(\varepsilon e^{\eta T}\right)$
& \cref{thm:kl-lower}
\\

\midrule

MD
& Upper
& Simplex
& Negative entropy
& --
& $O\!\left(\varepsilon e^{O(\eta T)}\right)$
& \cref{thm:kl-upper-bound}
\\

\specialrule{1.1pt}{1.5pt}{1.5pt}

Init.-Anchor MD
(\cref{alg:anchored-MD-init})
& Upper
& Convex
&$\kappa = O(1)$
& --
& $O\!\left(\epsilon + 1/\sqrt{T\log T}\right)$
& \cref{thm:stabilizing-alg-init}
\\
\midrule
Fixed-Anchor MD
(\cref{alg:anchored-MD-fixed})
& Upper
& Convex
&--
& --
& $O\!\left(1/T\right)$
& \cref{thm:stabilizing-alg-fixed}
\\

\bottomrule
\end{tabularx}
\caption{Summary of initialization-stability bounds. Here $w_0$ denotes the initialization, $w_0^{\min}$ the minimal coordinate of $w_0$, $\varepsilon$ denotes the initialization perturbation, $T$ the optimization horizon, $\eta$ the step size, $L$ the smoothness parameter of the objective, and $\alpha,\beta$ the strong-convexity and smoothness parameters of the regularizer and \(\kappa = \beta/\alpha\). 
}
\label{fig:summary-table}
\end{figure}

\subsection{Related work}
\paragraph{Mirror descent and non-Euclidean optimization.}
Mirror Descent has been central to optimization and online learning for several
decades; see, e.g.,
\citet{shalev2025online,bubeck2015convex,hazan2016introduction,beck2017first}
for textbook and survey treatments. Recent work continues to refine its
optimization and regret guarantees, including stochastic MD for relatively
smooth objectives \citep{d2021stochastic} and online MD with approximate
updates \citep{schlisselberg2025hidden}. Other works study the implicit bias of
MD: in stochastic overparameterized problems, \citet{azizan2021stochastic}
show that MD converges to a global minimizer that is approximately closest to
the initialization in Bregman divergence, while
\citet{sun2022mirror,sun2023unified} characterize its max-margin bias for
linearly separable classification. Our focus is different: we study the
dynamical sensitivity of MD to perturbations in its initialization.

\paragraph{Stability of optimization algorithms.}
Most stability analyses concern \emph{uniform stability}
\citep{bousquet2002stability}, where two runs differ in one component of a
finite-sum objective. In this setting, \citet{shalev2010learnability} proved
stability of strongly convex empirical risk minimization, and
\citet{hardt2016train,lei2020fine} extended stability guarantees to GD and SGD
under smoothness assumptions. These works also imply favorable initialization
stability for Euclidean gradient methods. We show that this behavior does not
extend to general MD: even for convex smooth objectives, nonquadratic mirror
maps can exponentially amplify initialization perturbations.

\paragraph{Lower bounds and noncontractivity.}
Several works show that stability can fail for algorithms outside the basic GD
template. For nonsmooth objectives, \citet{bassily2020stability} prove lower
bounds for GD and SGD. In smooth settings, \citet{attia2021algorithmic} show
that accelerated methods can exhibit exponential initialization instability,
and \citet{schliserman2025flat} prove polynomial instability bounds for
sharpness-aware methods such as SAM. These algorithms are not standard MD.
Closer to our setting, \citet{asi2021private} show that MD can be
non-contractive in a single update. We prove a stronger dynamical statement:
the expansion can persist for \(T\) steps and become exponential in \(T\),
already under convex smooth objectives.

\paragraph{Reproducibility and stabilization.}
A related line of work studies reproducibility under noisy or inexact
operations, including inexact initialization. \citet{ahn2022reproducibility}
prove upper and lower bounds for variants of GD and SGD in convex
optimization, and \citet{zhang2023optimal} propose a black-box stabilization
reduction based on Euclidean regularization. Another line of work shows that
additional regularization can improve uniform stability without degrading
optimization guarantees \citep{attia2022uniform,vary2024black}. Our lower
bounds show that non-Euclidean MD has qualitatively different initialization
sensitivity, and our regularized algorithm extends the regularization-based
stabilization perspective to general mirror maps and initialization stability.

\section{Preliminaries}

\paragraph{Mirror Descent.}

We study the general mirror descent (MD) method \citep{nemirovskii1983problem,beck2003mirror} in $\R^d$ equipped with a norm $\|\cdot\|$. Let \(\|x\|_* = \sup_{\|y\| \leq 1} \langle x, y \rangle\) denote the dual norm.
Given a regularization function \(R : \dom \to \mathbb{R}\), and starting from initialization $w_0 \in \dom$, the mirror descent method takes updates of the form 
\[
    w_{t+1} = \arg\min_{w \in \dom} \,
    \bigl\{\, \eta \langle \nabla F(w_t), w \rangle + D_{R}(w, w_t) \,\bigr\},
\]
where \(F : \dom \to \mathbb{R}\) is the objective function, $\eta>0$ is the step-size, and
\(
D_R(w, w_t) = R(w) - R(w_t) - \langle \nabla R(w_t), w - w_t \rangle
\)
denotes the Bregman divergence induced by \(R\).

As is standard in mirror descent analyses, we assume that $R$ is \(\alpha\)-strongly convex with respect to the norm \(\|\cdot\|\).%
\footnote{A function $H:\dom\to \R$ is $\alpha$-strongly convex with respect to \(\|\cdot\|\) if 
$
H(w) \geq H(w') + \nabla H(w')^\top (w - w') + \frac{\mu}{2} \|w - w'\|^2
$ for all $w,w' \in \dom$.} 
We further assume that the objective \(F\) is convex, \(G\)-Lipschitz,%
\footnote{
A function $H:\dom\to \R$  is \(G\)-Lipschitz with respect to \(\|\cdot\|\) if 
\(|F(w) - F(w')| \leq G \|w - w'\|\) for all \(w,w' \in \dom\).
} 
and \(L\)-smooth,%
\footnote{
A function $H:\dom\to \R$  is \(L\)-smooth with respect to \(\|\cdot\|\) if 
\(\|\nabla F(w) - \nabla F(w')\|_* \leq L \|w - w'\|\) for all \(w,w' \in \dom\).
} both with respect to the norm \(\|\cdot\|\),
and that the domain $\dom$ is compact with diameter $D=\sup_{u,v\in \dom} \|u-v\|$.

\paragraph{Initialization stability.}

We study the stability of mirror descent with respect to its initialization point.
The notion of initialization stability \citep{attia2021algorithmic} measures the worst-case sensitivity of the algorithm's output to a small perturbation in its initial point; formally,
\begin{definition}[Initialization stability]
Let $A$ be an algorithm that given an initialization $w_0$ produces $A(w_0)$ as an output. 
For any $\epsilon>0$, $\epsilon$-initialization stability of $A$ at $w_0\in \dom$ is given by
\[
    \delta_A(w_0, \epsilon)
    \,:=\,
    \sup_{\substack{p :\, \|p\|\leq \epsilon,\\ w_0+p\in\dom}} 
    \bigl\| A(w_0)-A(w_0+p) \bigr\|
    .
\]
\end{definition}
This is a local worst-case sensitivity notion for the algorithmic map
\(w_0\mapsto A(w_0)\). Thus initialization stability can be viewed as adversarial robustness, or equivalently finite-time sensitivity to initial conditions, with respect to worst-case perturbations of the starting point.

\paragraph{Conditioning of the mirror map.}

The geometry induced by \(R\) is controlled by its strong convexity and
smoothness with respect to the norm $\|\cdot\|$. When $R$ is $\beta$-smooth in addition to being $\alpha$-strongly convex, we define its \emph{condition number} by \(\kappa=\beta/\alpha\). We call the geometry \emph{well-conditioned} when
\(\kappa\) is bounded by an absolute constant, and \emph{ill-conditioned} when this
quantity is large or unbounded.

For ill-conditioned regularizers, we quantify the $\epsilon$-local smoothness of the regularizer \(R\) around a point in the optimization domain. This quantity, specifically at the initial point of MD, is shown to play a central role in the stability analysis of MD.
\begin{definition}[$\epsilon$-local smoothness of regularizer]
For a point \(w_0\in\dom\) and radius \(\epsilon>0\), define the $\epsilon$-local smoothness of \(R\) at \(w_0\) by
\[
    \initbeta
    \,:=\,
    \frac{1}{\epsilon}
    \sup_{\substack{p:\, \|p\|\leq \epsilon,\\ w_0+p\in\dom}}
    \bigl\|
        \nabla R(w_0+p)-\nabla R(w_0)
    \bigr\|_* .
\]
\end{definition}

\section{Instability in Well-conditioned Euclidean Geometries}
\label{sec:euclidean}

We begin by showing that an exponential lower bound on the initialization stability of MD in a well conditioned case in Euclidean geometry, i.e., when the regularizer $R$ has condition number $O(1)$ with respect to the Euclidean norm $\|\cdot\|_2$. 
The following theorem establishes this already in dimension $d=3$, with a convex and smooth objective and domain $\dom \subset \mathbb{R}^3$ contained in a ball of diameter $O(1)$, all with respect to $\|\cdot\|_2$.
Throughout this section and its proof, matrix and multilinear-map norms are the
operator norms induced by the Euclidean norm.

\begin{theorem}
\label{thm:endpoint_exponential_origin_local_feasible_set}
There exist absolute constants \(c,C_0,\eta_0,\varepsilon_0>0, \) such that the following
holds. For every $0 \leq \varepsilon\leq \varepsilon_0$,  every
\(0<\eta\le \eta_0\), and every integer \(T\ge1\), there exist a convex
feasible set \(\mathcal W\subset\mathbb R^3\) with diameter at most \(3\),
a regularizer \(R:\mathcal W\to\mathbb R\), a convex objective
\(F:\mathcal W\to\mathbb R\), and a unit vector \(v\in\mathbb R^3\), such that
\(R\) is \(1\)-strongly convex and \(C_0\)-smooth on \(\mathcal W\), \(F\) is
convex, \(1\)-Lipschitz, and \(1\)-smooth on \(\mathcal W\), such that if $A$ is the MD algorithm with regularization $R$ applied for $T$ steps with step size $\eta$, then
$$\delta_A(0, \varepsilon) = \Omega \left(
    \min\left\{
        \frac1{(1+\log(1/\varepsilon))^3},
        e^{c\eta T} \varepsilon
    \right\}
    \right).
$$
\end{theorem}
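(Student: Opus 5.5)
The plan is to build a three-dimensional example in which the MD map $w\mapsto \nabla R^{-1}(\nabla R(w)-\eta\nabla F(w))$ (unconstrained in the interior of $\mathcal W$) has a linearization with an expanding direction. Concretely, near the trajectory the one-step Jacobian is
\[
J(w)=I-\eta\,\nabla^2R(w)^{-1}\nabla^2F(w)+O(\eta\,\text{third-order terms}).
\]
For quadratic $R$ the matrix $\nabla^2R^{-1}\nabla^2F$ is similar to a PSD matrix, so $J$ is nonexpansive in the $\nabla^2R$-norm. For a nonquadratic $R$, however, the term coming from $\nabla^3R$ acting on the step $\eta\nabla F(w)$ need not be symmetric-definite. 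I therefore intend to choose a linear (or nearly linear) objective component $F(w)=\langle g,w\rangle$ with $g$ of unit norm, so that $\nabla^2F\approx 0$ and the dynamics in the dual space are just $\theta_{t+1}=\theta_t-\eta g$.

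Written this way, the primal iterate is $w_t=\nabla R^*(\theta_0-t\eta g)$, and the sensitivity is governed by
\[
\nabla^2R^*(\theta_0-t\eta g)\,\nabla^2R(w_0)\,p .
\]
This by itself is bounded by $\kappa$, so linear $F$ alone cannot give exponential growth. The growth has to come from a genuinely convex smooth $F$ whose curvature, combined with a rotating Hessian of $R$, produces a non-normal, effectively negative-real-part component in $\nabla^2R^{-1}\nabla^2F$. The mechanism is that the product of two PSD matrices has nonnegative eigenvalues but can be far from normal. Products of such near-identity maps along a slowly rotating frame can then grow exponentially, in the same spirit as the accelerated-method instability of \citet{attia2021algorithmic}.

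The key steps, in order, are as follows.

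\emph{Step 1: the regularizer.} Pick $R(w)=\tfrac12\|w\|^2+\phi(w)$, with $\phi$ smooth, $|\nabla^2\phi|\le C_0-1$, and $\nabla^2\phi\succeq 0$. Choose $\phi$ so that $\nabla^2R(w)$ rotates in the $(w_1,w_2)$ plane as the third coordinate $w_3$ varies. This keeps $\kappa\le C_0$.

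\emph{Step 2: the objective.} Choose $F(w)=w_3$ plus a convex quadratic-like term in $(w_1,w_2)$ with bounded curvature. Under this choice $w_3$ drifts deterministically at rate $\eta$ (a "clock"). Meanwhile the $(w_1,w_2)$ dynamics are a time-varying linear system $x_{t+1}=(I-\eta A_t)x_t$, where $A_t=\nabla^2R(w_t)^{-1}H$ is non-normal. I aim to arrange a periodic rotation for which the product over one period has spectral radius $1+c\eta\cdot(\text{period})$, i.e.\ a parametric-resonance effect.

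\emph{Step 3: origin and saturation.} Start at $w_0=0$, perturb by $p=\varepsilon e_1$, and use a Grönwall-type comparison to show $\|x_T\|\ge \varepsilon e^{c\eta T}$ as long as $\|x_t\|$ stays below a scale where third derivatives of $R$ break the linearization. The $\text{polylog}^{-1}(1/\varepsilon)$ cap enters here. To keep $\nabla^3R$ bounded while the rotation frequency is tuned, I would let the rotation scale depend on $\log(1/\varepsilon)$, which gives the $(1+\log(1/\varepsilon))^{-3}$ saturation level. The feasible set $\mathcal W$ is taken to be a ball of radius $3/2$, large enough that iterates stay interior for $T$ steps; if needed, $F$ is made to saturate in $w_3$ beyond the relevant horizon.

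The main obstacle is Step 2: exhibiting an explicit rotating $\nabla^2R$ and a convex $H$ such that the discrete product of the $(I-\eta A_t)$ genuinely grows exponentially, with a rate constant independent of $\eta$, while keeping $R$ exactly $1$-strongly convex and $C_0$-smooth. I would first try a two-phase switching system, alternating between two Hessians with orthogonally rotated ellipses. I would compute the one-period monodromy matrix explicitly, verify that its eigenvalue exceeds $1$ by order $\eta$ per unit time, and then smooth the switching via a clock in $w_3$.
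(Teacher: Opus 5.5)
There is a genuine gap. You have the right architecture: a clock coordinate along which the transverse Hessian of a nonquadratic $R$ rotates, plus convex transverse curvature in $F$. But the decisive step, an explicit construction with a provably expanding direction, is left open. Moreover, the system you plan to analyze is not the MD linearization.

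Along an on-axis reference path the transverse Jacobian is $\nabla^2R(w_{t+1})^{-1}\bigl(\nabla^2R(w_t)-\eta\nabla^2F(w_t)\bigr)$, not $I-\eta\nabla^2R(w_t)^{-1}H$. The factor $\nabla^2R(w_{t+1})^{-1}\nabla^2R(w_t)=I+O(\Delta\theta)$ that you dropped cannot be neglected. Each step is nonexpansive in the local $\nabla^2R$-geometry, and changing frames costs only $1+O(\Delta\theta)$. So growth like $e^{c\eta T}$ forces the per-step rotation angle $\Delta\theta$ to be at least of order $\eta$, which is the same order as the gradient term.

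This factor is exactly what drives the instability in the paper. There both Hessians co-rotate, $M(z)=P(\omega z)BP(\omega z)^\top$ and $H(z)=P(\omega z)QP(\omega z)^\top$. In the co-moving frame the transverse map is then the \emph{constant} matrix $A_\sigma=B^{-1}P(-s/24)\bigl(B-\tfrac{s}{6}Q\bigr)=I+sG+O(s^2)$, where $s=\sigma\tilde\eta$ and $\tilde\eta\propto\eta$. Here $G=-\tfrac16B^{-1}Q-\tfrac1{24}B^{-1}JB$, with $J$ the rotation generator, has $\det G=-41/43200<0$. That gives an eigenvalue $\rho=1+\Theta(\sigma\eta)$. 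Your frozen model instead gives $-\tfrac16B^{-1}Q-\tfrac1{24}J$, which for the same $B,Q$ has positive determinant and negative trace, i.e.\ it contracts. With the co-rotating frame, no switching or Floquet computation is needed.

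The second missing ingredient is the scaling that fits everything into diameter $3$ and produces the $(1+\log(1/\varepsilon))^{-3}$ cap.
\begin{itemize}
\item A clock drifting at rate $\eta$ travels $\eta T$, which must reach order $\log(1/\varepsilon)$ before saturation. Saturating $F$ in $w_3$ would freeze the rotation and with it the growth.
\item The paper instead slows the clock to $z_{t+1}-z_t=\sigma\tilde\eta/(24\omega)$ while rotating at angular frequency $\omega=1+\sigma\tilde\eta T$. The per-step angle stays $\sigma\tilde\eta/24$, but the total travel is at most $1/24$.
\item The price is $\|D^3R\|\lesssim\omega$ and $\|D^4R\|\lesssim\omega^2$. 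So third derivatives are not kept bounded, contrary to your Step 3, and their growth is precisely the source of the cap.
\item Since $\tfrac12x^\top M(z)x$ stays convex with $O(1)$ clock curvature only for $\|x\|\lesssim1/\omega$, the paper works on a cylinder of transverse radius $\omega^{-3/2}$, not a ball of radius $3/2$.
\item Convexity of $F$ with a rotating $H$ also forces an added $\tfrac h2z^2$ term. The paper compensates it by a cubic term in $R_{\mathrm{clock}}$, so that the clock block of the Jacobian is exactly $1$ and $w_t=(0,z_t)$ is an exact trajectory.
\item The quadratic bootstrap needs $\omega^2(1+\sigma\eta\omega)\,b\ll1$ with $b=\varepsilon\rho^T$. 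The free parameter $\sigma\in(0,1]$ is tuned so that $\varepsilon\rho^T=\min\{aL_\varepsilon^{-3},\varepsilon e^{c\eta T}\}$, where $L_\varepsilon=1+\log(1/\varepsilon)$. This keeps $\omega\lesssim L_\varepsilon$ even when $\eta T\gg\log(1/\varepsilon)$.
\end{itemize}

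Finally, a Gr\"onwall-type argument only controls growth from above. The lower bound needs the perturbation to start along the expanding eigenvector $u_+$ rather than $e_1$, and the endpoint estimate $\|Y_T-L^TY_0\|\lesssim\omega^2(1+\sigma\eta\omega)b^2\le b/2$.
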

In particular, for the canonical choice \(\eta = \Theta(1/\sqrt{T})\), under
which MD attains its standard \(O(1/\sqrt{T})\) optimization guarantee, the
bound is still exponential in $T$ and becomes:
\[
\delta_A(0, \varepsilon)
=
\Omega \left(
    \min\left\{
        \frac1{(1+\log(1/\varepsilon))^3},
        e^{\Theta(\sqrt{T})} \varepsilon
    \right\}
    \right).
\]
Thus, with a non-quadratic regularizer and a non-linear objective, mirror descent can amplify initial errors at rate exponential in $\eta T$, up to a saturation level that depends only poly-logarithmically on~$\varepsilon$. For example, when $T \lesssim ({1}/{\eta}) \log(1/\varepsilon)$, the exponential lower bound is the dominant term; in particular, taking $\varepsilon = 1/T^{\gamma}$ yields exponential growth up to scale $(1+\gamma\log T)^{-3}$. 
This gives an exponential separation from the standard quadratic-regularization case, where initialization stability is known to be bounded as $O(\varepsilon)$ and does not grow with $T$ (for completeness, we provide a proof in \cref{sec:addition-upper}). We note that such exponential amplification is considerably easier to obtain with an ill-conditioned regularizer and an ill-conditioned initialization; see details in \cref{sec:ill-conditioning}. Here we address the more benign well-conditioned setting where reproducing this behavior is more challenging.

We provide next a sketch of the proof. The full construction and proof are deferred to \cref{sec:euclidean-proof}.

\begin{figure}[t]
\centering
\begin{tikzpicture}[
    scale=1.5,
    >=Stealth,
    every node/.style={font=\small},
    cyl/.style={gray!65},
    guide/.style={gray!55,densely dashed},
    ann/.style={black!50, semithick},
    perturb/.style={semithick,->},
    box/.style={draw=gray!60, rounded corners=2pt, align=center, inner sep=7pt}
]
    \def\a{1.50}
    \def\b{0.35}
    \def\H{4.15}

    \def\zO{0.65}
    \def\zM{2.05}
    \def\zT{3.45}

    \def\ra{0.30}
    \def\rb{0.62}
    \def\rc{0.92}
    \def\ea{0.075}
    \def\eb{0.13}
    \def\ec{0.18}

    \coordinate (wpO) at (0.25,\zO-0.13);
    \coordinate (wpM) at (0.58,\zM);
    \coordinate (wpT) at (0.82,\zT+0.38);

    \draw[cyl] (0,0) ellipse ({\a} and {\b});
    \draw[cyl] (-\a,0) -- (-\a,\H);
    \draw[cyl] (\a,0) -- (\a,\H);
    \draw[cyl] (0,\H) ellipse ({\a} and {\b});

    \draw[black,thick,->] (0,-0.02) -- (0,\H+0.1);
    \node[right] at (-0.01,\H+0.2) {$z$};

    \draw[gray!75,perturb] (0,\zO) -- (wpO);
    \draw[gray!75,perturb] (0,\zM) -- (wpM);
    \draw[gray!75,perturb] (0,\zT) -- (wpT);
    \node[left=-3pt, ann] at (1.3,\zT+0.7) {$\varepsilon\rho^T$};

    \filldraw[black] (0,\zO) circle (1.2pt);

    \filldraw[black] (0,\zM) circle (1.2pt);

    \filldraw[black] (0,\zT) circle (1.2pt);
    \node[right=20pt] at (0.2,\zT+0.14) {$w_T'$};
    \filldraw[black] (0,\zO) circle (1.2pt);
    \node[left=-3pt] at (0,\zO+0.25) {$w_0$};

    \filldraw[black] (0,\zM) circle (1.2pt);
    \node[left=-3pt] at (0,\zM + 0.25) {$w_t$};

    \filldraw[black] (0,\zT) circle (1.2pt);
    \node[left=-3pt] at (0,\zT+0.25) {$w_T$};

    \draw[guide]
        (0.12,\zO)
        .. controls (0.20,1.25) and (0.52,2.45) .. (1.18,\zT);
    \draw[guide]
        (-0.12,\zO)
        .. controls (-0.20,1.25) and (-0.52,2.45) .. (-1.18,\zT);

    \draw[gray!55,thick,rotate around={-25:(0,\zO)}]
        (0,\zO) ellipse ({\ra} and {\ea});
    \draw[gray!55,thick,rotate around={0:(0,\zM)}]
        (0,\zM) ellipse ({\rb} and {\eb});
    \draw[gray!55,thick,rotate around={25:(0,\zT)}]
        (0,\zT) ellipse ({\rc} and {\ec});

    \filldraw[black] (wpO) circle (1.2pt);
    \filldraw[black] (wpM) circle (1.2pt);
    \filldraw[black] (wpT) circle (1.2pt);

    \node[right=1pt] at (wpO) {$w'_0 = \varepsilon u_+$};
    \node[right=1pt] at (wpM) {$w'_t$};

    \coordinate (rotc) at (0.68,2.30);
    \draw[ann,->]
    ($(rotc)+(0.38,0.26)$)
    .. controls ($(rotc)+(0.78,0.40)$) and ($(rotc)+(0.60,0.62)$)
    .. ($(rotc)+(0.02,0.74)$);
    \node[ann] at ($(rotc)+(0.86,0.5)$) {$\Delta\theta$};
\end{tikzpicture}
\caption{
Illustration of the MD trajectories in the proof of~\cref{thm:endpoint_exponential_origin_local_feasible_set}. The trajectory follows the clock coordinate \(z\), while transverse \(x\)-perturbations grow. The transverse Hessian rotates along the path; in the co-moving frame, the dynamics reduce to powers of a fixed matrix \(A\) with expanding eigenvector \(u_+\) and eigenvalue \(\rho>1\).
}
\label{fig:endpoint_exponential_origin_mechanics}
\end{figure}

\textit{Proof sketch.} 
Let $w_0' = w_0 +p$ be a perturbed initialization. The goal is to construct mirror descent dynamics such that, for the trajectory  $\{w_t'\}_{t=0}^T$, the small perturbation
of the initialization grows exponentially relative to a reference
trajectory \(\{w_t\}_{t=0}^T\). We
denote the MD update rule as $w_{t+1}=\Phi(w_t)$, where,
\begin{equation}
\label{eq:full_dynamics}
    \Phi(w):=(\nabla R)^{-1}\bigl(\nabla R(w)-\eta\nabla F(w)\bigr).
\end{equation}
Then, using linear approximation of the updates, we get
\begin{align*}
    w_T' - w_T & = \Phi(w_{T-1}) - \Phi(w_{T-1}) \approx D\Phi(w_T)^\top(w_{T-1}'-w_{T-1}) 
\end{align*}
and, iterating over $t$ we get,
\[
    w_T' - w_T \approx \left(\Pi_{t=1}^T D\Phi(w_t)\right)^\top (w_0' - w_0),
\]
where
 $$D\Phi(w_t)
    =
    \nabla^2R(w_{t+1})^{-1}
    \bigl(\nabla^2R(w_t)-\eta\nabla^2F(w_t)\bigr).
$$

As a result, it is sufficient to (i) construct an en example where the largest eigenvalue of the matrix $\Gamma=\Pi_{t=1}^T D\Phi(w_t)$ is exponential in $T$.
(ii) uniformly bound the linear approximation errors.

In this sketch we focus on step (i) which is the core of the proof. 
For this step, we consider a feasible cylinder,
\[
    \mathcal W
    :=
    \left\{
    (x,z)\in\mathbb R^2\times\mathbb R:
    \|x\|_2\le r,\;
    -m\le z\le m
    \right\},
\]
for some $r,m > 0$, and parametrize the coordinates of the iterate as \(w=(x,z)\in\mathbb{R}^2\times\mathbb{R}\).
The reference trajectory evolves only in the \(z\)-direction: \(w_t=(0,z_t)\). Thus, \(\{z_t\}_{t=1}^T\) acts as a clock variable, while the transverse variable \(x\in\mathbb{R}^2\) captures directions orthogonal to the $z$-direction, where the instability is generated. 

To control the eigenvalues of $\Gamma$, we use a rotation matrix $P(z)$ to make the dynamics of the transverse plane rotate around the reference trajectory, such that, at step $t$ \[D\Phi(w_t)\approx
P(z_{t+1}) A P(z_t)^\top,
\] for some fixed matrix $A$. Telescoping and multiplying over \(T\) steps yields,
\[
\Gamma =
P(z_T) A^T P(z_0)^\top.
\]
Thus, up to outer rotations, the dynamics reduce to \(A^T\). Then,
choosing the objective $F$ and the regularization $R$ such that \(A\) has an eigenvalue \(\rho\approx 1+\eta\) , will cause an 
an exponential amplification of the perturbation. 
For an illustration of this see \cref{fig:endpoint_exponential_origin_mechanics}.

To achieve this, we construct the following \(z\)-dependent quadratic forms,
\[
    M(z)=P(z) B P(z)^\top,
    \quad
    H(z)=P(z) Q P(z)^\top,
\]
where \(B,Q\) are appropriately chosen symmetric matrices, and define
\[
    R(x,z)
    =
    \frac12 x^\top M(z)x + R_{\rm clock}(z),
    \quad
    F(x,z)
    \approx
     \frac12 x^\top H(z)x,
\]
where \(R_{\rm clock}\) is chosen to govern the motion of the reference trajectory in the \(z\)-direction.
Those $R$ and $F$ induce the required exponential matrix $\Gamma$ for 
$A \approx B^{-1} P(-\eta)(B- \eta Q)$. Thus, choosing the perturbation $p$ in the direction of the expanding eigenvector of $A$ induces the desired exponential expansion.

\qed

\section{Instability in Entropic Geometry}
\label{sec:KL}

We next study one of the fundamental examples of mirror descent: negative
entropy on the simplex, which underlies multiplicative weights and
KL-regularized updates. Unlike the regularizers considered in the previous section, negative entropy is ill-conditioned near the boundary of the simplex. Let
\(\simp=\{w\in\mathbb{R}_{++}^d:\sum_i w_i=1\}\) denote the
\(d\)-dimensional open simplex, and take
\(R(w)=\sum_i w_i\log w_i\). The corresponding Bregman divergence is the
Kullback--Leibler divergence,
\(D_R(w,w')=\sum_i w_i\log(w_i/w_i')\). 
For \(w\in\simp\), write \(w(i)\) for its \(i\)th coordinate and
\(\smash{w^{\min}}=\min_i w(i)\).

In this geometry, the instability mechanism becomes especially transparent.
Unlike the well-conditioned case studied in the preceding section, exponential
amplification already occurs for linear objectives. The mechanism is the nonuniform conditioning of negative entropy: a small primal perturbation in a low-mass coordinate can correspond to a large displacement in the dual variables, and in high dimension such low-mass coordinates are present for \emph{every} initialization. This local ill-conditioning is captured by the quantity
\(\initbeta\).

\begin{theorem}\label{thm:kl-lower}
Let $d\ge 2$, $\varepsilon\in(0,1/2]$, $\eta>0$, and $T\ge 1$. Then, for every
initialization point $\w_0\in\simp$, there exists a $1$-Lipschitz linear
objective \(F\) such that, if \(A\) is MD with negative entropy regularization
applied to \(F\) for \(T\) iterations with step size \(\eta\), then its
\(\ell_1\)-initialization stability satisfies
\[
    \delta_A(w_0, \varepsilon)  
    \geq 
    \tfrac{1}{5}
    \min\Bigl\{
    1,\,
    e^{\eta T} \varepsilon, \,
    \tfrac{ \varepsilon}{w_0^{\min}}
    \Bigr\} 
    =  
    \Omega \Bigl( 
    \min\Bigl\{
    1,\,
    e^{\eta T} \varepsilon, \,
    \epsilon\initbeta
    \Bigr\}
    \Bigr).
\]
In particular, if \(d\ge 1/\varepsilon\), then for any initialization \(w_0\in\simp\) it holds 
$
    \delta_A(w_0,\varepsilon)
    \ge
    \tfrac{1}{5}\min\{1, e^{\eta T} \varepsilon\}.
$
\end{theorem}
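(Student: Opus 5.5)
Entropic MD on a linear objective $F(w)=\langle g,w\rangle$ has the closed form $w_T(i)\propto w_0(i)e^{-\eta T g(i)}$. So the $T$-step algorithm is a fixed multiplicative reweighting of the initialization by the vector $e^{-\eta T g}$, followed by normalization. The plan is to choose $g\in[-1,1]^d$, which makes $F$ $1$-Lipschitz in $\ell_1$ since $\|g\|_\infty\le1$. The choice should make a single low-mass coordinate of $w_0$ strongly favoured, so that perturbing its mass is amplified by the normalization. Concretely, let $j=\arg\min_i w_0(i)$, and set $g(j)=-1$ and $g(i)=0$ for $i\ne j$ (equivalently $g(j)=-1/2$, $g(i)=1/2$, shifted). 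Then
\[
w_T(j)=\frac{a e^{\eta T}}{a e^{\eta T}+(1-a)},\qquad a=w_0(j).
\]
This is a one-dimensional logistic map in $a$: $\sigma(\log\frac{a}{1-a}+\eta T)$.

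Next we perturb. Move mass $\epsilon/2$ from the other coordinates into coordinate $j$, keeping them positive. This is possible since the remaining mass $1-a\ge 1/2\ge \epsilon$ and the $\ell_1$ norm of the perturbation is $\epsilon$. The perturbed start has $a'=a+\epsilon/2$. Because the objective only sees coordinate $j$ versus the rest, $\|w_T-w_T'\|_1=2|w_T(j)-w_T'(j)|$. Write $\phi(a)=\frac{aK}{aK+1-a}$ with $K=e^{\eta T}$. We then need the lower bound
\[
\phi(a+\epsilon/2)-\phi(a)\ \ge\ c\min\{1,K\epsilon,\epsilon/a\}.
\]
Since $a\le 1/d\le 1/2$, I will case-split on where $a'K$ sits relative to $1$.

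\emph{Case $a'K\le 1$.} The denominator of $\phi$ is at most $2$ throughout $[a,a']$, so $\phi$ is nearly linear with slope about $K$. The difference is then at least of order $K\epsilon$. The direct computation is
\[
\phi(a')-\phi(a)=\frac{K(a'-a)}{(a'K+1-a')(aK+1-a)}\ \ge\ \frac{K\epsilon/2}{4}.
\]
\emph{Case $a'K\ge1$.} Then $\phi(a')\ge \frac{a'K}{a'K+1}\ge 1/2$. If $aK\le 1/4$, then $\phi(a)\le 1/4$ and the gap is at least $1/4$. If $aK\ge 1/4$, the same formula gives a gap of order $\epsilon/a$ or $1$, after bounding the denominators by $O(aK)$ and $O(a'K)$. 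Specifically, the gap is $\ge K\epsilon/(2\cdot 5aK\cdot 5a'K)$, which is of order $\epsilon/(a a' K)$. This is the main obstacle. The bound scales like $\epsilon/(a^2K)$, which can be smaller than $\min\{1,K\epsilon,\epsilon/a\}$ when $aK\gg1$, i.e.\ when the reference trajectory has already saturated.

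To fix this I would choose the step more cleverly. Use a scaled linear objective $g(j)=-s$ with $s\in[0,1]$, picked so that the effective amplification $K_s=e^{s\eta T}$ satisfies $aK_s\approx 1$, which keeps us in the steep part of the logistic. Taking $K_s=\min\{K,1/a\}$ gives slope about $K_s$. When $\epsilon\lesssim a$ the gap is then of order $K_s\epsilon=\min\{K\epsilon,\epsilon/a\}$. When $\epsilon\gtrsim a$, the perturbation brings $a'\approx\epsilon$ with $a'K_s\gtrsim 1$ while $aK_s\le 1$, and the gap is a constant, capped by $1$. Tracking the constants to $1/5$, with the factor $2$ from the $\ell_1$ norm, is routine.

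For the second form of the bound, compute $\initbeta$ for the $\ell_\infty$ dual norm: $\nabla R=1+\log w$, so the supremum is attained by shrinking the smallest coordinate. This gives $\epsilon\initbeta=\Theta(\min\{\epsilon/w_0^{\min},\log\cdot\})$, comparable to $\epsilon/w_0^{\min}$ up to the $\min$ with $1$. The final claim follows because $w_0^{\min}\le 1/d\le\epsilon$ forces $\epsilon/w_0^{\min}\ge1$.
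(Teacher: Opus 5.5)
Your final construction is exactly the paper's argument: scale the linear loss on the smallest coordinate so the effective multiplier is $\min\{e^{\eta T},1/w_0^{\min}\}$, move mass $\varepsilon/2$ into that coordinate, and read off the gap from the closed-form logistic expression. The only difference is that the paper also caps the multiplier at $1/\varepsilon$, so both denominators are bounded by $2$ and $5/2$. Your uncapped version gives the same constant directly, since the exact formula yields $\|w_T-w_T'\|_1\ge K_s\varepsilon/(4+K_s\varepsilon)\ge\tfrac15\min\{1,K_s\varepsilon\}$, and this one bound also replaces your somewhat loose case split for $\varepsilon\gtrsim w_0^{\min}$.
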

Specifically for the canonical stepsize $\eta = \Theta(\sqrt{\log d/T})$ the regime where MD achieves its $O(1/\sqrt{T})$ optimization guarantee with $d \geq 1/\epsilon$, the bound is still exponential:  
\[
    \delta_A(w_0, \varepsilon)  
    \geq 
    \tfrac{1}{5}
    \min\Bigl\{
    1,\,
    e^{\Theta(\sqrt{T\log d })} \varepsilon
    \Bigr\}.
\]

We remark that it was previously shown that \emph{a single step} of negative-entropy MD can be non-contractive, from a specific, near-boundary initialization~\citep{asi2021private}. 
In contrast, here
we show that such expansion can persist for \(T\) steps, yielding a lower bound
that grows exponentially with \(T\), and moreover, in the high-dimensional regime,
our bound holds uniformly over all initializations in the open simplex \(\simp\).

The proof uses the multiplicative-weights form of entropy MD. We perturb a lowest-mass coordinate of \(w_0\), where a small primal change creates the
largest logarithmic change in the dual variables of negative entropy. This produces two initializations that are close in \(\ell_1\) distance but far in the local dual geometry, with dual separation measured by \(\varepsilon\initbeta\). We then choose a linear objective that rewards this coordinate, so that the multiplicative update amplifies this initial dual separation over time.
Thus, the exponential amplification in the lower bound is driven by the local ill-conditioning of negative entropy in \(\ell_1\) geometry. In high dimension, every point in the simplex has a coordinate of mass at most \(1/d\), so this ill-conditioning is present for every initialization. More generally, \cref{lem:condition_l1} shows that such poor conditioning is unavoidable for regularizers compatible with the \(\ell_1\) geometry.

\begin{proof}[Proof of \cref{thm:kl-lower}]
Let $
i_m\in\arg\min_{i\in[d]}\w_0(i)$ and $\w_0(i_m)=w_0^{\min}$. Since $d\ge 2$, we have $w_0^{\min}\le 1/2$, and therefore $1-w_0^{\min}\ge \frac12> \frac{\varepsilon}{2}$.
Thus we may move mass $\varepsilon/2$ from the coordinates $i\neq i_m$ to coordinate $i_m$. For example, define $p\in\mathbb{R}^d$ by
\[
p(i_m)=\frac{\varepsilon}{2},
\qquad
p(i)= -\frac{\varepsilon}{2}\cdot \frac{\w_0(i)}{1-w_0^{\min}}
\quad\text{for } i\neq i_m.
\]
Then $\sum_{i=1}^d p(i)=0$, and since $\varepsilon/2< 1-w_0^{\min}$, we have $\w_0(i)+p(i)> 0$ for every
$i\neq i_m$. Therefore $\w_0':=\w_0+p\in\simp$.
Moreover,
\[
\|\w_0-\w_0'\|_1
=
\|p\|_1
=
\frac{\varepsilon}{2}
+
\sum_{i\neq i_m}\frac{\varepsilon}{2}\cdot\frac{\w_0(i)}{1-w_0^{\min}}
=
\varepsilon.
\]

Now define
\(
K:=
\min\left\{
e^{\eta T},
\frac{1}{w_0^{\min}},
\frac{1}{\varepsilon}
\right\}.
\)
Let $F$ be the linear function $F(\w)=\langle g,\w\rangle$, where
\[
g(i)=
\begin{cases}
-\dfrac{\log K}{\eta T}, & i=i_m,\\[1em]
0, & i\neq i_m.
\end{cases}
\]
Since $K\le e^{\eta T}$, we have $0\le \frac{\log K}{\eta T}\le 1$.
Hence $\|g\|_\infty\le 1$, so $F$ is $1$-Lipschitz with respect to
$\|\cdot\|_1$.

For negative entropy mirror descent with a fixed linear loss, the update has
the multiplicative-weights form
\[
\w_T(i)
=
\frac{\w_0(i)e^{-\eta T g(i)}}{
\sum_{j=1}^d \w_0(j)e^{-\eta T g(j)}
}.
\]
Thus coordinate $i_m$ is multiplied by $K$, while all other coordinates are
multiplied by $1$. Therefore, after $T$ steps 
\[
w_T(i_m)
=
\frac{Kw_0(i_m)}{1+(K-1)w_0(i_m)},
\qquad 
w_T'(i_m)
=
\frac{Kw_0'(i_m)}{1+(K-1)w_0'(i_m)}.
\]
Hence
\[
\|\w_T-\w_T'\|_1
\ge
2\left|w_T(i_m) - w+T'(i_m)\right| = 2\left| \frac{Kw_0(i_m)}{1+(K-1)w_0(i_m)} + \frac{Kw_0'(i_m)}{1+(K-1)w_0'(i_m)}\right|.
\]
Plugging in $w_0(i_m) = w_0^{\min}$ and $w_0'(i_m) = w_0^{\min} + \frac{\varepsilon}{2}$ gives by direct computation
\[
\|\w_T-\w_T'\|_1
\ge
\frac{K\varepsilon}{
\left(1+(K-1)(w_0^{\min}+\varepsilon/2)\right)
\left(1+(K-1)w_0^{\min}\right)
}.
\]

We now bound the denominator. Since $K\le 1/w_0^{\min}$,
\[
1+(K-1)w_0^{\min}
\le
1+Kw_0^{\min}
\le
2.
\]
Also, since $K\le 1/w_0^{\min}$ and $K\le 1/\varepsilon$,
\[
1+(K-1)\left(w_0^{\min}+\frac{\varepsilon}{2}\right)
\le
1+K w_0^{\min}+\frac{K\varepsilon}{2}
\le
1+1+\frac12
=
\frac52.
\]
Thus
\[
\|\w_T-\w_T'\|_1
\ge
\frac{K\varepsilon}{5} = \frac{1}{5}
\min\left\{
1,\,
\varepsilon e^{\eta T},\,
\frac{\varepsilon}{w_0^{\min}}
\right\}.
\]
as claimed. It remains to relate the term \(\varepsilon/w_0^{\min}\) to the local
smoothness parameter \(\initbeta\). For negative entropy,
\(
    \nabla R(w)(i)=1+\log w(i),
\)
and therefore
\[
    \varepsilon \initbeta
    =
    \sup_{\substack{p:\,\|p\|_1\le \varepsilon,\\ w_0+p\in\simp}}
    \left\|
        \log(w_0+p)-\log(w_0)
    \right\|_\infty .
\]
If \(\varepsilon\ge 2w_0^{\min}\), then the perturbation ball can approach the
boundary of the simplex, and hence \(\initbeta=\infty\). In this case,
\[
    \min\left\{1,\frac{\varepsilon}{w_0^{\min}}\right\}
    =
    1
    =
    \min\{1,\varepsilon\initbeta\},
\]
so the desired inequality holds. If \(\varepsilon<2w_0^{\min}\),
\[
    \varepsilon\initbeta
    =
    \log\left(
        \frac{w_0^{\min}}{w_0^{\min}-\varepsilon/2}
    \right) = \log\left(
        \frac{1}{1-\frac{\epsilon}{2w_0^{\min}}}
    \right).
\]
If \(\frac{\epsilon}{2w_0^{\min}}\ge 1/2\), then
\[
    \min\left\{1,\frac{\varepsilon}{w_0^{\min}}\right\}
    =
    1
    \ge
    \frac12 \min\{1,\varepsilon\initbeta\}.
\]
If \(\frac{\epsilon}{2w_0^{\min}}<1/2\), then using
\[
    \log\left(\frac{1}{1-x}\right)
    \le
    \frac{x}{1-x}
    \le
    2x, \quad 0\leq x \leq \frac12,
\]
we obtain
\[
    \frac12 \min\{1,\varepsilon\initbeta\}
    \le
    \frac12 \varepsilon\initbeta
    \le
    \frac{\epsilon}{2w_0^{\min}}
    \le
    \frac{\epsilon}{w_0^{\min}}
    =
    \min\left\{1,\frac{\varepsilon}{w_0^{\min}}\right\}.
\]
Combining the cases gives
\[
    \min\left\{1,\frac{\varepsilon}{w_0^{\min}}\right\}
    \ge
    \frac12
    \min\{1,\varepsilon\initbeta\},
\]
which concludes the proof.
\end{proof}
For completeness, we also state a matching exponential upper bound for the initialization instability of entropy-regularized mirror descent.
\begin{theorem}\label{thm:kl-upper-bound}
Let \(T\ge0\), \(\eta>0\), and \(\varepsilon>0\). For any \(G\)-Lipschitz and
\(L\)-smooth objective \(F\) with respect to the \(\ell_1\) norm, if \(A\) is
MD with negative entropy regularization applied to \(F\) for \(T\) steps with
step size \(\eta\), then for any initialization point \(w_0\in\simp\),
\[
    \delta_A(w_0,\varepsilon)
    \le
    \min\left\{2, e^{(2G+4L)\eta T} \varepsilon \right\}.
\]
\end{theorem}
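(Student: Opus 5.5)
The plan is to prove a one-step Lipschitz bound for the entropic MD map in $\ell_1$ and then iterate it; the bound $2$ is immediate since $\operatorname{diam}_{\ell_1}(\simp)=2$.

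\textbf{Setup.} Write one MD step on $\simp$ in multiplicative-weights form,
\[
    \Phi(w)(i)=\frac{w(i)e^{-\eta g_i(w)}}{Z(w)},\qquad Z(w)=\sum_j w(j)e^{-\eta g_j(w)},\qquad g=\nabla F.
\]
Since $\nabla F$ on the simplex is only determined up to adding a multiple of $\mathbf 1$, and such a shift does not change $\Phi$, I will fix a representative with $\|g(w)\|_\infty\le G$. The $L$-smoothness with respect to $\ell_1$ gives $\|g(w)-g(w')\|_\infty\le L\|w-w'\|_1$; in differential form, $|\langle\nabla g_i(w),\delta\rangle|\le L\|\delta\|_1$. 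The goal is
\[
    \|\Phi(w)-\Phi(w')\|_1\le e^{(2G+4L)\eta}\|w-w'\|_1 .
\]
Then $\|w_T-w_T'\|_1\le e^{(2G+4L)\eta T}\|w_0-w_0'\|_1\le e^{(2G+4L)\eta T}\varepsilon$, which together with the diameter bound gives the claim.

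\textbf{Lipschitz bound via the Jacobian.} Because $\simp$ is convex, it suffices to bound $\|D\Phi(w)\delta\|_1$ for tangent directions $\delta$ with $\sum_i\delta_i=0$, and then integrate along the segment from $w$ to $w'$. Set $e_i=e^{-\eta g_i(w)}$ and $w_+=\Phi(w)$. Differentiating gives
\[
    D\Phi(w)\delta(i)=\frac{\delta_ie_i}{Z}-\eta w_+(i)\langle\nabla g_i,\delta\rangle-w_+(i)\,\frac{dZ}{Z},
\]
where
\[
    dZ=\sum_j\delta_je_j-\eta\sum_j w(j)e_j\langle\nabla g_j,\delta\rangle .
\]
I will bound the resulting terms as follows.

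\emph{Primal rescaling term.} Since $Z\ge e^{-\eta G}$ and $e_i\le e^{\eta G}$, we have $\sum_i|\delta_i|e_i/Z\le e^{2\eta G}\|\delta\|_1$.

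\emph{Normalization, first part.} Because $\sum_j\delta_j=0$, I can subtract a constant $c$ chosen at the midpoint of the range of the $e_j$. This gives $|\sum_j\delta_j(e_j-c)|/Z\le(e^{2\eta G}-1)\|\delta\|_1$, up to a factor I will track carefully. It is then multiplied by $\sum_i w_+(i)=1$.

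\emph{Gradient-variation terms.} Each is at most $\eta L\|\delta\|_1$. The direct term contributes $\sum_i w_+(i)\,\eta L\|\delta\|_1$. The part of $dZ/Z$ coming from the gradient variation is a $w_+$-weighted average, so it also contributes $\eta L\|\delta\|_1$.

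Summing, the Jacobian norm is at most $2e^{2\eta G}-1+2\eta L$, possibly with an extra factor $e^{2\eta G}$ on the $L$-terms, which I can absorb. The elementary inequalities $2e^{x}-1\le e^{2x}$ and $1+y\le e^{y}$ then give a bound of the form $e^{(2G+4L)\eta}$. The slack $4L$ rather than $2L$ is there to absorb these cross factors.

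\textbf{Main obstacle.} The delicate step is the normalization term $dZ/Z$. Bounded crudely, it adds a full extra factor $e^{2\eta G}$, which would give a constant like $2e^{2\eta G}$ that does not tend to $1$ as $\eta\to0$. That would ruin the exponential-in-$\eta T$ form. The fix is to use $\sum_j\delta_j=0$ to center the weights $e_j$, so that this term is $O(\eta G)\|\delta\|_1$ rather than $O(1)\|\delta\|_1$. I would first try the centering constant $c=e^{-\eta g_{k}}$ for a fixed index $k$. I would switch to the midpoint of $[\min_je_j,\max_je_j]$ if the constants do not fit within $2G+4L$.
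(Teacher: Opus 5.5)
Your architecture matches the paper's: a one-step $\ell_1$-Lipschitz bound for the multiplicative-weights map, split into a fixed-weight part and a gradient-variation part, then iteration plus the diameter cap. The paper does the split with finite differences, $\|\Psi_{g_t}(w_t)-\Psi_{g_t}(w_t')\|_1+\|\Psi_{g_t}(w_t')-\Psi_{g_t'}(w_t')\|_1$. Your gradient-variation estimate $2\eta L\|\delta\|_1$ is fine.

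The gap is in the fixed-weight part. Bounding the primal-rescaling term and the normalization term separately cannot produce the coefficient $2G$. Write $a_i=e^{-\eta g_i(w)}$ (your $e_i$). The primal term $\sum_i|\delta_i|a_i/Z$ can be made arbitrarily close to $e^{2\eta G}\|\delta\|_1$: take $d\ge 3$, $w$ near the vertex of smallest weight, and $\delta$ supported on two coordinates of largest weight. The normalization term is not identically zero. So the sum of separate bounds strictly exceeds $e^{2\eta G}$, which is the entire per-step budget when $L=0$.

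Concretely, your $2e^{2\eta G}-1+2\eta L$ is not $\le e^{(2G+4L)\eta}$. Applying $2e^x-1\le e^{2x}$ with $x=2\eta G$ gives $e^{4\eta G}$. Midpoint centering only improves the fixed-weight bound to $\tfrac32e^{2\eta G}-\tfrac12\approx 1+3\eta G$. Slack in the $L$-coefficient cannot absorb excess in the $G$-coefficient, because $G$ and $L$ are independent (e.g.\ linear objectives have $L=0$). As written you prove $e^{(4G+2L)\eta T}\varepsilon$, or $e^{(3G+2L)\eta T}\varepsilon$ with midpoint centering. This does not imply the statement, at least when $G>2L$.

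The missing idea is that the two pieces cannot be large simultaneously: when the primal term is maximal, $S=\sum_j a_j\delta_j=0$. So bound them jointly. Let $u_i=a_i\delta_i/Z-w_+(i)S/Z$ and $w_+(A)=\sum_{i\in A}w_+(i)$. Since $\sum_i u_i=0$, we have $\|u\|_1=2\max_{A\subseteq[d]}\sum_{i\in A}u_i$. With $p=w_+(A)$,
\[
\sum_{i\in A}u_i=\frac1Z\Bigl[(1-p)\sum_{i\in A}a_i\delta_i+p\sum_{i\notin A}a_i(-\delta_i)\Bigr]\le\frac{\max_i a_i}{Z}\cdot\frac{\|\delta\|_1}{2},
\]
because each inner sum is at most $\max_i a_i\sum_i\delta_i^{\pm}=\max_i a_i\cdot\|\delta\|_1/2$. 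Since $Z\ge\min_i a_i$, this gives $\|u\|_1\le(\max_i a_i/\min_i a_i)\|\delta\|_1\le e^{2\eta G}\|\delta\|_1$. This is the infinitesimal version of \cref{lem:stability-reweighting}(1), which the paper proves for finite differences via an overlap argument. Your per-step factor then becomes $e^{2\eta G}+2\eta L\le e^{(2G+2L)\eta}$, which suffices.

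One minor point: $F$ is only $C^{1,1}$, so $\nabla g_i$ need not exist along your segment. Either differentiate the Lipschitz map $t\mapsto\Phi(w+t\delta)$ almost everywhere in $t$ instead, or argue with finite differences as the paper does.
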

The proof, deferred to \cref{sec:KL-upper-proof}, relies only on the
multiplicative-weights form of entropy MD and the Lipschitzness and smoothness
of \(F\), and in particular applies to non-convex objectives satisfying
these regularity assumptions.

We also give an analogous extension for MD with Legendre regularizers (see \cref{def:legendre} in 
\cref{sec:addition-upper}), a class that is central in online optimization and RL~\citep{cesa2006prediction}.

\section{Stabilizing Mirror Descent Through Anchoring}
\label{sec:stabilizing}
In this section, we discuss how to mitigate the instability of MD demonstrated in \cref{sec:euclidean,sec:KL}. Our approach is to modify MD by adding a Bregman regularization term centered at a reference point, denoted as \emph{the anchor}, rather than applying MD directly to the original objective. We propose two variants of such anchored methods and show that the resulting stability guarantees depend crucially on the choice of the reference point.

\subsection{Anchoring at the Initialization} 

We first consider the case where the anchor is the initialization itself, which, as discussed in the introduction, is closely related to common regularization schemes used in real-world applications. In particular, we study \emph{Initialization-Anchored MD}, described in \cref{alg:anchored-MD-init}.

\begin{algorithm}[t] 
\caption{Initialization-Anchored Mirror Descent}
\label{alg:anchored-MD-init}
\begin{algorithmic}
\State \textbf{Input:} \(L\)-smooth function \(F\), initialization \(\w_0\in\dom\), regularization parameter \(\mu\), no.\ of steps \(T\).
\For{\(t \gets 0\) \textbf{to} \(T-1\)}
    \[
    \w_{t+1} \gets 
    \arg\min_{\w \in \dom}
    \left\{
    \left\langle
        \nabla F(\w_t)
        +
        \mu\bigl(\nabla R(\w_t)-\nabla R(\w_0)\bigr),
        \w-\w_t
    \right\rangle
    +
    (\mu+L)D_R(\w,\w_t)
    \right\}.
    \]
\EndFor
\end{algorithmic}
\end{algorithm}

The following theorem shows that \cref{alg:anchored-MD-init} achieves low initialization stability while preserving the optimization guarantees of vanilla MD up to logarithmic factors. In particular, rather than exhibiting exponential dependence on \(\varepsilon\), the stability bound consists of two terms that are controlled in the well-conditioned case, in the well-conditioned case, where \(R\) is both strongly convex and
smooth: one measuring the sensitivity of the anchor to perturbations in the initialization, and another that decays with the number of iterations.

\begin{theorem}
\label{thm:stabilizing-alg-init}
Assume \(T\ge2\), \(L>0\), and \(F\) is convex, \(G\)-Lipschitz, and
\(L\)-smooth with respect to \(\|\cdot\|\) on a convex domain \(\dom\) of
diameter \(D\). Assume also that \(R\) is \(1\)-strongly convex with respect to
\(\|\cdot\|\). Let \(A\) be \cref{alg:anchored-MD-init} run with
\(
    \mu = {8L\log (T)}/{T}.
\)
Then, for any initialization \(w_0\in\dom\),
\[
    \delta_A(w_0,\varepsilon)
    \le \epsilon\initbeta  
    +
    \sqrt{\frac{GD}{LT\log T}}.
\]
If, in addition, \(R\) is \(\beta\)-smooth with respect to \(\|\cdot\|\), %
\[
    \delta_A(w_0,\varepsilon)
    \le
    \beta \epsilon
    +
    \sqrt{\frac{GD}{LT\log T}}.
\]
Moreover, if \(w^\star\in\arg\min_{w\in\dom}F(w)\), then the output
\(\w_T=A(w_0)\) satisfies
\[
    F(\w_T)-F(w^\star)
    \le
    \frac{8L D_R(w^\star,w_0) \log T}{T}
    +
    \frac{2GD}{T^2}.
\]
\end{theorem}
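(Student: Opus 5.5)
The plan is to view \cref{alg:anchored-MD-init} as plain mirror descent, with step size \(1/(\mu+L)\), applied to the regularized objective
\[
F_\mu(w) := F(w) + \mu D_R(w, w_0).
\]
Indeed, \(\nabla F_\mu(w_t) = \nabla F(w_t) + \mu(\nabla R(w_t) - \nabla R(w_0))\).

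In the relative-smoothness framework of Lu--Freund--Nesterov, \(F_\mu\) has two properties with respect to \(R\):
\begin{itemize}
\item It is \(\mu\)-strongly convex relative to \(R\). This holds because \(F\) is convex and \(D_R(\cdot,w_0)\) equals \(R\) up to an affine term.
\item It is \((L+\mu)\)-smooth relative to \(R\). This holds because \(F\) is \(L\)-smooth and \(R\) is \(1\)-strongly convex, so \(F \le\) its linearization \(+ \tfrac{L}{2}\|\cdot\|^2 \le\) its linearization \(+ L D_R\).
\end{itemize}
Let \(w^\star_\mu\) denote the unique minimizer of \(F_\mu\). I would invoke (or reprove via the standard three-point inequality) the linear convergence of relatively smooth and strongly convex MD:
\[
D_R(w^\star_\mu, w_T) \le \Bigl(1 - \tfrac{\mu}{\mu+L}\Bigr)^T D_R(w^\star_\mu, w_0)
\]
and
\[
F_\mu(w_T) - F_\mu(w^\star_\mu) \le \tfrac{\mu\, D_R(w^\star_\mu, w_0)}{(1+\mu/L)^T - 1}.
\]
With \(\mu = 8L\log T/T\) and \(T \ge 2\), the contraction factor is at most \(e^{-\mu T/(2L)} \le T^{-4}\), up to checking constants.

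\textbf{Stability.} I would use the decomposition
\[
\|w_T - w_T'\| \le \|w_T - w^\star_\mu\| + \|w^\star_\mu - w'^\star_\mu\| + \|w_T' - w'^\star_\mu\|,
\]
where primes refer to the run started at \(w_0' = w_0 + p\).

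For the middle term, I write the first-order optimality conditions of \(w^\star_\mu\) and \(w'^\star_\mu\), test each against the other minimizer, and add them. Monotonicity of \(\nabla F\) removes the \(F\)-terms. Strong convexity of \(R\) then gives
\[
\|w^\star_\mu - w'^\star_\mu\|^2 \le \langle \nabla R(w_0) - \nabla R(w_0'),\, w^\star_\mu - w'^\star_\mu\rangle,
\]
hence \(\|w^\star_\mu - w'^\star_\mu\| \le \|\nabla R(w_0) - \nabla R(w_0')\|_* \le \varepsilon\initbeta\). This becomes \(\beta\varepsilon\) when \(R\) is \(\beta\)-smooth.

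For the outer terms, I bound \(D_R(w^\star_\mu, w_0)\) by comparing \(F_\mu(w^\star_\mu) \le F_\mu(w_0) = F(w_0)\). This gives \(\mu D_R(w^\star_\mu, w_0) \le F(w_0) - F(w^\star_\mu) \le GD\), so \(D_R(w^\star_\mu, w_0) \le GDT/(8L\log T)\). Strong convexity then yields
\[
\|w_T - w^\star_\mu\| \le \sqrt{2 D_R(w^\star_\mu, w_T)} \le \sqrt{2T^{-4}\, GDT/(8L\log T)}.
\]
Doubling this for both runs is comfortably below \(\sqrt{GD/(LT\log T)}\).

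\textbf{Optimization.} I would write
\[
F(w_T) - F(w^\star) \le F_\mu(w_T) - F_\mu(w^\star_\mu) + F_\mu(w^\star_\mu) - F(w^\star),
\]
using \(F \le F_\mu\). Since \(F_\mu(w^\star_\mu) \le F_\mu(w^\star) = F(w^\star) + \mu D_R(w^\star, w_0)\), the second part is at most \(8L\log T\, D_R(w^\star, w_0)/T\). The first part is at most roughly \(\mu T^{-4} D_R(w^\star_\mu, w_0) \cdot O(1) \le O(GD\,T^{-4})\), which fits under \(2GD/T^2\). Alternatively one can go through relative smoothness: \((L+\mu)\, D_R(w^\star_\mu, w_T)\), after a descent-lemma step, gives at most \((L+\mu)T^{-4}\, GD/\mu \le GD/T^2\) up to constants.

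\textbf{Main obstacle.} I expect the main obstacle to be the linear-convergence step for this particular update with the explicit constants. This requires checking carefully that the relative smoothness constant is \(L+\mu\), rather than something larger. It also requires verifying that \((1 - \mu/(\mu+L))^T \le T^{-4}\) holds uniformly for \(T \ge 2\). When \(\mu\) is comparable to \(L\) (small \(T\)) this needs a direct check, since \(\mu/(\mu+L) \ge \mu/(2L)\) only when \(\mu \le L\).

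If the clean \(T^{-4}\) fails for small \(T\), I would absorb it via the cruder bound \(\|w_T - w^\star_\mu\| \le D\) together with a case split on \(T\).
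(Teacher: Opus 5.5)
Your proposal is correct and follows the paper's proof essentially step for step. The shared steps are: MD on $F(w)+\mu D_R(w,w_0)$ with step $1/(L+\mu)$, relative $(L+\mu)$-smoothness and $\mu$-strong convexity fed into the Lu--Freund--Nesterov rate, the three-term triangle decomposition with the minimizer-movement bound from monotonicity of $\nabla F$, the estimate $\mu D_R(w^\star_\mu,w_0)\le GD$, and the same split of the optimization error.

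On your flagged obstacle, the $T^{-4}$ factor is false for small $T$ (at $T=2$, $(1+4\log 2)^{-2}\approx 0.07>1/16$), but you only need $(1+8\log T/T)^{T}\ge T^{2}$. This holds for all $T\ge 2$, using $T\log(1+x/T)\ge x/(1+x/T)$ and $\log T/T\le 1/e<3/8$. It gives exactly $2\sqrt{2T^{-2}\cdot GDT/(8L\log T)}=\sqrt{GD/(LT\log T)}$ and $GD/(T^2-1)\le 2GD/T^2$. So no case split is needed, and the crude bound $\|w_T-w^\star_\mu\|\le D$ would not have matched the target anyway.
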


The proof, deferred to \cref{sec:stabilizing-proof}, builds on the relative smoothness framework of \citet{bauschke2017descent,lu2018relatively,attia2022uniform}. The update rule can be viewed as Mirror Descent applied to the anchored objective
\(
F(w)+\mu D_R(w,w_0),
\)
which is \(\mu\)-strongly convex relative to \(R\). Consequently, each trajectory contracts toward the minimizer of its corresponding anchored objective.
To compare two runs initialized at \(w_0\) and \(w_0+p\), we decompose their final distance into three terms: the distance of each trajectory to the minimizer of its own anchored objective, and the distance between the two anchored minimizers. The first and third terms decay with \(T\), while the second captures the discrepancy induced by the different anchors. This term is controlled by
\(
\varepsilon\initbeta.
\)
Therefore, when \(R\) is additionally \(\beta\)-smooth, the discrepancy is bounded by \(\beta\varepsilon\), yielding a meaningful stabilization guarantee.

\cref{thm:stabilizing-alg-init}
shows that anchoring MD at the initialization resolves the instability
problem in the well-conditioned case, where \(R\) is both strongly convex and
smooth. However, it does not resolve the ill-conditioned case. If \(R\) is not
smooth, the quantity
\(
    \epsilon\initbeta
\)
can be large even when \(\|p\|\) is small. %
The following theorem shows that this dependence on \(\varepsilon\initbeta\) is unavoidable for Initialization-Anchored MD. In particular, for negative entropy the method can remain exponentially unstable in precisely the ill-conditioned regimes where \(\varepsilon\initbeta\) is large. The proof is deferred to \cref{sec:ill-conditioned-anchor}.
\begin{theorem}[Lower bound for initialization-anchored entropy MD]
\label{thm:anchored-kl-lower}
Let \(d\ge 2\), \(\varepsilon\in(0,1/2]\), \(L>0\), \(\mu>0\), and
\(T\ge 1\). Let $R$ be the negative entropy on \(\simp\). Let \(A\) be \cref{alg:anchored-MD-init} run with this
regularizer over \(\simp\). Then, for every initialization
\(\w_0\in\simp\), there exists a \(1\)-Lipschitz linear
objective \(F\), which may depend on \(\w_0,\varepsilon,L,\mu,T\), such that,
\[
    \delta_A(\w_0,\varepsilon)
    \;\ge
    \frac{1}{10}
    \min\left\{
    1,\,
    \varepsilon
    \exp\left(
        (1-e^{-1})
        \min\left\{
        \frac{T}{L+\mu},
        \frac{1}{\mu}
        \right\}
    \right),
    \epsilon\initbeta
    \right\}.
\]
If additionally $d \geq 1/\epsilon$ then,
\[
    \delta_A(\w_0,\varepsilon)
    \ge
    \frac{1}{5}
    \min\left\{
    1,\,
    \varepsilon
    \exp\left(
        (1-e^{-1})
        \min\left\{
        \frac{T}{L+\mu},
        \frac{1}{\mu}
        \right\}
    \right)
    \right\}.
\]
\end{theorem}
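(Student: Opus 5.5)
The plan is to reduce \cref{alg:anchored-MD-init} with negative entropy and a linear objective to a closed-form multiplicative-weights update. The resulting effective ``time'' \(S\) will not depend on the initialization, so the construction from the proof of \cref{thm:kl-lower} can be reused with \(\eta T\) replaced by \(S\).

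\textbf{Step 1: closed form in dual coordinates.} Let \(F(w)=\langle g,w\rangle\). The first-order optimality condition of the update on \(\simp\), with a Lagrange multiplier for \(\sum_i w(i)=1\), gives
\[
\log w_{t+1}=\log w_t-\frac{g+\mu(\log w_t-\log w_0)}{L+\mu}+c_t\mathbf 1 ,
\]
where the constants from \(\nabla R=1+\log w\) are absorbed into the scalar \(c_t\). Set \(\theta_t:=\log w_t-\log w_0\) and \(q:=1-\mu/(L+\mu)\in(0,1)\). Then
\[
\theta_{t+1}=q\,\theta_t-\frac{g}{L+\mu}+c_t\mathbf 1,\qquad \theta_0=0 .
\]
Unrolling this recursion yields \(\theta_T=-S\,g+c\mathbf 1\), where
\[
S:=\frac{1}{L+\mu}\sum_{k=0}^{T-1}q^k=\frac{1-q^T}{\mu}.
\]
Normalizing, we obtain \(w_T(i)\propto w_0(i)e^{-S g(i)}\). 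The key point is that \(S\) is the same for every initialization, because the anchor moves together with \(w_0\). Hence both runs (from \(w_0\) and from \(w_0+p\)) are exactly multiplicative weights with total step \(S\).

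\textbf{Step 2: lower-bound \(S\).} Write \(x:=T\mu/(L+\mu)\), so that \(q^T\le e^{-x}\).
\begin{itemize}
\item If \(x\le 1\), concavity of \(1-e^{-x}\) gives \(1-e^{-x}\ge(1-e^{-1})x\), hence \(S\ge(1-e^{-1})T/(L+\mu)\).
\item If \(x>1\), then directly \(S\ge(1-e^{-1})/\mu\).
\end{itemize}
In both cases
\[
S\ge S_0:=(1-e^{-1})\min\left\{\frac{T}{L+\mu},\frac{1}{\mu}\right\}.
\]

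\textbf{Step 3: repeat the construction of \cref{thm:kl-lower}.} Use the same perturbation \(p\): mass \(\varepsilon/2\) is moved into the minimal coordinate \(i_m\), proportionally from the other coordinates, so that \(\|p\|_1=\varepsilon\). Set
\[
K:=\min\{e^{S_0},1/w_0^{\min},1/\varepsilon\},\qquad g(i_m):=-\frac{\log K}{S},\qquad g(i)=0 \text{ for } i\ne i_m .
\]
Since \(\log K\le S_0\le S\), we have \(\|g\|_\infty\le1\), so \(F\) is \(1\)-Lipschitz in \(\ell_1\). Coordinate \(i_m\) is then multiplied by exactly \(K\), and the same algebra as before applies. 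In particular the bounds \(1+(K-1)w_0^{\min}\le 2\) and \(1+(K-1)(w_0^{\min}+\varepsilon/2)\le5/2\) give
\[
\|w_T-w_T'\|_1\ge \frac{K\varepsilon}{5}=\frac15\min\{1,\varepsilon e^{S_0},\varepsilon/w_0^{\min}\}.
\]

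\textbf{Step 4: conclude both bounds.}
\begin{itemize}
\item For the general bound, apply the case analysis at the end of the proof of \cref{thm:kl-lower}, which shows \(\min\{1,\varepsilon/w_0^{\min}\}\ge\frac12\min\{1,\varepsilon\initbeta\}\). This costs a factor \(1/2\) and yields the constant \(1/10\).
\item For the high-dimensional bound, \(d\ge1/\varepsilon\) implies \(w_0^{\min}\le1/d\le\varepsilon\). Then \(\varepsilon/w_0^{\min}\ge1\), so that term drops out and the constant \(1/5\) remains.
\end{itemize}

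The main obstacle is Step 1. One must verify that the anchoring term, written in \(\theta\) coordinates, produces a pure geometric damping \(q\) that is identical for both runs, and that the simplex normalization constants \(c_t\) cancel. The latter holds because the update is projectively invariant: additive constants in \(\log w\) disappear after normalization. Everything after that is inherited from \cref{thm:kl-lower}.
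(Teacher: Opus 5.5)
Your proposal is correct and follows the paper's proof essentially step for step. It uses the same perturbation and the same closed-form multiplicative-weights reduction, in which the effective step $S=\frac{1}{\mu}\bigl[1-(L/(L+\mu))^T\bigr]$ is identical for both runs. It also has the same $K\varepsilon/5$ computation, the same lower bound $S\ge(1-e^{-1})\min\{T/(L+\mu),\,1/\mu\}$, and the same reuse of the $\varepsilon\beta(w_0,\varepsilon)$ comparison from \cref{thm:kl-lower}. The differences are cosmetic: you build $K$ from $e^{S_0}$ rather than $e^{S}$. You also skip the check that every iterate stays in the open simplex, which justifies the Lagrange condition holding with equality. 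The paper verifies this explicitly, and it also follows from the Gibbs form of the per-step minimizer.
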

Notably, the optimization-relevant regime is when \(\mu\) is small enough that the anchoring term does not dominate the original objective; in particular, for vanishing optimization error one typically takes $\mu$ which decreases with $T$. For example, when $\mu = 1/\sqrt{T}$, the bound become \(\Omega(1,\, \varepsilon e^{\Omega(\sqrt{T})})\).

\subsection{Anchoring at a Fixed Point}

The theorem above shows that anchoring at the initialization does not preclude exponentially-increasing initialization stability in the ill-conditioned initialization case. To address this, we introduce a second variant in which the anchor is fixed independently of the initialization.

\begin{algorithm}[t] 
\caption{Fixed-Anchor Mirror Descent}
\label{alg:anchored-MD-fixed}
\begin{algorithmic}
\State \textbf{Input:} Anchor \(w_a\in\dom\), \(L\)-smooth function \(F\), initialization \(\w_0\in\dom\), regularization parameter \(\mu\), number of steps \(T\).
\For{\(t \gets 0\) \textbf{to} \(T-1\)}
    \[
    \w_{t+1} \gets 
    \arg\min_{\w \in \dom}
    \left\{
    \left\langle
        \nabla F(\w_t)
        +
        \mu\bigl(\nabla R(\w_t)-\nabla R(w_a)\bigr),
        \w-\w_t
    \right\rangle
    +
    (\mu+L)D_R(\w,\w_t)
    \right\}.
    \]
\EndFor
\end{algorithmic}
\end{algorithm}

In particular, we study \emph{Fixed-Anchor MD}, described in \cref{alg:anchored-MD-fixed}, where the additional Bregman regularization term is centered at a fixed reference point \(w_a\). 
The following theorem shows that, in contrast to \cref{alg:anchored-MD-init}, 
\cref{alg:anchored-MD-fixed} is stable even in the ill-conditioned setting.

\begin{theorem}
\label{thm:stabilizing-alg-fixed}
Assume \(T\ge2\), \(L>0\), and \(F\) is convex, \(G\)-Lipschitz, and
\(L\)-smooth with respect to \(\|\cdot\|\) on a convex domain \(\dom\). Assume also that \(R\) is \(1\)-strongly convex with respect to
\(\|\cdot\|\). Let \(A\) be \cref{alg:anchored-MD-fixed} run with
\(
    \mu = {8L\log (T)}/{T}.
\)
Then, for any initialization \(w_0\in\dom\),
\[
    \delta_A(w_0,\varepsilon)
    \le
    \frac{2}{T}\sqrt{
        D_R(w^\star_\mu,w_0)
        +
        D_R(w^\star_\mu,w_0')
    }
    ,
\]
where \(w_0'\) denotes the perturbed initialization and
\(
    w^\star_\mu \in \arg\min_{w\in\dom}
    \left\{
        F(w)+\mu D_R(w,w_a)
    \right\}.
\)
Moreover, if \(w^\star\in\arg\min_{w\in\dom}F(w)\), then the output
\(\w_T=A(w_0)\) satisfies
\[
    F(\w_T)-F(w^\star)
    \le
    \frac{8L D_R(\w^\star_{\mu},\w_0) \log T}
    {T(T^2 - 1)}
    +
    \frac{8L  D_R(\w^\star,w_a) \log T}{T}.
\]
\end{theorem}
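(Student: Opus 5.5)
Both claims of \cref{thm:stabilizing-alg-fixed} follow from viewing \cref{alg:anchored-MD-fixed} as mirror descent on the anchored objective $F_\mu(w):=F(w)+\mu D_R(w,w_a)$. Since $\nabla F_\mu(w)=\nabla F(w)+\mu(\nabla R(w)-\nabla R(w_a))$, each step is exactly MD on $F_\mu$ with step size $1/(\mu+L)$. We will use three facts:
\begin{itemize}
\item $F_\mu$ is $(L+\mu)$-smooth relative to $R$, because $F$ is $L$-smooth and $R$ is $1$-strongly convex, so $F$ is $L$-smooth relative to $R$.
\item $F_\mu$ is $\mu$-strongly convex relative to $R$, because $F$ is convex and $D_R(\cdot,w_a)$ differs from $R$ by an affine function.
\item The minimizer $w^\star_\mu$ of $F_\mu$ does not depend on the initialization. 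This is the key difference from \cref{alg:anchored-MD-init}, whose two runs would have different minimizers.
\end{itemize}

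\textbf{Step 1: linear convergence in Bregman divergence.} We invoke the standard relatively-smooth/relatively-strongly-convex analysis \citep{bauschke2017descent,lu2018relatively}. The three-point lemma applied to the update, combined with the relative smoothness upper bound and the relative strong convexity lower bound evaluated at $w^\star_\mu$, gives
\[
D_R(w^\star_\mu,w_{t+1})\le\Bigl(1-\tfrac{\mu}{\mu+L}\Bigr)D_R(w^\star_\mu,w_t)-\tfrac{1}{\mu+L}\bigl(F_\mu(w_{t+1})-F_\mu(w^\star_\mu)\bigr).
\]
Iterating $T$ times yields both
\[
D_R(w^\star_\mu,w_T)\le(1-\tfrac{\mu}{\mu+L})^T D_R(w^\star_\mu,w_0),\qquad F_\mu(w_T)-F_\mu(w^\star_\mu)\le \mu(1-\tfrac{\mu}{\mu+L})^{T-1}D_R(w^\star_\mu,w_0),
\]
where the second bound uses monotonicity of $F_\mu(w_t)$. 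With $\mu=8L\log T/T$ and $\mu\le L$ (the case $\mu>L$ is handled similarly), the contraction factor is at most $1-\mu/(2L)$. Hence $(1-\tfrac{\mu}{\mu+L})^T\le e^{-4\log T}=T^{-4}$.

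\textbf{Step 2: stability.} Run the algorithm from $w_0$ and from $w_0'$. By $1$-strong convexity, $\|w_T-w^\star_\mu\|\le\sqrt{2D_R(w^\star_\mu,w_T)}\le\sqrt2\,T^{-2}\sqrt{D_R(w^\star_\mu,w_0)}$, and the same bound holds for $w_T'$ with $w_0'$ in place of $w_0$. The triangle inequality together with $\sqrt a+\sqrt b\le\sqrt{2(a+b)}$ then gives
\[
\|w_T-w_T'\|\le \frac{2}{T^2}\sqrt{D_R(w^\star_\mu,w_0)+D_R(w^\star_\mu,w_0')},
\]
which is stronger than the claimed $2/T$ factor. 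There is no $\varepsilon\initbeta$ term, because both runs converge to the same $w^\star_\mu$.

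\textbf{Step 3: optimization.} Decompose
\[
F(w_T)-F(w^\star)\le\bigl[F_\mu(w_T)-F_\mu(w^\star_\mu)\bigr]+\bigl[F_\mu(w^\star_\mu)-F(w^\star)\bigr].
\]
The second bracket is at most $\mu D_R(w^\star,w_a)=8L D_R(w^\star,w_a)\log T/T$, since $F_\mu(w^\star_\mu)\le F_\mu(w^\star)$ and $D_R(w_T,w_a)\ge0$. The first bracket is at most $\mu T^{-3}D_R(w^\star_\mu,w_0)$ up to constant adjustments, which gives the stated $\frac{8LD_R(w^\star_\mu,w_0)\log T}{T(T^2-1)}$ term after tracking the exponent.

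\textbf{Main obstacle.} The delicate part is the one-step inequality in Step 1. It requires care with the three-point identity, with verifying relative smoothness of $\mu D_R(\cdot,w_a)$ (it is exactly $\mu$-relatively smooth), and with the constants that convert $(1-\mu/(\mu+L))^T$ into the stated polynomial factors in $T$.
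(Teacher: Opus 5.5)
Your strategy is the paper's. You treat \cref{alg:anchored-MD-fixed} as MD with step $1/(L+\mu)$ on $F_\mu=F+\mu D_R(\cdot,w_a)$, use relative $(L+\mu)$-smoothness and relative $\mu$-strong convexity to contract both runs toward the \emph{common} minimizer $w^\star_\mu$, and finish with strong convexity plus the triangle inequality. For optimization you use the same decomposition $F(w_T)-F(w^\star)\le[F_\mu(w_T)-F_\mu(w^\star_\mu)]+\mu D_R(w^\star,w_a)$, and your one-step inequality is correct. The problems are in the quantitative bookkeeping.

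\textbf{First error: the $T^{-4}$ rate.} The claim $(1-\frac{\mu}{\mu+L})^T\le T^{-4}$ is false for small $T$. At $T=2$ you have $\mu/L=4\log 2$, and $(1+4\log 2)^{-2}\approx 0.070>1/16$. It fails for every $2\le T\le 5$. So the regime $\mu>L$ (which is $2\le T\le 26$) is not ``handled similarly''. Your claimed improvement to a $2/T^2$ factor is not established, and your argument does not cover these $T$ even for the theorem's $2/T$ factor. The theorem only needs $T^{-2}$, and this holds for every $T\ge 2$. Use the exact identity $1-\frac{\mu}{\mu+L}=(1+\mu/L)^{-1}$ together with $(1+\tfrac{8\log T}{T})^T\ge T^2$. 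The latter follows from $\log(1+x)\ge x/(1+x)$ and $8\log T/T\le 8/e<3$. This is the inequality the paper relies on for the $2/T$ factor.

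\textbf{Second error: the function-gap bound.} The bound $\mu(1-\frac{\mu}{\mu+L})^{T-1}D_R(w^\star_\mu,w_0)$ does not follow from your recursion, and it is false in general. Take $R=\frac12\|\cdot\|_2^2$, $T=1$, and $F_\mu$ with curvature $(L+\mu)/3$ along $w_0-w^\star_\mu$. One step then leaves a gap of $\frac{4}{27}(L+\mu)D_R(w^\star_\mu,w_0)$, which exceeds $\mu D_R(w^\star_\mu,w_0)$ whenever $\mu<4L/23$.

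What your recursion actually gives is the bound of \citet{lu2018relatively} that the paper invokes. Multiply the $t$-th inequality by $(1+\mu/L)^{t+1}$, telescope, and use monotonicity of $F_\mu(w_t)$ to get $F_\mu(w_T)-F_\mu(w^\star_\mu)\le \mu D_R(w^\star_\mu,w_0)/\bigl((1+\mu/L)^T-1\bigr)$. Plugging in $(1+\mu/L)^T\ge T^2$ gives exactly the stated term $\frac{8L\log T\,D_R(w^\star_\mu,w_0)}{T(T^2-1)}$, with no ``constant adjustments'' or exponent tracking. In the theorem's regime your form happens to be implied by this one, since there $(1-\frac{\mu}{\mu+L})^T\le\frac{\mu}{\mu+L}$, but that must be checked rather than asserted. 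With these two repairs your argument coincides with the paper's proof.
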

The proof, deferred to \cref{sec:stabilizing-proof}, follows a similar approach to that of \cref{thm:stabilizing-alg-init} and relies on the fact that adding the regularization term \(\mu D_R(w,w_a)\) makes the objective \(\mu\)-strongly convex relative to \(R\).

The key difference is that the anchor remains fixed under perturbations of the initialization. Consequently, two nearby initializations are regularized toward the same reference point and optimize the same anchored objective. As a result, both trajectories contract toward the same minimizer, rather than toward different minimizers as in the initialization-anchored setting. This allows us to leverage the exponential convergence of the regularized dynamics to obtain the desired stability guarantee.

Notably, the analysis shows that stability emerges once both trajectories become sufficiently close to the minimizer, and therefore the resulting bound does not depend explicitly on the perturbation size \(\varepsilon\). An interesting open question is whether one can obtain stability guarantees with explicit dependence on \(\varepsilon\), showing that \(\varepsilon\)-close initializations generate trajectories that remain close throughout the optimization process.

\section{Discussion and Limitations}
\label{sec:disc}

In this work, we study the initialization stability of mirror descent under
general norms and regularizers. Focusing on convex and \(L\)-smooth objectives,
we prove exponential lower bounds in two settings: well-conditioned
nonquadratic regularizers in Euclidean geometry, and the canonical KL
regularizer on the simplex. These results establish an exponential separation
between the stability of GD, or more generally quadratic MD, and the broader MD
paradigm. Notably, our constructions are convex, so the instability already
arises in the classical setting for which MD was originally developed. Somewhat 
surprisingly, for KL regularization, the phenomenon appears even for fixed linear 
objectives and even when MD is initialized near the center of the simplex.

\paragraph{Open questions.}
Our primary focus here is initialization stability, and a natural next step is to
investigate its connection to algorithmic stability and generalization. In
particular, it would be interesting to understand whether sensitivity to
initialization translates into sensitivity to perturbations of the training
sample, potentially linking the dynamical phenomena studied here with
generalization guarantees for (S)GD in non-smooth settings
\citep{amir2021sgd,schliserman22a,livni2024sample,vansover2025rapid}. This
question is especially intriguing because initialization stability concerns
perturbations of the primal iterates, whereas algorithmic stability is typically
driven by perturbations that enter through gradients, or dual variables. These
notions coincide in the quadratic case, but they may differ substantially for
general mirror descent dynamics.

Our findings suggest several additional directions for future research:
\begin{itemize}[leftmargin=!]
    \item 
    The lower bounds established in this work are stated for worst-case
    initialization perturbations. A natural direction is to study average-case
    notions of initialization stability, for example by considering
    perturbations sampled uniformly at random from an \(\varepsilon\)-ball and
    determining whether exponential instability still occurs.

    \item 
    Another interesting question is whether the lower bound of
    \cref{thm:kl-lower} extends to other widely used simplex regularizers, such
    as the log-barrier and Tsallis regularization. This is motivated in part by
    the recent work of \citet{schlisselberg2025hidden} in online linear
    optimization, which studies the sensitivity of Mirror Descent to
    approximation errors from inexact updates and shows a qualitative
    separation between KL regularization and other regularizers. It would be
    interesting to determine whether a similar separation appears for
    initialization stability.

    \item 
    It would also be interesting to study the stability effects of
    clipping-based regularization. While many post-training methods for LLMs use
    KL regularization within MD-like updates, recent works increasingly rely on
    implicit regularization induced by clipping
    \citep{yu2025dapo,rastogi2025magistral,khatri2025art,chen2025minimax}.
    Understanding whether such methods exhibit similar instability phenomena is
    an important direction for future work.
\end{itemize}

\paragraph{Limitations.}
Our results are worst-case lower bounds, and the initialization perturbations
are chosen adversarially. This perspective is useful because it rules out
general initialization-stability upper bounds for MD under the assumptions
considered here. At the same time, understanding which additional assumptions
capture the empirical success of MD-like methods remains an important direction
for future work.

\section*{Acknowledgments}

This project has received funding from the European Research Council (ERC) under the European
Union's Horizon 2020 research and innovation program (grant agreement No.\ 101078075). Views and
opinions expressed are however those of the author(s) only and do not necessarily reflect those of the
European Union or the European Research Council. Neither the European Union nor the granting
authority can be held responsible for them. This work received additional support from the Israel
Science Foundation (ISF; grant numbers 2549/19 and 3174/23), from the Council for Higher Education in Israel under a Moonshot Project, and a fellowship from the Tel Aviv University Center for AI and Data Science (TAD).
SVH is partially supported by the TAD Excellence Program for Doctoral Students in Artificial Intelligence and Data Science of the Tel Aviv University Center for AI and Data Science (TAD).
OS is supported by the European Research Council (ERC) under the European Union’s Horizon 2020 research and innovation program (grant agreement No. 882396), by the Israel Science Foundation, by a grant from the Tel Aviv University Center for AI and Data Science (TAD), by the TAD Excellence Program for Doctoral Students in Artificial Intelligence and Data Science from the Tel Aviv University Center for AI and Data Science (TAD) and from the Israeli Council for Higher Education (CHE) Fellowship for Outstanding PhD Students in Data Science.

\bibliographystyle{plainnat}
\bibliography{ref}

\newpage
\appendix
\crefalias{section}{appendix}
\section{Additional Upper Bounds}\label{sec:addition-upper}
\subsection{Quadratic Regularizer}
In this section we assume the regularizer is a quadratic function:
\begin{align*}
    R(w) = \frac12 w^\top Aw + \langle b,w\rangle
\end{align*}

The primal norm is then $\norm{\cdot}_A$.

\begin{lemma}\label{lem:remove_grad}
Let $A,B$ be symmetric matrices such that $0\preceq B\preceq 2A$. Then, for every $w\in\mathbb{R}^d$,
\begin{align*}
    \norm{(A - B)w}_{A^{-1}} \le \norm{Aw}_{A^{-1}}
\end{align*}
\end{lemma}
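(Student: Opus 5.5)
Our plan is to reduce everything to a statement about the symmetric matrix $A^{-1/2}BA^{-1/2}$. We work under the implicit assumption that $A\succ 0$, which is needed for $\norm{\cdot}_{A^{-1}}$ to be a norm and which holds since $R$ is strongly convex. The quantity $\norm{u}_{A^{-1}}^2=u^\top A^{-1}u$ is then well defined.

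First, set $y := A^{1/2}w$ and $C := A^{-1/2}BA^{-1/2}$. The right-hand side becomes
\[
\norm{Aw}_{A^{-1}}^2 = w^\top A A^{-1} A w = w^\top A w = \norm{y}_2^2 .
\]
For the left-hand side, write $(A-B)w = A^{1/2}(I-C)A^{1/2}w$, which gives
\[
\norm{(A-B)w}_{A^{-1}}^2 = w^\top A^{1/2}(I-C)A^{1/2}A^{-1}A^{1/2}(I-C)A^{1/2}w = \norm{(I-C)y}_2^2 .
\]
So the claim reduces to showing $\norm{(I-C)y}_2\le\norm{y}_2$ for all $y$.

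Second, congruence by $A^{-1/2}$ preserves the Loewner order. Hence the hypothesis $0\preceq B\preceq 2A$ is equivalent to $0\preceq C\preceq 2I$. The matrix $C$ is symmetric, so its eigenvalues $\lambda_i$ lie in $[0,2]$. The eigenvalues of the symmetric matrix $I-C$ are $1-\lambda_i\in[-1,1]$, and therefore its spectral norm is at most $1$. This gives $\norm{(I-C)y}_2\le\norm{y}_2$ directly.

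Nothing here is a real obstacle. The only point needing care is the change of variables: one must check that the $A^{-1}$-norm of $A$ applied to a vector is exactly the Euclidean norm after the $A^{1/2}$ substitution, and that the two-sided Loewner bound is what controls both signs of $1-\lambda_i$.
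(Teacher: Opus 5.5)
Your proof is correct and is essentially the paper's argument. The paper stays in the original coordinates, deriving $BA^{-1}B\preceq 2B$ from $A^{-1/2}BA^{-1/2}\preceq 2I$ and then expanding $(A-B)A^{-1}(A-B)=A-2B+BA^{-1}B\preceq A$; this is exactly your bound $(I-C)^2\preceq I$ conjugated back by $A^{1/2}$, and your explicit assumption $A\succ 0$ is one the paper also needs but leaves implicit.
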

\begin{proof}
Since $0\preceq B\preceq 2A$, we have
$A^{-1/2}BA^{-1/2}\preceq 2I$, and therefore
$BA^{-1}B\preceq 2B$. Thus
\begin{align*}
    (A-B)A^{-1}(A-B) &= A - 2B + BA^{-1}B\\
    &\preceq A-2B + 2B\\
    &= A
\end{align*}

Which means:
\begin{align*}
    \norm{(A - B)w}_{A^{-1}}^2 &= w^T(A-B)A^{-1}(A-B)w\\
    &\le w^TAw\\
    &= w^TAA^{-1}Aw\\
    &= \norm{Aw}_{A^{-1}}^2
\end{align*}
\end{proof}

\begin{lemma}
For every $w,w'$:
\begin{align*}
    \inprod{\nabla R(w) - \nabla R(w'),\, w - w'} = \norm{w - w'}_A^2
\end{align*}
\end{lemma}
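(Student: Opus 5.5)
The identity follows from computing $\nabla R$ directly; no earlier result is needed. For the quadratic regularizer $R(w)=\frac12 w^\top A w+\langle b,w\rangle$, the gradient is $\nabla R(w)=\tfrac12(A+A^\top)w+b$. Throughout this subsection $A$ is taken to be symmetric positive definite; this is implicit in the statement that the primal norm is $\norm{\cdot}_A$ and in \cref{lem:remove_grad}. Under that convention $\nabla R(w)=Aw+b$.

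For any $w,w'$, the affine term $b$ cancels in the difference of gradients, which gives $\nabla R(w)-\nabla R(w')=A(w-w')$. Pairing this with $w-w'$ yields
\[
    \inprod{\nabla R(w)-\nabla R(w'),\,w-w'}=(w-w')^\top A(w-w').
\]
By the definition $\norm{u}_A^2=u^\top A u$ of the Mahalanobis norm, the right-hand side is exactly $\norm{w-w'}_A^2$, which is the claim.

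The only point that needs care is the symmetry of $A$. If $A$ were not symmetric, one would replace it by its symmetric part $\tfrac12(A+A^\top)$. This changes neither $R$ nor the quadratic form $u^\top A u$, so the identity still holds with $\norm{\cdot}_A$ understood through the symmetric part. Beyond this, there is no real obstacle.
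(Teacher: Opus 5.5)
Your proof is correct and takes the same approach as the paper: both note that $\nabla R(w)-\nabla R(w')=A(w-w')$, since the linear term $b$ cancels, and then identify $\inprod{A(w-w'),\,w-w'}$ with $\norm{w-w'}_A^2$. Your remark about replacing $A$ by its symmetric part is correct; it covers a point the paper leaves implicit.
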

\begin{proof}
\begin{align*}
    \inprod{\nabla R(w) - \nabla R(w'),\, w - w'} &= \inprod{A(w - w'),\, w - w'} = \norm{w - w'}_A^2
\end{align*}
\end{proof}
First we prove the stability w.r.t. the Mahalanobis norm $\|\cdot \|_A$ in the following Theorem. 
\begin{theorem}\label{lem:contractiveness_quadratic}
Assume $F$ is $L$ smooth w.r.t $\norm{\cdot}_A$ and $\eta \le \frac{2}{L}$.
For every $t$:
\begin{align*}
    \norm{w_{t+1} - w_{t+1}'}_A \le \norm{w_t - w_t'}_A.
\end{align*}
\end{theorem}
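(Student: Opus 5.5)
The plan is to write the quadratic MD update in closed form and view the difference of two runs as a linear map applied to the previous difference; \cref{lem:remove_grad} then shows this map is nonexpansive. For now we treat the unconstrained case $\dom=\R^d$. Since $\nabla R(w)=Aw+b$, the update $\nabla R(w_{t+1})=\nabla R(w_t)-\eta\nabla F(w_t)$ reads
\[
A w_{t+1} = A w_t - \eta \nabla F(w_t),
\]
and similarly for the primed sequence. Subtracting, with $\Delta_t := w_t-w_t'$, gives
\[
A\Delta_{t+1} = A\Delta_t - \eta\bigl(\nabla F(w_t)-\nabla F(w_t')\bigr).
\]

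Next, express the gradient difference through an averaged Hessian. By the fundamental theorem of calculus, $\nabla F(w_t)-\nabla F(w_t') = H\Delta_t$ with
\[
H := \int_0^1 \nabla^2 F\bigl(w_t' + s\Delta_t\bigr)\,ds .
\]
Convexity of $F$ gives $H\succeq 0$. Smoothness of $F$ with respect to $\|\cdot\|_A$, whose dual norm is $\|\cdot\|_{A^{-1}}$, gives $\nabla^2F\preceq LA$ pointwise, hence $H\preceq LA$. If $F$ is not twice differentiable, I would approximate it by mollification, or instead argue directly via co-coercivity, as described below.

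Now set $B:=\eta H$. Then $0\preceq B\preceq \eta L A\preceq 2A$ because $\eta\le 2/L$. So \cref{lem:remove_grad} applies and yields
\[
\|A\Delta_{t+1}\|_{A^{-1}} = \|(A-B)\Delta_t\|_{A^{-1}} \le \|A\Delta_t\|_{A^{-1}} .
\]
Finally, $\|Ax\|_{A^{-1}}^2 = x^\top A A^{-1} A x = \|x\|_A^2$, which gives $\|\Delta_{t+1}\|_A\le\|\Delta_t\|_A$.

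The main obstacle is the constrained case, where $\dom\neq\R^d$ and the update includes a projection. Here the step is $w_{t+1}=\Pi^A_{\dom}\bigl(w_t-\eta A^{-1}\nabla F(w_t)\bigr)$, the Bregman projection in the $A$-norm, which is nonexpansive in $\|\cdot\|_A$. So it suffices to show that the unprojected map $w\mapsto w-\eta A^{-1}\nabla F(w)$ is nonexpansive in $\|\cdot\|_A$.

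For that unprojected map I would avoid Hessians and use co-coercivity of convex $L$-smooth functions in the dual pair $(\|\cdot\|_A,\|\cdot\|_{A^{-1}})$:
\[
\langle g-g',\Delta\rangle \ge \tfrac1L\|g-g'\|_{A^{-1}}^2 .
\]
Expanding the square then gives
\[
\|\Delta - \eta A^{-1}(g-g')\|_A^2 = \|\Delta\|_A^2 - 2\eta\langle g-g',\Delta\rangle + \eta^2\|g-g'\|_{A^{-1}}^2 \le \|\Delta\|_A^2 - \eta\Bigl(\tfrac{2}{L}-\eta\Bigr)\|g-g'\|_{A^{-1}}^2 \le \|\Delta\|_A^2 ,
\]
again using $\eta\le 2/L$. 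This argument covers both the constrained and unconstrained settings uniformly.
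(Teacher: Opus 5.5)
Your proof is correct and is essentially the paper's argument. The paper handles the constraint by adding the two first-order optimality conditions and applying H\"older; this is an inline proof that the $A$-norm projection is nonexpansive, and it yields $\|w_{t+1}-w_{t+1}'\|_A\le\|(A-\eta B_t)(w_t-w_t')\|_{A^{-1}}$, after which the paper finishes with the averaged Hessian and \cref{lem:remove_grad} exactly as in your unconstrained step. Your co-coercivity variant is an equivalent substitute for \cref{lem:remove_grad}: $0\preceq B_t\preceq LA$ gives $B_tA^{-1}B_t\preceq LB_t$, which is co-coercivity along the segment $[w_t',w_t]\subseteq\dom$, and this also covers the fact that $F$ is only defined on $\dom$, where the textbook co-coercivity proof, which evaluates $F$ off the segment, does not directly apply.
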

\begin{proof}
From first order optimality conditions:
\begin{align*}
    \inprod{\eta \nabla F(w_t) + \nabla R(w_{t+1}) - \nabla R(w_t), w_{t+1}' -  w_{t+1}} &\ge 0 \\
    \inprod{\eta \nabla F(w_t') + \nabla R(w_{t+1}') - \nabla R(w_t'), w_{t+1} -  w_{t+1}'} &\ge 0
\end{align*}

Summarizing them:
\begin{align*}
    & \inprod{\nabla R(w_{t}) - \eta \nabla F(w_t) - \nabla R(w_{t}') + \eta \nabla F(w_t'),\, w_{t+1} - w_{t+1}'} \ge\\
    & \inprod{\nabla R(w_{t+1}) - \nabla R(w_{t+1}'), w_{t+1} - w_{t+1}'} = \norm{w_{t+1} - w_{t+1}'}_A^2
\end{align*}

Using Holder:
\begin{align*}
    \norm{w_{t+1} - w_{t+1}'}_A^2 &\le \norm{\nabla R(w_{t}) - \eta \nabla F(w_t) - \nabla R(w_{t}') + \eta \nabla F(w'_t)}_{A^{-1}}\norm{w_{t+1} - w_{t+1}'}_A\\
    \implies \norm{w_{t+1} - w_{t+1}'}_A &\le \norm{\nabla R(w_{t}) - \eta \nabla F(w_t) - \nabla R(w_{t}') + \eta \nabla F(w'_t)}_{A^{-1}}
\end{align*}

Let
\begin{align*}
    B_t:=\int_0^1 \nabla^2 F\!\left(w_t' + s(w_t-w_t')\right)\,ds .
\end{align*}
Then $\nabla F(w_t)-\nabla F(w_t')=B_t(w_t-w_t')$.

Notice that since $F$ is convex and $L$-smooth w.r.t $\norm{\cdot}_A$, $0\preceq B_t \preceq LA$. Since $\eta \le \frac{2}{L}$, we have $0\preceq \eta B_t\preceq 2A$. Thus:
\begin{align*}
     \norm{w_{t+1} - w_{t+1}'}_A &\le \norm{(A - \eta B_t)(w_t - w_t')}_{A^{-1}} \le \norm{A(w_t - w_t')}_{A^{-1}} = \norm{w_t - w_t'}_A
\end{align*}
The last inequality is due to \Cref{lem:remove_grad}.
\end{proof}
Finally we will extend the previous Theorem to a general norm:
\begin{theorem}\label{thm:quad-upper}
Assume $F$ is $L$-smooth w.r.t some $\norm{\cdot}$, $R$ is $\alpha$-strongly convex and $\beta$-smooth w.r.t $\norm{\cdot}$ and $\eta \le \frac{2\alpha}{L}$.
For every $t$:
\begin{align*}
    \norm{w_t - w_t'} \le \frac{\beta}{\alpha}\norm{w_0 - w_0'}
\end{align*}
\end{theorem}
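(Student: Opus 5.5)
The plan is to reduce to \cref{lem:contractiveness_quadratic} by comparing the general norm $\|\cdot\|$ with the Mahalanobis norm $\|\cdot\|_A$ induced by the quadratic regularizer $R(w)=\frac12 w^\top A w+\langle b,w\rangle$. Since $\nabla^2 R\equiv A$, the assumption that $R$ is $\alpha$-strongly convex and $\beta$-smooth with respect to $\|\cdot\|$ translates into the two-sided comparison
\[
    \alpha\|u\|^2 \le \|u\|_A^2 = u^\top A u \le \beta\|u\|^2
    \qquad\text{for all } u\in\R^d .
\]
The lower bound comes from strong convexity, via $D_R(w+u,w)=\frac12\|u\|_A^2\ge\frac{\alpha}{2}\|u\|^2$. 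The upper bound comes from smoothness, via $\|u\|_A^2=\langle Au,u\rangle\le\|Au\|_*\|u\|\le\beta\|u\|^2$.

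The next step is to transfer the smoothness of $F$ to the $A$-norm. Since $F$ is convex and $L$-smooth with respect to $\|\cdot\|$, its Hessian (or the averaged Hessian $B_t$ used in the proof of \cref{lem:contractiveness_quadratic}) satisfies
\[
    u^\top \nabla^2F\, u \le L\|u\|^2 \le \frac{L}{\alpha}\|u\|_A^2 ,
\]
so that $0\preceq \nabla^2 F\preceq \frac{L}{\alpha}A$. In other words, $F$ is $(L/\alpha)$-smooth with respect to $\|\cdot\|_A$. This is exactly what the proof of \cref{lem:contractiveness_quadratic} uses: it only needs $0\preceq B_t\preceq L'A$ with $L' = L/\alpha$. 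The step-size condition $\eta\le 2\alpha/L = 2/L'$ then matches that lemma's hypothesis.

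Applying \cref{lem:contractiveness_quadratic} with $L'$ and iterating over $t$ gives $\|w_t-w_t'\|_A\le\|w_0-w_0'\|_A$ for all $t$. Combining this with the norm comparison yields
\[
    \sqrt{\alpha}\,\|w_t-w_t'\| \le \|w_t-w_t'\|_A \le \|w_0-w_0'\|_A \le \sqrt{\beta}\,\|w_0-w_0'\| ,
\]
that is, $\|w_t-w_t'\|\le\sqrt{\beta/\alpha}\,\|w_0-w_0'\|$. Since $\beta\ge\alpha$, we have $\sqrt{\beta/\alpha}\le\beta/\alpha$, which gives the stated bound.

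The main obstacle I expect is a point of care rather than a real difficulty. Passing from $L$-smoothness in a general norm, stated via the dual norm of gradient differences, to the Loewner bound $B_t\preceq L'A$ requires $F$ to be twice differentiable, or else an integral or averaging argument applied to $\langle\nabla F(w)-\nabla F(w'),w-w'\rangle\le L\|w-w'\|^2$. If $F$ is not twice differentiable, I would rerun the proof of \cref{lem:contractiveness_quadratic} directly using co-coercivity in the $A$-geometry, $\|\nabla F(w)-\nabla F(w')\|_{A^{-1}}^2\le L'\langle\nabla F(w)-\nabla F(w'),w-w'\rangle$. Expanding $\|A(w-w')-\eta(\nabla F(w)-\nabla F(w'))\|_{A^{-1}}^2$ then gives nonexpansiveness for $\eta\le 2/L'$ without needing Hessians.
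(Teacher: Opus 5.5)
Your proposal is correct and follows the paper's proof: you note that $F$ is $(L/\alpha)$-smooth with respect to $\|\cdot\|_A$, invoke the Mahalanobis contraction result (\cref{lem:contractiveness_quadratic}) with $\eta \le 2\alpha/L$, and convert back to $\|\cdot\|$ via $\alpha\|u\|^2 \le \|u\|_A^2 \le \beta\|u\|^2$. Your explicit norm comparison gives the slightly sharper factor $\sqrt{\beta/\alpha}$, which the paper's one-line conversion loosens to $\beta/\alpha$.
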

\begin{proof}[Proof of \cref{thm:quad-upper}]
Notice that $F$ is $L/\alpha$-smooth w.r.t $\norm{\cdot}_A$. Thus, from \Cref{lem:contractiveness_quadratic}:
\begin{align*}
    \norm{w_t - w_t'}_A &\le \norm{w_0 - w_0'}_A\\
    \implies \alpha \norm{w_t - w_t'} &\le \beta\norm{w_0 - w_0'},
\end{align*}
which concludes the proof.
\end{proof}

\subsection{Legendre Regularizers}

In this section we assume the decision set $\dom$ is the intersection between some convex set $\mathcal{K}$ and a linear equality constraint $\mathcal C= \{ w \in \mathbb{R}^d \colon \; Aw = b\}$ for some $A,b$. Additionally, we assume that the regularizer is a Legendre similarly to the definition in \citep[Chapter 11.2]{cesa2006prediction}.
\begin{definition}[Legendre function]\label{def:legendre}
We call \emph{Legendre} any function $F : \mathcal{K} \to \mathbb{R}$ such that

\begin{enumerate}
    \item $\mathcal{K} \subseteq \mathbb{R}^d$ is nonempty and its interior
    $\operatorname{int}(\mathcal{K})$ is convex;

    \item $F$ is strictly convex with continuous first partial derivatives
    throughout $\operatorname{int}(\mathcal{K})$;

    \item if $\mathbf{x}_1, \mathbf{x}_2, \ldots \in \mathcal{K}$ is a sequence
    converging to a boundary point of $\mathcal{K}$, then
    \[
    \|\nabla F(\mathbf{x}_n)\| \to \infty
    \qquad \text{as } n \to \infty .
    \]
\end{enumerate}
\end{definition}
Intuitively this definition makes sure that the minimizer of the MD objective will always be inside $\mathcal{K}$, and we only need to project to the linear constraint. The MD step is thus:
\begin{align*}
    w_{t+1} = \arg \min_{w \in \mathcal{C} \cap \mathcal{K}}\left\{ \eta \inprod{w,\, \nabla F(w_t)} + D_R(w, w_t)\right\}.
\end{align*}

This is a natural extension of the simplex with many popular regularizers (e.g., negative entropy, log barrier and Tsallis entropy) with $\mathcal{K} = \{ w\in  \mathbb{R}^d |\; w(i) \ge 0 \quad \forall i\}$ and $\mathcal C = \{w \in \mathbb{R}| \|w\|_1=1\}$.

We also denote $Ker(A) = \{w \in \mathbb{R}^d | Aw = 0\}$. We now state and prove a few useful Lemmas before proving an upper bound on the stability of MD in this setting.

\begin{lemma}\label{lem:legendre_int}
For every Legendre function $\Phi\colon \; \mathcal{K} \to \mathbb{R}$, every minimizer of \(\Phi\) over \(\mathcal K\) lies in
\(\operatorname{int}\mathcal K\).
\end{lemma}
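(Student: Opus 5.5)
We argue by contradiction, combining convexity of \(\operatorname{int}\mathcal K\), strict convexity and differentiability of \(\Phi\) there, and the blow-up of \(\|\nabla\Phi\|\) at the boundary (\cref{def:legendre}). Suppose \(x^\star\in\mathcal K\) minimizes \(\Phi\) over \(\mathcal K\) but \(x^\star\notin\operatorname{int}\mathcal K\). Then \(x^\star\) is a boundary point of \(\mathcal K\).

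Fix any \(y\in\operatorname{int}\mathcal K\); it exists because \(\mathcal K\) is nonempty and we work in the setting where the interior is nonempty. Consider the segment \(x_s=x^\star+s(y-x^\star)\) for \(s\in(0,1]\). We first check that \(x_s\in\operatorname{int}\mathcal K\) for every \(s\in(0,1]\). For closed \(\mathcal K\) with nonempty interior and convex \(\operatorname{int}\mathcal K\), we have \(\mathcal K\subseteq\overline{\operatorname{int}\mathcal K}\). Pick interior points \(x^{(n)}\to x^\star\). A small ball around \(y\) lies in \(\operatorname{int}\mathcal K\). Convexity of the interior then gives that, for \(n\) large, the points \(x^{(n)}+s(y'-x^{(n)})\) cover a neighborhood of \(x_s\) as \(y'\) ranges over that ball. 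This is the standard "segment from a closure point to an interior point" argument.

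Now define \(g(s)=\Phi(x_s)\) on \((0,1]\). It is convex and differentiable, with \(g'(s)=\langle\nabla\Phi(x_s),y-x^\star\rangle\). Minimality of \(x^\star\) gives \(g(s)\ge g(0)=\Phi(x^\star)\). Together with convexity of \(\Phi\) on \(\mathcal K\), this makes \(g\) nondecreasing in a suitable sense. In fact \(g'(s)\) is nondecreasing in \(s\).

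The key step is to show that \(\|\nabla\Phi(x_s)\|\to\infty\) as \(s\to0\) (property 3) forces \(g'(s)\to-\infty\). That would contradict \(g(s)\ge g(0)\): for \(s\) small, \(g(s)-g(0)\approx\int_0^s g'<0\).

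This is the main obstacle. Property 3 only bounds the \emph{norm} of the gradient, not its component along \(y-x^\star\). To handle it, I would use monotonicity of the gradient. For any \(z\in\operatorname{int}\mathcal K\) near \(y\),
\[
\langle\nabla\Phi(x_s),z-x_s\rangle\le \Phi(z)-\Phi(x_s)\le C ,
\]
since \(\Phi\) is bounded above on a small ball \(B(y,r)\subset\operatorname{int}\mathcal K\) and bounded below by \(\Phi(x^\star)\). Taking \(z=y+r u\) with \(u\) the unit direction of \(\nabla\Phi(x_s)\) (dual norm) gives
\[
\langle\nabla\Phi(x_s),y-x_s\rangle\le C-r\|\nabla\Phi(x_s)\|_*\to-\infty .
\]
Since \(y-x_s=(1-s)(y-x^\star)\), we get \(g'(s)\to-\infty\) as \(s\downarrow0\).

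Then for small \(s_0\), convexity of \(g\) gives \(g(s)\le g(s_0)+\dots\). More cleanly, \(g(s_0)-g(0)\le s_0\,g'(s_0)<0\) once we know \(g(0)\ge g(s_0)-s_0g'(s_0)\). This holds by convexity of \(\Phi\) along the closed segment: the tangent line at \(s_0\) lies below \(g\) at \(0\). So \(\Phi(x_{s_0})<\Phi(x^\star)\), contradicting minimality. Hence every minimizer lies in \(\operatorname{int}\mathcal K\).
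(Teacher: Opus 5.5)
Your argument is correct and is essentially the paper's proof run in the opposite order. The paper combines minimality with the tangent inequality at $w_s$, tested at $w^\star$, to get $\langle\nabla\Phi(w_s),u-w^\star\rangle\ge0$, then uses the ball test around $u$ to bound $\|\nabla\Phi(w_s)\|_*$, contradicting blow-up; you use blow-up plus the same ball test to force $g'(s_0)<0$, then the same tangent inequality to contradict minimality.

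One small correction: $\mathcal K\subseteq\overline{\operatorname{int}\mathcal K}$ does not follow from $\mathcal K$ being closed with convex interior (a closed disk with a protruding segment is a counterexample). It does hold because $\mathcal K$ is convex in the paper's setting, which you already use implicitly when invoking convexity of $\Phi$ along $[x^\star,y]$.
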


\begin{proof}
Since \(\mathcal K\) is compact and \(\Phi\) is continuous,
\(\Phi\) attains its minimum on \(\mathcal K\). Let \(w^\star\in\mathcal K\)
be a minimizer. We show that \(w^\star\notin\partial\mathcal K\).

Assume toward a contradiction that \(w^\star\in\partial\mathcal K\). 

Choose \(u\in\operatorname{int}\mathcal K\), and define
\[
    w_s := (1-s)w^\star+s u,
    \qquad s\in(0,1].
\]
Then \(w_s\in\operatorname{int}\mathcal K\) for every \(s\in(0,1]\), and
\(w_s\to w^\star\) as \(s\downarrow0\).

Since \(u\in\operatorname{int}\mathcal K\), there exists \(\rho>0\) such that
\[
    u+\rho v\in\operatorname{int}\mathcal K
    \qquad\text{for every } \|v\|\le 1 .
\]
Because \(\Phi\) is convex and finite on \(\operatorname{int}\mathcal K\), it is
locally bounded on compact subsets of \(\operatorname{int}\mathcal K\). Thus
\[
    M:=\sup_{\|v\|\le1}\Phi(u+\rho v)-\Phi(w^\star)<\infty .
\]

    By convexity, for every \(z\in\mathcal K\),
\[
    \Phi(z)\ge
    \Phi(w_s)+\langle \nabla\Phi(w_s),z-w_s\rangle .
\]
Taking \(z=u+\rho v\), where \(\|v\|\le1\), and using that \(w^\star\) is a
minimizer, we get
\[
    \langle \nabla\Phi(w_s),u+\rho v-w_s\rangle
    \le
    \Phi(u+\rho v)-\Phi(w_s)
    \le
    \Phi(u+\rho v)-\Phi(w^\star)
    \le M .
\]
Since
\[
    u+\rho v-w_s=(1-s)(u-w^\star)+\rho v,
\]
this gives
\[
    (1-s)\langle \nabla\Phi(w_s),u-w^\star\rangle
    +
    \rho\langle \nabla\Phi(w_s),v\rangle
    \le M .
\]

We now show that the first term is nonnegative. Applying the same convexity
inequality with \(z=w^\star\), we obtain
\[
    \Phi(w^\star)
    \ge
    \Phi(w_s)+\langle \nabla\Phi(w_s),w^\star-w_s\rangle .
\]
Since \(w^\star\) is a minimizer, \(\Phi(w_s)\ge\Phi(w^\star)\). Hence
\[
    \langle \nabla\Phi(w_s),w_s-w^\star\rangle\ge0.
\]
Because \(w_s-w^\star=s(u-w^\star)\), this implies
\[
    \langle \nabla\Phi(w_s),u-w^\star\rangle\ge0.
\]
Therefore, for every \(\|v\|\le1\),
\[
    \rho\langle \nabla\Phi(w_s),v\rangle\le M .
\]
Applying the same inequality to \(-v\), we get
\[
    |\langle \nabla\Phi(w_s),v\rangle|
    \le
    \frac{M}{\rho}
    \qquad\text{for every }\|v\|\le1.
\]
Thus
\[
    \|\nabla\Phi(w_s)\|_*\le \frac{M}{\rho}
    \qquad\text{for every }s\in(0,1].
\]
This contradicts the assumed boundary behavior, since
\(w_s\in\operatorname{int}\mathcal K\) and \(w_s\to w^\star\in\partial\mathcal K\).
Therefore \(w^\star\notin\partial\mathcal K\), and every minimizer lies in
\(\operatorname{int}\mathcal K\).
\end{proof}

\begin{lemma}\label{lem:strong_convexity}
For every strongly convex $R$:
\begin{align*}
    \norm{x-y}^2 \le \inprod{\nabla R(x) - \nabla R(y), x-y}
\end{align*}
\end{lemma}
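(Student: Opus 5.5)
The plan is to obtain the claimed inequality by adding the two first-order strong-convexity inequalities with the roles of \(x\) and \(y\) exchanged. Here ``strongly convex'' is read in the paper's sense with modulus \(\alpha=1\) with respect to \(\|\cdot\|\). This is the normalization used throughout this appendix, and it is exactly the setting of \cref{thm:stabilizing-alg-init,thm:stabilizing-alg-fixed}. For a general modulus \(\alpha\), the same argument gives \(\alpha\|x-y\|^2\) on the left.

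First, I will apply the definition of \(1\)-strong convexity at the base point \(y\):
\[
    R(x)\ \ge\ R(y)+\langle \nabla R(y),x-y\rangle+\tfrac12\|x-y\|^2 .
\]
Next, I will apply it again with \(x\) and \(y\) swapped, that is, at the base point \(x\):
\[
    R(y)\ \ge\ R(x)+\langle \nabla R(x),y-x\rangle+\tfrac12\|x-y\|^2 .
\]

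Adding these two inequalities cancels \(R(x)\) and \(R(y)\) on both sides and leaves
\[
    0\ \ge\ -\langle \nabla R(x)-\nabla R(y),x-y\rangle+\|x-y\|^2 .
\]
Rearranging gives \(\|x-y\|^2\le\langle\nabla R(x)-\nabla R(y),x-y\rangle\), which is the claim.

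The argument needs \(\nabla R\) to be defined at both \(x\) and \(y\). In the Legendre setting of this subsection, the lemma will only be applied at MD iterates, and by \cref{lem:legendre_int} these lie in \(\operatorname{int}\mathcal K\), where \(R\) is differentiable. There is no real obstacle here; the only step requiring care is keeping the strong-convexity modulus consistent with how the lemma is used later.
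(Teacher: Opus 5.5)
Your proof is correct and is essentially the paper's argument. The paper writes $\langle \nabla R(x)-\nabla R(y),x-y\rangle = D_R(x,y)+D_R(y,x)$ and lower-bounds each Bregman term by $\tfrac12\|x-y\|^2$, which is exactly your sum of the two swapped first-order strong-convexity inequalities (and your note that modulus $\alpha$ gives $\alpha\|x-y\|^2$ matches how the lemma is invoked in the proof of \cref{thm:legendre-upper}).
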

\begin{proof}
\begin{align*}
    \inprod{\nabla R(x) - \nabla R(y), x-y} &= \inprod{\nabla R(x), x-y} + \inprod{\nabla R(y), y-x}\\
    &= D_R(x,y) + D_R(y,x)\\
    &\ge \frac{1}{2}\norm{x - y}^2 + \frac{1}{2}\norm{x - y}^2 \tag{strong convexity}\\
    &= \norm{x - y}^2
\end{align*}
\end{proof}

\begin{lemma}\label{lem:seq}
Consider the given sequence, for some $b,c$:
\begin{align*}
    a_t = b\sum_{s=1}^{t-1} a_s + c
\end{align*}
Then:
\begin{align*}
    a_t = c(1+b)^{t-1} \leq ce^{b(t-1)}
\end{align*}
\end{lemma}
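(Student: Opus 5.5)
The statement is $a_t = b\sum_{s=1}^{t-1} a_s + c$ for $t\ge1$ (so $a_1=c$). We must show $a_t = c(1+b)^{t-1}\le c\,e^{b(t-1)}$. I will argue by strong induction on $t$, using the partial sums $S_t:=\sum_{s=1}^{t} a_s$ to turn the recursion into a one-step relation.

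\emph{Base case.} For $t=1$ the sum is empty, so $a_1=c=c(1+b)^0$.

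\emph{One-step relation.} For $t\ge1$, subtract the defining relations at $t+1$ and at $t$:
\[
a_{t+1}-a_t \;=\; b\Bigl(\sum_{s=1}^{t} a_s-\sum_{s=1}^{t-1} a_s\Bigr) \;=\; b\,a_t .
\]
Hence $a_{t+1}=(1+b)a_t$. Iterating from $a_1=c$ gives $a_t=c(1+b)^{t-1}$.

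The same identity follows directly from the inductive hypothesis via the geometric sum. For $b\neq0$,
\[
a_t \;=\; c + bc\sum_{s=1}^{t-1}(1+b)^{s-1} \;=\; c + bc\cdot\frac{(1+b)^{t-1}-1}{b} \;=\; c(1+b)^{t-1}.
\]
For $b=0$ both sides equal $c$ trivially.

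\emph{Exponential bound.} Use $1+b\le e^{b}$, which holds for every real $b$ by convexity of the exponential. When $b\ge -1$ (in particular $b\ge0$, the intended case of a nonnegative growth constant), $0\le 1+b\le e^{b}$. Raising to the power $t-1\ge0$ gives $(1+b)^{t-1}\le e^{b(t-1)}$. Multiplying by $c\ge0$ yields $a_t\le c\,e^{b(t-1)}$.

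The only real obstacle is making the implicit sign assumptions explicit. The final inequality needs $c\ge0$ and $1+b\ge0$; otherwise the inequality can flip, for example with $c<0$. I will state the lemma for $b,c\ge0$, which is how it will be used later, with $b$ a step-size-times-Lipschitz constant and $c$ an initial perturbation size. The equality $a_t=c(1+b)^{t-1}$ itself needs no sign assumption.
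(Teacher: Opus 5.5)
Your proof is correct and matches the paper's argument: the paper also inducts from $a_1=c$ and obtains $a_{t+1}=(1+b)a_t$ by substituting the defining relation at step $t$ into the one at step $t+1$. Your explicit sign conditions ($c\ge 0$ and $1+b\ge 0$) for the final bound $c(1+b)^{t-1}\le c\,e^{b(t-1)}$ are a worthwhile addition, since the paper leaves them implicit.
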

\begin{proof}
We'll prove by induction. For $t=1$:
\begin{align*}
    a_1 = b\sum_{s=1}^{0} a_s + c = c
\end{align*}

Assume true for $t$, we have:
\begin{align*}
    a_{t+1} &= b\sum_{s=1}^{t} a_s + c\\
    &= ba_{t} + b\sum_{s=1}^{t-1} a_s + c\\
    &= ba_{t} + a_{t}\\
    &= (1+b)a_{t}
\end{align*}
and the claim follows.
\end{proof}

\begin{lemma}\label{lem:ker_A}
For every Legnedre function $\Phi\colon\; \mathcal{K}\to \mathbb{R}$ , let $w^*$ be its minimizer in $\dom$. Then, for every $v\in Ker(A)$:
\begin{align*}
    \inprod{\nabla \Phi(w^*), v} = 0.
\end{align*}
\end{lemma}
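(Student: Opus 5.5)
This is a first-order optimality statement for minimization over the affine slice $\dom=\mathcal K\cap\mathcal C$. The plan has three steps.

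First, I will invoke \cref{lem:legendre_int} to place $w^*$ in the interior of $\mathcal K$. Strictly speaking, that lemma is stated for minimizers over $\mathcal K$, whereas $w^*$ minimizes over $\mathcal K\cap\mathcal C$. So I will rerun its argument inside the affine slice: pick $u\in\operatorname{int}\mathcal K\cap\mathcal C$ and set $w_s=(1-s)w^*+su$. Then $w_s\in\mathcal C$ and $w_s\to w^*$. The same convexity estimates bound $\nabla\Phi(w_s)$ paired with directions $u-w^*$ and with directions in $Ker(A)$. To force blow-up in these restricted directions I may need condition 3 read as $\|\nabla\Phi\|\to\infty$ modulo $Ker(A)^\perp$, which holds for the simplex examples. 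This is the main obstacle: the boundary-blowup condition must survive restriction to the slice. Failing a general argument, I will add the mild assumption that $\operatorname{int}\mathcal K\cap\mathcal C\neq\emptyset$ and that minimizers of the MD step are interior, as the surrounding text already asserts ("the minimizer of the MD objective will always be inside $\mathcal K$").

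Second, given $w^*\in\operatorname{int}\mathcal K$, fix $v\in Ker(A)$. For $|s|$ small, $w^*+sv$ lies in $\operatorname{int}\mathcal K$. It also satisfies $A(w^*+sv)=Aw^*=b$, so it lies in $\dom$.

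Third, the function $g(s)=\Phi(w^*+sv)$ is differentiable near $0$, since $\Phi$ is $C^1$ on the interior, and it attains a local minimum at $s=0$. Hence $g'(0)=\inprod{\nabla\Phi(w^*),v}=0$, which is the claim.
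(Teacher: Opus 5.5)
Your steps 2 and 3 are the paper's proof. The paper also gets $w^*\in\operatorname{int}\mathcal K$ from \cref{lem:legendre_int}, notes that $w^*\pm\epsilon v\in\dom$, and applies first-order optimality in both directions, which is the same as your $g'(0)=0$.

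You are right that \cref{lem:legendre_int} is stated for minimizers over $\mathcal K$, not over $\dom=\mathcal K\cap\mathcal C$, and the paper cites it without comment. However, the obstacle you anticipate does not arise. You need neither a version of condition 3 restricted to directions in $Ker(A)$ nor an assumption that minimizers are interior.

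Rerun that proof with $u\in\operatorname{int}\mathcal K\cap\mathcal C$.
\begin{itemize}
\item Then $w_s=(1-s)w^*+su\in\dom$. This gives $\Phi(w_s)\ge\Phi(w^*)$, and hence, with convexity, $\inprod{\nabla\Phi(w_s),u-w^*}\ge0$.
\item The test points $u+\rho v$ enter only through the gradient inequality $\Phi(z)\ge\Phi(w_s)+\inprod{\nabla\Phi(w_s),z-w_s}$. This holds for every $z\in\mathcal K$, so the test points need not lie in $\mathcal C$, and $v$ ranges over the full unit ball, not just $Ker(A)$.
\item Only the segment $w_s$ must be feasible.
\end{itemize}
The argument therefore yields $\|\nabla\Phi(w_s)\|_*\le M/\rho$ for all $s\in(0,1]$, which contradicts condition 3 exactly as stated.

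The one hypothesis you genuinely must add is $\operatorname{int}\mathcal K\cap\mathcal C\neq\emptyset$. The paper leaves this tacit, and it cannot be dropped. If $\mathcal C$ lies in a supporting hyperplane of $\mathcal K$, then $\dom\subseteq\partial\mathcal K$ and $\nabla\Phi(w^*)$ is not even defined.
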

\begin{proof}
Fix $v\in Ker(A)$. 
Since $\Phi$ is Legendre w.r.t $\mathcal{K}$, $w^* \in \operatorname{int}(\mathcal{K})$ (\Cref{lem:legendre_int}), which means that there is $\epsilon > 0$ such that $w + \epsilon v \in \mathcal{K}$. Additionally,
since $A(w^* + \epsilon v) = Aw^* + A\epsilon v = b + 0 = b$, $w^* + \epsilon v$ is also in the $\mathcal{C}$. Hence, $w^* + \epsilon v \in \dom$.

From first order optimality conditions:
\begin{align*}
    \inprod{\nabla \Phi(w^*), w^* + \epsilon v- w^*} &\ge 0,\\
    \implies \inprod{\nabla \Phi(w^*), v} &\ge 0.
\end{align*}
We can do the same thing for $w^* - \epsilon v$, which means:
\begin{align*}
    \inprod{\nabla \Phi(w^*), -v} &\ge 0,
\end{align*}
which concludes the proof.
\end{proof}

\begin{theorem}\label{thm:legendre-upper} Suppose $F$ is $L$-smooth w.r.t. $\| \cdot \|$ and $R$ is $1$-strongly convex w.r.t. $\| \cdot \|$ and also the $R$ Legendre w.r.t. some convex set $\mathcal{K}$. Also the the feasible set $\dom = \mathcal{K} \cap \mathcal{C}$ for some $\mathcal{C} = \{w\in \mathbb{R}^d | Aw=b\}$. Denote by $A$ the MD algorithm run with regularizer $R$ on $F$ for $T$ steps with step size $\eta$ then for any $w_0 \in \dom$ it holds that:
\begin{align*}
    \delta_A(w_0, \varepsilon)  \le \max_{\|p\|\leq \varepsilon}\frac{1}{\alpha}\norm{\nabla R(w_0) - \nabla R(w_0+p)}\cdot e^{\eta T  L/\alpha}
\end{align*}
\end{theorem}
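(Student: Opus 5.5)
The plan is to track the two trajectories in the dual space, where MD steps are additive, and then convert back to the primal space using strong convexity. Write $w_t$ and $w_t'$ for the iterates started from $w_0$ and $w_0' = w_0+p$.

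\textbf{Step 1: optimality conditions.} By \cref{lem:legendre_int}, applied to the Legendre function $\Phi_t(w)=\eta\langle w,\nabla F(w_t)\rangle + D_R(w,w_t)$, the minimizer $w_{t+1}$ lies in $\operatorname{int}\mathcal K$. Then \cref{lem:ker_A} gives, for every $v\in Ker(A)$,
\[
\langle \nabla R(w_{t+1})-\nabla R(w_t)+\eta\nabla F(w_t),\,v\rangle=0 .
\]
The same holds for the primed trajectory. Since $w_s-w_s'\in Ker(A)$ for every $s$, we may take $v=w_{s}-w_{s}'$ for any $s$.

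\textbf{Step 2: telescoping in the dual.} Summing the Step 1 identities over $0,\dots,t-1$ and subtracting the primed version gives
\[
\bigl\langle \nabla R(w_t)-\nabla R(w_t') - \bigl(\nabla R(w_0)-\nabla R(w_0')\bigr) + \eta\textstyle\sum_{s<t}\bigl(\nabla F(w_s)-\nabla F(w_s')\bigr),\; w_t-w_t'\bigr\rangle = 0.
\]
Apply \cref{lem:strong_convexity}, scaled by $\alpha$, to the term $\langle\nabla R(w_t)-\nabla R(w_t'),w_t-w_t'\rangle$. Bound each remaining inner product by H\"older and use $L$-smoothness of $F$ on the gradient differences. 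Dividing by $\|w_t-w_t'\|$, we obtain
\[
\alpha\|w_t-w_t'\| \le \|\nabla R(w_0)-\nabla R(w_0')\|_* + \eta L\sum_{s<t}\|w_s-w_s'\| .
\]

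\textbf{Step 3: Gr\"onwall-type recursion.} Set $a_t=\|w_t-w_t'\|$, $c=\|\nabla R(w_0)-\nabla R(w_0')\|_*/\alpha$ and $b=\eta L/\alpha$. The inequality from Step 2 reads $a_t\le b\sum_{s<t}a_s+c$. \cref{lem:seq} is stated for the equality case, so first check by induction that the solution of the equality recursion dominates $a_t$. The $s=0$ term only needs index care: either include $a_0\le c$ in the base case, or absorb it so the recursion starts at $t=1$. This yields $a_T\le c\,e^{bT}$. Finally, take the supremum over $\|p\|\le\varepsilon$ to obtain the stated bound.

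\textbf{Expected obstacles.} The main delicate step is Step 1. The vector $w_t-w_t'$ must be a legitimate direction in $Ker(A)$, which holds because both iterates satisfy $Aw=b$. The minimizer must also be interior so that we have equality rather than a one-sided variational inequality. \cref{lem:legendre_int} requires compactness of $\mathcal K$ and the Legendre boundary blow-up of $\Phi_t$. I expect to argue that $\Phi_t$ inherits the Legendre property from $R$, since it differs from $R$ by an affine term; to apply \cref{lem:legendre_int} to the restriction to $\mathcal C$, I would check that its proof only uses interior points of $\mathcal K$ along segments inside $\mathcal C$.
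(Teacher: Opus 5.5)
Your proposal is correct and follows essentially the same route as the paper: first-order conditions restricted to $Ker(A)$ via \cref{lem:ker_A}, telescoping the dual differences, testing against $v=w_t-w_t'$ with strong convexity, H\"older and $L$-smoothness, and closing with the discrete Gr\"onwall bound of \cref{lem:seq}. Your index bookkeeping is cleaner than the paper's sum over $s=1,\dots,t$: you sum over $s<t$, you get $a_0\le c$ from the $t=0$ case of the same inequality, and you use an inductive comparison because \cref{lem:seq} is stated as an equality; the interiority caveat you raise for the minimizer over $\mathcal{K}\cap\mathcal{C}$ is one the paper also passes over in \cref{lem:ker_A}.
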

\begin{proof}
For $\Phi = \eta F + R$, $\Phi$ has Legendre w.r.t. $\mathcal{K}$. Fix some $w_0,w_0'$ and fix some $v\in Ker(A)$. From \Cref{lem:ker_A}:
\begin{align*}
    \inprod{\eta \nabla F(w_t) + \nabla R(w_{t+1}) - \nabla R(w_t), v} &= 0 \\
    \inprod{\eta \nabla F(w_t') + \nabla R(w_{t+1}') - \nabla R(w_t'), v} &= 0
\end{align*}

Subtracting them:
\begin{align*}
    \inprod{\nabla R(w_{t+1}) - \nabla R(w_{t+1}'), v} &= \inprod{\eta \nabla F(w_t') - \eta \nabla F(w_t) , v} + \inprod{\nabla R(w_{t}) - \nabla R(w_{t}'), v}  \\
    &\le \inprod{\nabla R(w_{t}) - \nabla R(w_{t}'), v} + \eta L\norm{w_t - w_t'}\norm{v}
\end{align*}
The inequality is from the smoothness of $F$.

Thus, for every $t$:
\begin{align*}
    \inprod{\nabla R(w_{t}) - \nabla R(w_{t}'), v} \le \inprod{\nabla R(w_{0}) - \nabla R(w_{0}'), v} + \sum_{s=1}^{t}\eta L\norm{w_s - w_s'}\norm{v}
\end{align*}

For $v=w_t - w_t'$:
\begin{align*}
    \inprod{\nabla R(w_{t}) - \nabla R(w_{t}'), w_t - w_t'} &\le \inprod{\nabla R(w_{0}) - \nabla R(w_{0}'), w_t - w_t'} + \sum_{s=1}^{t}\eta L\norm{w_s - w_s'}\norm{w_t - w_t'}\\
    \implies \alpha\norm{w_t - w_t'}^2 &\le \dual{\nabla R(w_0) - \nabla R(w_0)}\norm{w_t - w_t'} + \sum_{s=1}^{t}\eta L\norm{w_s - w_s'}\norm{w_t - w_t'}\\
    \implies \alpha\norm{w_t - w_t'} &\le \dual{\nabla R(w_{0}) - \nabla R(w_{0}')} + \sum_{s=1}^{t}\eta L\norm{w_s - w_s'}
\end{align*}
The first is due to \Cref{lem:strong_convexity} and Holder inequality, the second is a division by $\norm{w_t - w_t'}$, and the third is from the smoothness of $R$. \Cref{lem:seq} concludes the proof.
\end{proof}

\section{Ill-Conditioning in \texorpdfstring{$\ell_1$}{l1} Geometry}\label{sec:ill-conditioning}
We begin by showing that the negative entropy is locally ill-conditioned at every point in the simplex, as formalized in the following lemma.
\begin{lemma} \label{lem:local-condition-kl}
Let $R(w)=\sum_{i=1}^d w_i\log w_i$ and let
$w\in \simp$. Define
\[
\beta(w)=\sup_{h\in T\setminus\{0\}}
\frac{h^\top \nabla^2R(w)h}{\|h\|_1^2},
\qquad
\alpha(w)=\inf_{h\in T\setminus\{0\}}
\frac{h^\top \nabla^2R(w)h}{\|h\|_1^2},
\]
where $T=\{h\in\mathbb{R}^d:\sum_i h_i=0\}$, and set
$\kappa(w)=\beta(w)/\alpha(w)$. Then for any $w \in \simp$
\[
\beta(w)\ge \frac{2d-1}{4}, 
\qquad 
\kappa(w) = \Omega(d) .
\]
\end{lemma}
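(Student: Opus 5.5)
The plan is to compute the Hessian quadratic form $h^\top\nabla^2R(w)h=\sum_i h_i^2/w_i$ and bound it on well-chosen test directions in $T$. For the lower bound on $\beta(w)$, we need a single direction $h\in T$ with a large ratio. For the upper bound on $\alpha(w)$, we need a single direction with a small ratio. Throughout, order the coordinates so that $w_1\le w_2\le\dots\le w_d$. Then $w_1\le 1/d$, and more usefully, at least about half the coordinates are small. Concretely, at most $k$ coordinates can exceed $1/k$, so at least $d-k$ coordinates have $w_i\le 1/k$.

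\textbf{Bound on $\beta(w)$.} Take $h=e_1-e_j$ for some $j\ne1$. Then $\|h\|_1=2$ and the ratio is $(1/w_1+1/w_j)/4$. This alone gives only about $d/4$, so to reach $(2d-1)/4$ we spread mass over the smallest coordinates. Take $h$ supported on a set $S$ with entries $\pm$ chosen to sum to zero. Then $\|h\|_1^2=(\sum_{i\in S}|h_i|)^2$ and, by Cauchy--Schwarz in reverse, $\sum_{i\in S}h_i^2/w_i\ge(\sum|h_i|)^2/\sum_{i\in S}w_i$. This suggests choosing $S$ to be a set of small total mass; the key quantity becomes $1/\sum_{i\in S}w_i$.

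Optimizing, take $h_i\propto w_i$ on a positive group $S_+$ and $h_i\propto -w_i$ on a negative group $S_-$. The two groups carry equal $\ell_1$ mass $m$, so $\|h\|_1=2m$. The quadratic form then equals $m^2(1/W_++1/W_-)$, where $W_\pm=\sum_{S_\pm}w_i$, so the ratio is $(1/W_++1/W_-)/4$. Choosing $S_+=\{1\}$ and $S_-$ as the single next-smallest coordinate gives $(1/w_1+1/w_2)/4$. Since $w_1\le 1/d$ and $w_2\le 1/(d-1)$ (the $d-1$ largest coordinates sum to at most $1$), this is at least $(d+d-1)/4=(2d-1)/4$, as desired.

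\textbf{Bound on $\alpha(w)$.} Take $h=w-e_{j}$ in direction form, or more simply $h_i=w_i-u_i$ for a suitable probability vector $u$. The cleanest choice is $h=e_d-w$, where $d$ is the index of the largest coordinate, so that $w_d\ge1/d$. Here $\sum_i h_i^2/w_i=\sum_i(u_i-w_i)^2/w_i$ is the $\chi^2$ divergence $\chi^2(e_d\|w)=1/w_d-1\le d-1$, and $\|h\|_1=2(1-w_d)$. The ratio is therefore $(1/w_d-1)/(4(1-w_d)^2)=1/(4w_d(1-w_d))$.

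This is not $O(1)$ when $w_d$ is small, so a better choice is needed. Instead compare with $\alpha(w)\le$ the ratio at $h=e_a-e_b$ for the two largest coordinates $a,b$, which gives $(1/w_a+1/w_b)/4$. This is also large when $w$ is uniform, so the statement $\kappa=\Omega(d)$ must come from $\beta/\alpha$ jointly.

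The approach I would actually commit to is Cauchy--Schwarz the other way. For any $h\in T$, $\|h\|_1^2=(\sum|h_i|)^2\le(\sum h_i^2/w_i)(\sum w_i)$, so $\alpha(w)\ge1$ always. For the upper bound, choose $h_i=w_i\operatorname{sign}(h_i)$, split into two halves of equal mass $1/2$ (possible up to one fractional coordinate). This gives ratio $(1/W_++1/W_-)/4\approx1$, hence $\alpha(w)=\Theta(1)$ and $\kappa(w)\ge\beta(w)/O(1)=\Omega(d)$.

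The main obstacle is the exact equal-mass split, since the coordinate masses may not divide evenly. I would handle it by allowing one coordinate to be split between the two groups, taking $h_i=c\,w_i$ with $c\in[-1,1]$ there. The identity still gives $\sum h_i^2/w_i\le\|h\|_1$-type bounds, so the ratio is at most $1$.
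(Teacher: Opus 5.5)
Your bound on $\beta(w)$ is correct and matches the paper's argument (the direction $\pm\tfrac12$ on the two smallest coordinates). The gap is in the condition-number part. The claim that $\alpha(w)=\Theta(1)$ for every $w\in\simp$ is false.

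To see this, take $w=(1-\delta,\tfrac{\delta}{d-1},\dots,\tfrac{\delta}{d-1})$. For $h\in T\setminus\{0\}$, let $P,N$ be the supports of its positive and negative parts, each carrying $\ell_1$-mass $\|h\|_1/2$. Cauchy--Schwarz on each part gives $h^\top\nabla^2R(w)h\ge\tfrac14\|h\|_1^2\,(1/W_P+1/W_N)$, where $W_P,W_N$ are the $w$-masses of $P,N$. One of $P,N$ misses coordinate $1$, so $\alpha(w)\ge 1/(4\delta)$, which is unbounded. Hence $\beta(w)\ge(2d-1)/4$ plus an $O(1)$ upper bound on $\alpha$ cannot give $\kappa(w)=\Omega(d)$ uniformly.

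Your fractional-split fix also does not give ratio at most $1$. With $h_i=\pm w_i$ except $h_k=c\,w_k$, $|c|\le1$, you get $\sum_i h_i^2/w_i\le\|h\|_1$. So the ratio is at most $1/\|h\|_1$, which is at least $1$ because $\|h\|_1\le1$. If you take $k$ to be the largest coordinate and balance the rest greedily, you do get $\|h\|_1\ge 1-w^{\max}$. That yields $\alpha(w)\le 2$ and $\kappa(w)\ge(2d-1)/8$ when $w^{\max}\le\tfrac12$, but nothing when $w^{\max}$ is near $1$. Your lower bound $\alpha(w)\ge 1$ is true but irrelevant, since you need an upper bound on $\alpha$.

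The missing idea is the paper's second case, where $\alpha$ and $\beta$ both scale like $1/(1-w^{\max})$. When $w^{\max}>\tfrac12$, use the direction $h=e_k-w$ that you computed and discarded ($k$ the largest coordinate). It gives $\alpha(w)\le 1/(4w^{\max}(1-w^{\max}))$. Then sharpen the $\beta$ bound. For $d\ge3$, the two smallest coordinates lie among the $d-1$ coordinates of total mass $1-w^{\max}$, so $w_{(1)}\le(1-w^{\max})/(d-1)$ and $w_{(2)}\le(1-w^{\max})/(d-2)$. This gives $\beta(w)\ge(2d-3)/(4(1-w^{\max}))$, hence $\kappa(w)\ge w^{\max}(2d-3)\ge(2d-3)/2$. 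For $d=2$, $T$ is one-dimensional and $\kappa=1$.

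Combined with your corrected case $w^{\max}\le\tfrac12$, this proves $\kappa(w)=\Omega(d)$. The paper handles that first case equivalently: it takes a subset of mass $p\in[\tfrac14,\tfrac34]$, sets $h_i\propto w_i/p$ on it and $h_i\propto -w_i/(1-p)$ off it, and gets $\alpha\le 1/(4p(1-p))\le\tfrac43$.
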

\begin{proof}
Since $\nabla^2 R(w)=\operatorname{diag}(1/w_1,\dots,1/w_d)$, choosing $h$
supported on the two smallest coordinates with values $1/2$ and $-1/2$
gives
\[
\beta(w)\ge \frac14\left(\frac1a+\frac1b\right).
\]
Since $a\le 1/d$ and $b\le 1/(d-1)$, this also gives
\[
\beta(w)\ge \frac{2d-1}{4}.
\]

It remains to lower bound the condition number. For any nonempty proper
subset $S\subset[d]$, write $p=\sum_{i\in S}w_i$. Define
$h_i=\frac12 w_i/p$ for $i\in S$ and
$h_i=-\frac12 w_i/(1-p)$ for $i\notin S$. Then $h\in T$ and
$\|h\|_1=1$, hence
\[
\alpha(w)
\le
h^\top \nabla^2 R(w)h
=
\frac{1}{4p}+\frac{1}{4(1-p)}
=
\frac{1}{4p(1-p)}.
\]
Therefore
\[
\kappa(w)
=
\frac{\beta(w)}{\alpha(w)}
\ge
p(1-p)\left(\frac1a+\frac1b\right).
\]

We now choose $S$. First suppose that $w^{\max}\le 1/2$. Then there exists
a nonempty proper subset $S$ with $p\in[1/4,3/4]$: if some coordinate has
mass at least $1/4$, take that coordinate; otherwise add coordinates until
the partial sum first exceeds $1/4$, which gives a sum at most $1/2$.
Thus $p(1-p)\ge 3/16$. Since
$\frac1a+\frac1b\ge 2d-1$, we obtain
\[
\kappa(w)\ge \frac{3}{16}(2d-1)\ge \frac{3}{16}d .
\]

Now suppose that $w^{\max}>1/2$. Let $S$ be the singleton containing the
maximal coordinate. Then
$p(1-p)=w^{\max}(1-w^{\max})$. If $d\ge 3$, the two smallest coordinates lie outside
$S$, so $a\le (1-w^{\max})/(d-1)$ and $b\le (1-w^{\max})/(d-2)$. Therefore
\[
\frac1a+\frac1b
\ge
\frac{d-1}{1-w^{\max}}+\frac{d-2}{1-w^{\max}}
=
\frac{2d-3}{1-w^{\max}}.
\]
Thus
\[
\kappa(w)
\ge
w^{\max}(1-w^{\max})\cdot \frac{2d-3}{1-w^{\max}}
=
w^{\max}(2d-3)
\ge
\frac{2d-3}{2}.
\]
For $d\ge 3$, this is at least $d/6$. The case $d=2$ is immediate, since
the tangent space is one-dimensional and hence $\kappa(w)=1=d/2$.
\end{proof}
We then show that this phenomenon is not specific to negative entropy, but is in fact unavoidable for any regularizer compatible with \(\ell_1\) geometry. This is formalized in the following lemma.
\begin{lemma}[Condition number lower bound for the $\ell_1$ norm]\label{lem:condition_l1}
    Suppose $f$ is convex, $\beta$-smooth w.r.t. the $\beta$ norm and $\alpha$-strongly-convex w.r.t. the $\ell_1$ norm. Suppose also that $f$ is twice differentiable. Then $\frac{\beta}{\alpha} \geq d$. 
\end{lemma}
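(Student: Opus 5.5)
The statement has a typo ("$\beta$-smooth w.r.t. the $\beta$ norm"). I read it as: $f$ is $\beta$-smooth and $\alpha$-strongly convex, both with respect to $\ell_1$, on an open convex set in $\mathbb{R}^d$ (or on an affine slice; the same argument works on the full space). The plan is to reduce both assumptions to a two-sided bound on the Hessian at a single point. For twice differentiable $f$, the two assumptions are equivalent to
\[
\alpha\|h\|_1^2 \le h^\top \nabla^2 f(w) h \le \beta \|h\|_1^2
\qquad\text{for all } h \text{ and all } w \text{ in the domain.}
\]
To get this, I apply the first-order definitions along the segment $w+sh$, divide by $s^2$, and let $s\to0$ using a second-order Taylor expansion. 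I then fix any interior point $w$, set $H=\nabla^2 f(w)\succeq 0$, and aim to show $\beta/\alpha\ge d$ by an averaging argument over two families of test directions.

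\textbf{Upper bound on $\alpha$.} I test the lower inequality with $h=\sigma\in\{\pm1\}^d$, for which $\|\sigma\|_1^2=d^2$. Averaging over uniformly random signs gives $\mathbb{E}_\sigma[\sigma^\top H\sigma]=\operatorname{tr}(H)$. Some sign vector achieves at most the average, so $\alpha d^2\le \operatorname{tr}(H)$.

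\textbf{Lower bound on $\beta$.} I test the upper inequality with $h=e_i$, for which $\|e_i\|_1=1$. This gives $H_{ii}\le\beta$ for every $i$, hence $\operatorname{tr}(H)\le \beta d$.

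Combining the two bounds yields $\alpha d^2\le\beta d$, i.e. $\beta/\alpha\ge d$.

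If the intended domain is the simplex's tangent space $\{\sum_i h_i=0\}$, as in \cref{lem:local-condition-kl}, neither $e_i$ nor all sign vectors are admissible. In that case I would use directions $e_i-e_j$, which have $\|h\|_1^2=4$, together with balanced sign vectors (requiring $d$ even, or a slightly unbalanced variant). I would then compare $\mathbb{E}[\sigma^\top H\sigma]$ over balanced $\sigma$ with the average of $(e_i-e_j)^\top H(e_i-e_j)$ over pairs. Both averages are affine in $\operatorname{tr}(H)$ and $\mathbf{1}^\top H\mathbf{1}$, so I would project $H$ onto the tangent space first to make them comparable. This variant yields $\beta/\alpha\ge c\,d$ for an absolute constant $c$, not exactly $d$.

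\textbf{Main obstacle.} The only delicate step is the first: justifying the passage from the first-order definitions to the Hessian bounds. This needs the Hessian to exist at an interior point, so the domain must have nonempty interior in the relevant affine space. It also needs the test directions $\sigma$ and $e_i$ to be admissible there. In the full-space reading both hold, and the rest is the two-line trace computation above.
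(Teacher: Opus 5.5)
Your proof is correct and follows essentially the same route as the paper. You reduce both assumptions to $\alpha\|h\|_1^2 \le h^\top \nabla^2 f(x) h \le \beta\|h\|_1^2$ via a second-order expansion, test the upper bound with $e_i$, and average the lower bound over uniformly random sign vectors, so working with $\operatorname{tr}(H)$ directly is just the paper's $\sum_i \|Xe_i\|_2^2$ without its factorization $\nabla^2 f(x) = X^\top X$ (which is unnecessary, since $\mathbb{E}[\sigma^\top H\sigma]=\operatorname{tr}(H)$ holds for any symmetric $H$).
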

\begin{proof}
Fix some $x$, we have for every $y$
$$\frac{\alpha}{2}\|x-y\|_1^2 \leq D_f(y,x) \leq \frac{\beta}{2} \|x-y\|_1^2$$
Fix some $h \in \mathbb{R}^d$ and choose $y = x+th$ then,
$$\frac{\alpha}{2}t^2\|h\|_1^2 \leq D_f(y,x) \leq \frac{\beta}{2} t^2\|h\|_1^2$$
and from second order expansion 
\begin{align*}
    D_f(y,x) & = f(y) - f(x) - \langle \nabla f(x), y-x \rangle \\
    & = \frac{1}{2}(y-x)^T \nabla^2f(x) (y-x) +o((y-x)) \\
    & = \frac{1}{2}t^2 h^T \nabla^2f(x) h + o(t^2\|h\|_2^2) \\
    & = \frac{1}{2}t^2 h^T \nabla^2f(x) h + o(t^2) \tag{$h$ is fixed}
\end{align*}
overall
$$\frac{\alpha}{2}t^2\|h\|_1^2 \leq \frac{1}{2}t^2 h^T \nabla^2f(x) h + o(t^2) \leq \frac{\beta}{2} t^2\|h\|_1^2$$ Now divide the two sides by $\frac{t^2}{2}$ and take $t \rightarrow 0$ and we get:
$$\alpha\|h\|_1^2 \leq h^T \nabla^2f(x) h \leq \beta \|h\|_1^2 \quad \forall h$$
Since $f$ is convex there exists some $H$ such that $\nabla^2f(x) = X^TX$ and so:
$$\alpha\|h\|_1^2 \leq \|Xh\|_2^2 \leq \beta \|h\|_1^2 \quad \forall h$$
Note that choosing $h = e_i$ for the $i$-th basis vector gives:
$$\alpha \leq \|Xe_i\|_2^2 \leq \beta$$
Now for a uniformly random $h \in \{\pm 1\}^d$ we have:
$$\mathbb{E}_{h\in\{\pm 1\}^d} [\|Xh\|_2^2] =  \sum_{i=1}^d \|Xe_i\|_2^2 \leq d \beta$$
So there exists some $h_0 \in \{\pm 1\}^d $ such that:
$$\|Xh_0\|_2^2 \leq d\beta$$
but on the other hand $\|h_0\|_1^2 = d^2 $ so from the lower bound we get
$$d^2\alpha = \alpha \|h_0\|_1^2 \leq \|Xh_0\|_2^2 \leq d\beta$$
overall:
$$\frac{\beta}{\alpha} \geq d$$
  
\end{proof}
Finally, we give a simple example illustrating the ill-conditioning of the negative entropy, establishing an exponential lower bound already in dimension \(1\) using a linear function. Since all norms are equivalent in \(d=1\), this extends more generally.
\begin{lemma} \label{thm:1d-kl-lower}
Let $d=1$. Let $A$ denote MD with the negative entropy run for with $T$ steps and step size $0<\eta\leq 1$ with $\dom = [0,1]$. Then for any $\varepsilon\in (0,\frac{1}{2})$, and any $w_0\in(0,\frac{1}{2})$, we have for $T \leq \frac{1}{\eta}\log (1/(w_0+\epsilon))$
$$\delta_A(w_0, \varepsilon) \geq \varepsilon e^{\eta T/2}.$$
\end{lemma}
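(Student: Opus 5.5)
The plan is to make the one-dimensional entropic update exactly multiplicative with a linear objective that rewards mass. The exponential separation then follows from a one-line computation, as in the proof of \cref{thm:kl-lower} but without normalization.

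Take $R(w)=w\log w$ on $\dom=[0,1]$ and $F(w)=-w$. This $F$ is linear, hence convex and $0$-smooth, and it is $1$-Lipschitz. Its gradient is the constant $-1$.

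\textbf{Step 1: the update.} At step $t$ the MD objective is $\phi_t(w)=-\eta w + D_R(w,w_t)$. On $(0,\infty)$ it is strictly convex, with derivative $-\eta+\log w-\log w_t$. Its unconstrained minimizer is therefore $w_t e^{\eta}$. Whenever $w_t e^{\eta}\le 1$, this point lies in $\dom$, so it is also the constrained argmin. Hence $w_{t+1}=w_te^{\eta}$ as long as no iterate leaves $[0,1]$.

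\textbf{Step 2: choice of perturbation and no clipping.} Set $w_0'=w_0+\varepsilon$. Since $w_0,\varepsilon<1/2$, we have $w_0'\in(0,1)\subset\dom$, and $|p|=\varepsilon$. The hypothesis $T\le \frac1\eta\log\frac{1}{w_0+\varepsilon}$ is exactly $(w_0+\varepsilon)e^{\eta T}\le 1$. Because both trajectories are increasing and $w_0<w_0'$, every iterate of both runs satisfies $w_te^{\eta}\le w_T'\le 1$ up to time $T$. By induction using Step 1, $w_T=w_0e^{\eta T}$ and $w_T'=(w_0+\varepsilon)e^{\eta T}$.

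\textbf{Step 3: conclusion.} Subtracting gives $|w_T-w_T'|=\varepsilon e^{\eta T}\ge \varepsilon e^{\eta T/2}$. By the definition of $\delta_A$, this yields the claim.

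The factor $1/2$ in the statement leaves slack. It would absorb a variant convention, for instance $R(w)=w\log w-w$, or $\eta$-dependent rounding, if the paper's entropy normalization differs; the argument above does not need it.

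The only step needing care is Step 1, namely justifying that the constrained argmin coincides with the unconstrained stationary point. The plan is to check the first-order optimality condition on the interval. Since $\phi_t'$ vanishes at an interior point of $[0,1]$ and $\phi_t$ is convex, that point is the global minimizer over $[0,1]$. The induction in Step 2 guarantees that this interior condition holds at every step $t<T$.
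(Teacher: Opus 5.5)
Your argument is correct and is essentially the paper's: the paper also uses a linear objective on $[0,1]$ with $w_0'=w_0+\varepsilon$ and checks that neither trajectory leaves the domain, but it takes slope $\ln(1+\eta)/\eta$ so that $w_{t+1}=(1+\eta)w_t$. The factor $1/2$ and the restriction $\eta\le 1$ come from the bound $(1+\eta)^T\ge e^{\eta T/2}$, not from an entropy normalization, whereas your slope $1$ gives the exact factor $e^{\eta T}$, matches the horizon condition $(w_0+\varepsilon)e^{\eta T}\le 1$ exactly, and needs no restriction on $\eta$ (at the extreme horizon the last stationary point can be the endpoint $1$ rather than an interior point, but convexity of $\phi_t$ on $(0,\infty)$ still makes it the constrained minimizer).
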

\begin{proof}
Set $F(w):=-(\ln(1+\eta)/\eta)\, w$.
Since $\ln(1+\eta)/\eta \le 1$, the linear objective $F$ is $1$-Lipschitz. Fix some $w_0 \in (0,1/2)$ and let $w_0' = w_0 + \varepsilon$, since $w_0,\varepsilon\leq \frac{1}{2}$ we have that $w_0' \in \dom$. Since $F$ is linear, the MD updates are,
\[
\nabla R(w_{t+1}) = \nabla R(w_t) - \eta F' = \nabla R(w_t) + \ln(1+\eta),
\]
and similarly for $w_t'$. Since $\nabla R(w) = 1+\ln w$, we obtain $\ln w_{t+1} = \ln w_t + \ln(1+\eta)$, which implies $w_{t+1} = (1+\eta)w_t$.
The same computation gives $w_{t+1}' = (1+\eta)w_t'$. Therefore
\[
w_t = (1+\eta)^t w_0,
\qquad
w_t' = (1+\eta)^t w_0',
\]
and in particular $|w_T-w_T'| = (1+\eta)^T |w_0-w_0'| \geq \varepsilon e^{\eta T/2}$, for all $0 \le t \le T$.
To see that both trajectories remain inside $\dom$ note that for all $t$, 
$$w_t = (1+\eta)^tw_0 \leq e^{\eta t}w_0 <\frac{w_0}{w_0+\varepsilon}  \leq 1$$
And similarly,
$$w_t' = (1+\eta)^tw_0' \leq e^{\eta t}w_0' <\frac{w_0'}{w_0+\varepsilon} = 1,$$
which concludes the proof.
\end{proof}

\section{Proofs for Section~\ref{sec:euclidean}}
\subsection{Proof of Theorem~\ref{thm:endpoint_exponential_origin_local_feasible_set}}\label{sec:euclidean-proof}
We use \(a\lesssim b\) and \(a\gtrsim b\) to denote inequalities up to
absolute multiplicative constants. 
The proof has four steps. First we construct the feasible set, the regularizer,
and the objective. Second, we define a sequence \(\{w_t\}_{t=0}^T\) starting at
the origin and show that it is exactly generated by MD.
Third, we prove that the linearized dynamics along this sequence have an
expanding transverse direction. Finally, we initialize a second trajectory in
that expanding direction and control the nonlinear error by a quadratic
bootstrap.
\paragraph{Construction.}
We will first describe the construction itself. Fix \(0<\eta\le\eta_0\) and \(T\ge1\). The absolute constant \(\eta_0>0\) and
the final admissible range \(0<\varepsilon\le\varepsilon_0\) will be chosen at
the end of the proof. Let \(0<\delta<0.02\), \(0<\gamma\le 1/221\) be absolute constants. Let \(\sigma\in(0,1]\), to be chosen
later as a function of \(\varepsilon,\eta,T\). Set
\[
    \tilde\eta:=\frac{4}{5} \cdot \frac{1}{65}\cdot \eta,
    \qquad
    \omega:=1+\sigma\tilde\eta T.
\]
Define
\[
    r:=\omega^{-3/2},
    \qquad
    m:=\gamma\omega^{-3/2},
    \qquad
    z_t:=t\frac{\sigma\tilde\eta}{24\omega},
    \qquad
    0\le t\le T.
\]
The reference trajectory will be 
\[
    w_t:=(0,z_t)\in\mathbb R^2\times\mathbb R.
\]
The feasible set is
\[
    \mathcal W
    :=
    \left\{
    (x,z)\in\mathbb R^2\times\mathbb R:
    \|x\|_2\le r,\;
    -m\le z\le 2z_T+m
    \right\}.
\]
Therefore
\[
    \operatorname{dist}(w_t,\mathcal W^c)
    \ge
\frac{\gamma}{2}\omega^{-3/2},
    \qquad
    0\le t\le T.
\]
Moreover,
\[
    2z_T
    =
    \frac{\sigma\tilde\eta T}{12(1+\sigma\tilde\eta T)}
    \le
    \frac1{12},
    \qquad
    2m\le 2\gamma.
\]
Thus, after decreasing \(\gamma\) if necessary,
\[
    \operatorname{diam}(\mathcal W)
    \le
    \sqrt{(2r)^2+(2z_T+2m)^2}
    \le
    \sqrt{4+\left(\frac1{12}+2\gamma\right)^2}
    \le3.
\]

Let
\[
    P(\theta):=
    \begin{pmatrix}
    \cos\theta&-\sin\theta\\
    \sin\theta&\cos\theta
    \end{pmatrix},
    \qquad
    B:=
    \begin{pmatrix}
    3&0\\
    0&1
    \end{pmatrix},
    \qquad
    Q:=
    \begin{pmatrix}
    \frac{11}{10}&1\\
    1&\frac{11}{10}
    \end{pmatrix}.
\]
Define
\[
    M(z):=P(\omega z)BP(\omega z)^\top,
    \qquad
    H(z):=P(\omega z)QP(\omega z)^\top.
\]
For simplicity of notation we also sot
\[
    h:=\frac{9\sigma}{\omega},
    \qquad
    \lambda:=25+2\tilde\eta h T,
    \qquad 
    \mu:=
    \frac{\sigma}{24\omega}
    \left(\lambda-\frac{\tilde\eta h}{2}\right).
\]
The clock part of the regularizer is
\[
    R_{\rm clock}(z)
    :=
    \frac \lambda2z^2-\frac{4h\omega}{\sigma}z^3.
\]
Define
\[
    \widetilde R(x,z):=
    \frac12x^\top M(z)x+R_{\rm clock}(z),
\]
and
\[
    \widetilde F(x,z):=
    -\mu z+\frac h2z^2+\frac{\sigma}{12}x^\top H(z)x.
\]
Finally set
\[
    R:=\frac54\widetilde R,
    \qquad
    F:=\frac{1}{65}\widetilde F,
\]
and restrict these functions to \(\mathcal W\). The variable \(z\) plays the role of a clock. Along the reference trajectory
the \(x\)-coordinate is zero, while the matrices \(M(z)\) and \(H(z)\) rotate
with angular speed \(\omega\). The transverse instability will come from this
rotating geometry.

\paragraph{Auxiliary Lemmas.}
We now record the properties of the construction that will be used in the proof
of the theorem. The proofs of the following lemmas are deferred to \cref{sec:aux-lemmas-proof}.

The first lemma verifies the regularity assumptions: \(R\) is
uniformly strongly convex and smooth, while \(F\) is convex, smooth, and
Lipschitz.
\begin{lemma}[Regularity of the construction]
\label{lem:regularity_endpoint_final}
The functions \(R,F\) satisfy, on \(\mathcal W\),
\[
    I \preceq \nabla^2R\preceq C_0I,
    \qquad
    0 \preceq \nabla^2 F \preceq I
\]
Also $F$ is $1$-Lipschitz. Moreover, let $0< \delta< 0.02$, on 
\[
    U :=
    \left\{
    (x,z):
    \|x\|_2<(1+\delta)r,\;
    -(1+\delta)m<z<2z_T+(1+\delta)m
    \right\},
\]
it holds that there exists some absolute constant $C_1>0$ such that
\[
    \|D^3R\|\le C_1\omega,
    \qquad
    \|D^4R\|\le C_1\omega^2,
\]
and
\[
    \|D^2F\|\le C_1\sigma,
    \qquad
    \|D^3F\|\le C_1\sigma\omega.
\]
\end{lemma}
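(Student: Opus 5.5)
This is a direct Hessian computation, which I will organize around the rotation structure. Write \(\theta=\omega z\). I will use \(P'(\theta)=JP(\theta)\), where \(J\) is the \(90^\circ\) rotation. Then
\[
M'(z)=\omega P(\theta)[J^\top B+BJ]P(\theta)^\top=:\omega P(\theta)KP(\theta)^\top,
\]
where \(K=J^\top B+BJ\) has norm \(O(1)\). Similarly \(M''=\omega^2 P(\cdots)P^\top\) with an \(O(1)\) middle factor, and the same holds for \(H\). Hence the \(k\)-th \(z\)-derivative of \(M\) or \(H\) has norm \(O(\omega^k)\).

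\textbf{Hessian of \(\widetilde R\).} The \(xx\)-block is \(M(z)\), whose eigenvalues lie in \([1,3]\). The \(xz\)-block is \(M'(z)x\), of norm \(O(\omega r)=O(\omega^{-1/2})\). The \(zz\)-entry is
\[
\tfrac12x^\top M''x+\lambda-\tfrac{24h\omega}{\sigma}z.
\]
On \(\mathcal W\) (and on \(U\)) we have \(|x^\top M''x|\lesssim\omega^2r^2=\omega^{-1}\le 1\). Also \(\tfrac{24h\omega}{\sigma}z=216\,z\le 216\,(2z_T+2m)\le 18+O(\gamma)\), while \(\lambda\ge25\). So the \(zz\)-entry lies in roughly \([25-18-O(\gamma)-O(1),\,\lambda+O(1)]\).

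The upper bound needs \(\lambda=25+2\tilde\eta hT\) to be \(O(1)\). This holds because \(\tilde\eta hT=9\sigma\tilde\eta T/\omega<9\). I then bound the full Hessian by Schur complement or Gershgorin: the off-diagonal \(O(\omega^{-1/2})\) terms, with constants tracked, are dominated by diagonal slack. This gives \(c I\preceq\nabla^2\widetilde R\preceq C I\). I expect \(c\ge 4/5\) with room to spare, so that \(R=\tfrac54\widetilde R\) satisfies \(\nabla^2 R\succeq I\); that is what the factor \(5/4\) is for. The main obstacle is this numeric step. I will bound \(\tfrac12\|M''\|r^2\le\tfrac12\cdot 2\|K'\|\cdot\omega^2\omega^{-3}\) explicitly and use \(\omega\ge1\), \(\gamma\le1/221\), and \(\delta<0.02\) to keep the \(zz\)-entry above about \(5\). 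This is where the specific numbers 25, 216 and \(\gamma\) must close.

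\textbf{Hessian of \(\widetilde F\).} The \(xx\)-block is \(\tfrac{\sigma}{6}H\). The eigenvalues of \(Q\) are \(2.1\) and \(0.1\), so \(H\succ0\). The \(xz\)-block is \(\tfrac\sigma6 H'x\), of size \(O(\sigma\omega r)\). The \(zz\)-entry is \(h+\tfrac{\sigma}{12}x^\top H''x\), where
\[
h=\frac{9\sigma}{\omega},\qquad \left|\tfrac{\sigma}{12}x^\top H''x\right|\lesssim\sigma\omega^2r^2=\frac{\sigma}{\omega}.
\]
Convexity then reduces to a \(2\times2\) block PSD check. With \(a=\lambda_{\min}(xx)\ge\tfrac{\sigma}{60}\), \(c\ge \tfrac{\sigma}{\omega}(9-O(1))\), and off-diagonal \(b=O(\sigma\omega^{-1/2})\), I need \(b^2\le ac\). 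This reads \(\sigma^2\omega^{-1}C\le\tfrac{\sigma^2}{60\omega}(9-O(1))\), which again needs the explicit constants, with the decisive smallness coming from \(\|H'\|\) and \(\|H''\|\) times \(r^2\). If this is tight, I will sharpen the estimate using \(\|x\|\le r\) exactly. The upper bounds \(\|\nabla^2\widetilde F\|\lesssim\sigma\le1\) are immediate, and dividing by 65 gives \(\nabla^2F\preceq I\).

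\textbf{Lipschitz bound.} We have \(\nabla\widetilde F=\bigl(\tfrac\sigma6Hx,\,-\mu+hz+\tfrac\sigma{12}x^\top H'x\bigr)\), and each term is \(O(1)\): \(\mu\le\lambda/24=O(1)\), \(hz\le 9\), and \(\sigma\omega r^2\le1\). The factor \(1/65\) then gives Lipschitz constant at most 1.

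\textbf{Higher derivatives on \(U\).} Every derivative of \(\tfrac12 x^\top M(z)x\) is a sum of terms \(x^{\otimes j}\)-contractions of \(M^{(k)}\), with \(j\le 2\). Such a term has size \(\omega^k r^j\) and appears in \(D^{k+2-j}\). For \(D^3R\), the terms are \(M'\) (\(\omega\)), \(M''x\) (\(\omega^2 r=\omega^{1/2}\le\omega\)), \(M'''xx\) (\(\omega^3r^2=\omega^{0}\)), and the clock term \(R_{\rm clock}'''=-24h\omega/\sigma=-216\). Each is \(\lesssim\omega\). For \(D^4R\), the terms are \(M''\), \(M'''x\), and \(M''''xx\), which give \(\omega^2\), \(\omega^{3/2}\), and \(\omega\). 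This gives \(\|D^4R\|\lesssim\omega^2\). The same counting, with the prefactor \(\sigma\) and the clock contribution \(h\le9\sigma\), gives \(\|D^2F\|\lesssim\sigma\) and \(\|D^3F\|\lesssim\sigma\omega\).
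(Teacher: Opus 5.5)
Your overall plan (block Hessians, a Schur-complement check, and counting derivatives in powers of \(\omega\) and \(r=\omega^{-3/2}\)) is the same as the paper's. The parts about \(F\) are fine: convexity via \(b^2\le ac\) closes since \(63(1+\delta)^2\le 81\), and the Lipschitz, smoothness and higher-derivative bounds on \(U\) also hold.

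\textbf{The gap.} It is in the step you flagged, \(\nabla^2R\succeq I\), and the bounds you propose cannot close it. The lemma must hold for every \(\sigma\in(0,1]\), so \(\omega=1+\sigma\tilde\eta T\) can be arbitrarily close to \(1\). In that regime the cross term is not small, so it is not ``dominated by diagonal slack''.
\begin{itemize}
\item \(M(z)\succeq I\) leaves only slack \(\tfrac15\) above the required \(\tfrac45\).
\item \(\|M'(z)\|=2\omega\) and \(\|x\|\le(1+\delta)\omega^{-3/2}\), so \(\|M'(z)x\|\) can be about \(2(1+\delta)\omega^{-1/2}\approx 2\). The worst case has \(M'(z)x\) along the eigenvalue-\(1\) eigenvector of \(M\).
\item For that same \(x\), \(\tfrac12x^\top M''x=-2(1+\delta)^2/\omega\).
\end{itemize}
The Schur complement therefore needs \(R''_{\rm clock}-\tfrac45\gtrsim 22(1+\delta)^2/\omega\), i.e.\ \(R''_{\rm clock}\gtrsim 23.7\) when \(\omega\approx1\).

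You bound \(\lambda\ge25\) and \(216z\le 18+O(\gamma)\) separately. That only gives \(R''_{\rm clock}\gtrsim 25-18-1\approx 6\), hence a \(zz\)-entry of about \(3.9\). This is below \(\|M'x\|^2\approx4.16\), so your route does not even certify positive definiteness. Your stated target of a \(zz\)-entry ``above about \(5\)'' would give at most roughly \(0.14\,I\), not \(\tfrac45I\). Gershgorin is also unusable here: \(M(z)\) is a rotated \(\operatorname{diag}(3,1)\) whose off-diagonal entries reach \(1\).

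\textbf{The missing idea.} \(\lambda\) is not there only to be \(O(1)\) for the upper bound; its specific form is an exact cancellation. Since \(2z_T=\sigma\tilde\eta T/(12\omega)\), we have \(216\cdot 2z_T=2\tilde\eta hT\) exactly. So \(\lambda=25+2\tilde\eta hT\) gives
\[
R''_{\rm clock}(z)=\lambda-216z\ge 25-216(1+\delta)\gamma\,\omega^{-3/2}\ge 24
\quad\text{on all of } U, \text{ uniformly in } \omega.
\]
Equivalently, \(216\cdot2z_T=18(1-1/\omega)\) is small exactly when \(\omega\approx1\), which is where the cross term is largest.

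With this, take a direction \((y,s)\) and apply AM--GM:
\[
4(1+\delta)\omega^{-1/2}|s|\|y\|\le\tfrac15\|y\|^2+20(1+\delta)^2\omega^{-1}s^2.
\]
The quadratic form is then at least \(\tfrac45\|y\|^2+\bigl(24-22(1+\delta)^2\omega^{-1}\bigr)s^2\ge\tfrac45(\|y\|^2+s^2)\). This is exactly where \(\delta<0.02\) and \(\gamma\le1/221\) are used, and it yields \(\nabla^2\widetilde R\succeq\tfrac45 I\), hence \(\nabla^2R\succeq I\).
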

The next lemma verifies that the parameters in the clock coordinate were chosen
so that the reference sequence \(\{w_t\}_{t=0}^T\) is exactly generated by
MD updates.
\begin{lemma}[Exact reference trajectory]
\label{lem:exact_endpoint_final}
The points \(w_t=(0,z_t)\), \(0\le t\le T\), form an exact MD trajectory
\end{lemma}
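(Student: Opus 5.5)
The plan is to verify directly that each $w_{t+1}=(0,z_{t+1})$ satisfies the first-order optimality condition of the MD subproblem started at $w_t=(0,z_t)$. Here the MD subproblem at step $t$ is
\[
\Psi_t(w):=\eta\langle \nabla F(w_t),w\rangle+D_R(w,w_t),
\]
minimized over $\mathcal W$. We then conclude by convexity and uniqueness.

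\textbf{Step 1: reduction to an interior gradient equation.} By \cref{lem:regularity_endpoint_final}, $R$ is $1$-strongly convex on $\mathcal W$, so $\Psi_t$ is strongly convex on the convex set $\mathcal W$ and has a unique minimizer. The construction gives $\operatorname{dist}(w_{t+1},\mathcal W^c)\ge\frac{\gamma}{2}\omega^{-3/2}>0$, so $w_{t+1}$ is an interior point of $\mathcal W$. For a convex function, a vanishing gradient at an interior point certifies a global minimizer over $\mathcal W$. Since $\nabla\Psi_t(w)=\eta\nabla F(w_t)+\nabla R(w)-\nabla R(w_t)$, it therefore suffices to show
\[
\nabla R(w_{t+1})-\nabla R(w_t)=-\eta\nabla F(w_t),\qquad 0\le t\le T-1 .
\]

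\textbf{Step 2: gradients along the axis $x=0$.} The terms $\frac12x^\top M(z)x$ and $\frac{\sigma}{12}x^\top H(z)x$ are quadratic in $x$. Hence both their $x$-gradients ($M(z)x$ and $\frac{\sigma}{6}H(z)x$) and their $z$-derivatives ($\frac12x^\top M'(z)x$, and similarly for $H$) vanish at $x=0$. Consequently
\[
\nabla\widetilde R(0,z)=\bigl(0,\;\lambda z-\tfrac{12h\omega}{\sigma}z^2\bigr),\qquad
\nabla\widetilde F(0,z)=\bigl(0,\;-\mu+hz\bigr).
\]
The $x$-components of the required identity hold trivially. Using $R=\frac54\widetilde R$, $F=\frac1{65}\widetilde F$ and $\tilde\eta=\frac45\cdot\frac1{65}\eta$, the $z$-component becomes the scalar identity
\[
R_{\rm clock}'(z_{t+1})-R_{\rm clock}'(z_t)=\tilde\eta\,(\mu-hz_t).
\]

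\textbf{Step 3: the scalar check.} Write $\Delta:=\sigma\tilde\eta/(24\omega)$, so that $z_t=t\Delta$. The left side equals
\[
\lambda\Delta-\tfrac{12h\omega}{\sigma}\,\Delta\,(2z_t+\Delta).
\]
Since $\frac{12h\omega}{\sigma}\Delta=\frac{h\tilde\eta}{2}$, this simplifies to
\[
\lambda\Delta-h\tilde\eta\,t\Delta-\tfrac{h\tilde\eta}{2}\Delta .
\]
The right side is $\tilde\eta\mu-h\tilde\eta\,t\Delta$. The terms linear in $t$ coincide. The constant terms agree exactly when $\tilde\eta\mu=\Delta\bigl(\lambda-\frac{h\tilde\eta}{2}\bigr)$, that is, when
\[
\mu=\frac{\sigma}{24\omega}\Bigl(\lambda-\frac{\tilde\eta h}{2}\Bigr),
\]
which is precisely how $\mu$ was defined.

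I expect no real obstacle here; the computation is mechanical once Step 2 is in place. The only points needing care are checking that the $z$-derivatives of the rotating quadratic forms vanish on the axis, and keeping the rescalings $\frac54$, $\frac1{65}$ and $\tilde\eta$ consistent. The interiority bound is already recorded in the construction.
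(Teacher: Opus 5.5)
Your proposal is correct and follows essentially the same route as the paper. The transverse gradients vanish on the axis $x=0$, the clock identity $R_{\rm clock}'(z_{t+1})-R_{\rm clock}'(z_t)=\tilde\eta(\mu-hz_t)$ reduces exactly to the definition of $\mu$, and the rescalings by $\frac54$, $\frac1{65}$ and $\tilde\eta$ then give $\nabla R(w_{t+1})=\nabla R(w_t)-\eta\nabla F(w_t)$. Your Step 1 adds a justification the paper leaves implicit: convexity of the subproblem together with $w_{t+1}\in\mathcal W$ shows that this gradient identity identifies the constrained minimizer.
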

We next study the derivative of the mirror descent update along this reference
trajectory. The following lemma shows that, in a rotating frame, the transverse
linearized dynamics reduce to repeated multiplication by a single matrix
\(A_\sigma\), which has one expanding eigenvalue.
\begin{lemma}[Exponential transverse Jacobian]
\label{lem:linearized_endpoint_final}
Let \(\Phi\) denote the local MD update map,
\[
    \Phi(y):=(\nabla R)^{-1}
    \bigl(\nabla R(y)-\eta\nabla F(y)\bigr),
\]
and set
\(
    \theta_t:=\omega z_t.
\)
For \(0\le t\le T\), define
\[
    \mathcal J_t:=D\Phi(w_{t-1})\cdots D\Phi(w_0),
    \qquad
    \mathcal J_0:=I.
\]
Then the clock block of \(\mathcal J_t\) is \(1\), and its transverse block is
\[
    P(\theta_t)A_\sigma^t,
    \quad \text{where} \quad
    A_\sigma
    =
    B^{-1}P\left(-\frac{\sigma\tilde\eta}{24}\right)
    \left(B-\frac{\sigma\tilde\eta}{6}Q\right).
\]
For all \(\eta \leq \eta_0\) for a sufficiently small absolute constant $\eta_0$, the matrix \(A_\sigma\) has an expanding
eigenvalue \(\rho>1\) satisfying
\[
    c_{\rho}\sigma\eta\le \log\rho\le C_{\rho}\sigma\eta,
    \qquad
    \rho-1\ge c_{\rho}\sigma\eta.
\]
For some absolute constants $c_\rho, C_\rho >0$.
Moreover, the eigenbasis of \(A_\sigma\) corresponding to its two real
eigenvalues is uniformly well-conditioned.
\end{lemma}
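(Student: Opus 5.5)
The plan is to compute the Jacobian $D\Phi$ exactly along the reference trajectory, which is possible because all points $w_t=(0,z_t)$ have $x=0$, then telescope the rotations, and finally do a small-$s$ perturbation analysis of the fixed matrix $A_\sigma$.

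\emph{Block structure of $D\Phi(w_t)$.} By \cref{lem:exact_endpoint_final}, $\nabla R(w_t)-\eta\nabla F(w_t)=\nabla R(w_{t+1})$. The inverse function theorem, applicable since $\nabla^2 R\succeq I$ by \cref{lem:regularity_endpoint_final}, then gives
\[
D\Phi(w_t)=\nabla^2R(w_{t+1})^{-1}\bigl(\nabla^2R(w_t)-\eta\nabla^2F(w_t)\bigr).
\]
At a point $(0,z)$ the mixed $xz$-derivatives of $\widetilde R$ and $\widetilde F$ are of the form $M'(z)x$ and $H'(z)x$, so they vanish. Hence both Hessians are block diagonal:
\[
\nabla^2\widetilde R(0,z)=\operatorname{diag}\bigl(M(z),\,R_{\rm clock}''(z)\bigr),
\qquad
\nabla^2\widetilde F(0,z)=\operatorname{diag}\bigl(\tfrac{\sigma}{6}H(z),\,h\bigr).
\]
With the scalings $R=\frac54\widetilde R$ and $F=\frac1{65}\widetilde F$, the effective step in $\widetilde R$-units is $\eta\cdot\frac{1}{65}\cdot\frac45=\tilde\eta$.

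\emph{Clock block.} We have $R_{\rm clock}''(z)=\lambda-24h\omega z/\sigma$. Since $z_{t+1}-z_t=\sigma\tilde\eta/(24\omega)$, this gives $R_{\rm clock}''(z_{t+1})=R_{\rm clock}''(z_t)-\tilde\eta h$. The clock block is therefore
\[
\frac{R_{\rm clock}''(z_t)-\tilde\eta h}{R_{\rm clock}''(z_{t+1})}=1.
\]

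\emph{Transverse block.} Using $M(z)^{-1}=P(\omega z)B^{-1}P(\omega z)^\top$ and $H=PQP^\top$, the transverse block at step $t$ is
\[
P(\theta_{t+1})\,B^{-1}P(\theta_{t+1})^\top P(\theta_t)\Bigl(B-\tfrac{\sigma\tilde\eta}{6}Q\Bigr)P(\theta_t)^\top .
\]
Since $\theta_{t+1}-\theta_t=\omega(z_{t+1}-z_t)=\sigma\tilde\eta/24$, we get $P(\theta_{t+1})^\top P(\theta_t)=P(-\sigma\tilde\eta/24)$. So the block equals $P(\theta_{t+1})A_\sigma P(\theta_t)^\top$. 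Multiplying these blocks, the inner rotations cancel, and because $\theta_0=0$ the product is $P(\theta_t)A_\sigma^t$, as claimed.

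\emph{Spectrum of $A_\sigma$.} Put $s=\sigma\tilde\eta\le\tilde\eta\lesssim\eta_0$ and let $J$ be the rotation generator. Expanding $P(-s/24)=I-\frac{s}{24}J+O(s^2)$ gives
\[
A_\sigma=I+sK+O(s^2),\qquad K=-\tfrac1{24}B^{-1}JB-\tfrac16B^{-1}Q .
\]
A direct computation yields
\[
K=\begin{pmatrix}-\frac{11}{180}&-\frac1{24}\\-\frac{7}{24}&-\frac{11}{60}\end{pmatrix}.
\]
Here $\operatorname{tr}K\approx-0.244$ and $\det K=\frac{121}{10800}-\frac{7}{576}<0$. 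So $K$ has two real eigenvalues of opposite sign, $k_+>0$ and $k_-<0$, separated by an absolute gap.

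Standard perturbation theory for simple eigenvalues then gives $\rho=1+sk_++O(s^2)$, with eigenvectors within $O(s)$ of those of $K$. Choosing $\eta_0$ small enough that the $O(s^2)$ term is at most $sk_+/2$ yields $\rho-1\ge\frac{k_+}{2}s$ and $\log\rho\asymp s\asymp\sigma\eta$, since $\tilde\eta=\frac{4}{325}\eta$. For the same reason the eigenbasis of $A_\sigma$ stays uniformly well-conditioned.

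\emph{Main obstacle.} The delicate point is quantitative: $k_+$ is small (about $0.004$), so the $O(s^2)$ remainder must be bounded explicitly. I would do this by writing out the exact characteristic polynomial of $A_\sigma$ as a polynomial in $\sin(s/24)$, $\cos(s/24)$ and $s$, and then fixing $\eta_0$ from it, rather than relying on an unquantified Taylor argument.
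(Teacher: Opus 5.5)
Your proposal is correct and follows essentially the same route as the paper. Both compute the exact block-diagonal Jacobian along the axis, telescope the rotations in $P(\theta_{t+1})A_\sigma P(\theta_t)^\top$, and expand $A_\sigma = I + sG + O(s^2)$ with the same matrix $G$ and $\det G = -41/43200 < 0$. The explicit control of the $O(s^2)$ remainder that you flag as the main obstacle is not needed for the statement as posed: $A_\sigma$ depends analytically on the single scalar $s = \sigma\tilde\eta$, with $B$ and $Q$ fixed, so the unquantified perturbation argument the paper uses already yields absolute constants $\eta_0$, $c_\rho$ and $C_\rho$.
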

It remains to control the nonlinear error around the reference trajectory. We
do this in two steps. First, we use a general bound on the second derivative of
the mirror descent update map. Then we apply that bound to the present
construction in the rotating frame.
\begin{lemma}[Second derivative of the M update]
\label{lem:second_derivative_update}
Let \(U\subset\mathbb R^d\) be open, and let \(R,F\in C^4(U)\). Suppose that
on \(U\),
\[
    \nabla^2R\succeq I,
    \qquad
    \|D^3R\|\le M_3,
    \qquad
    \|D^4R\|\le M_4,
\]
and
\[
    \|D^2F\|\le N_2,
    \qquad
    \|D^3F\|\le N_3.
\]
Let
\[
    \Phi(y):=(\nabla R)^{-1}\bigl(\nabla R(y)-\eta\nabla F(y)\bigr)
\]
be locally defined near \(y\in U\). Assume that
\[
    \Phi(y)\in U,
    \qquad
    [y,\Phi(y)]\subset U,
    \qquad
    \|\Phi(y)-y\|\le \Delta .
\]
Then
\[
    \|B\Phi(y)-I\|
    \le
    M_3\Delta+\eta N_2,
\]
and
\[
    \|D^2\Phi(y)\|
    \le
    2\Bigl(
        M_4\Delta
        +\eta N_3
        +M_3(M_3\Delta+\eta N_2)
        +M_3(M_3\Delta+\eta N_2)^2
    \Bigr).
\]
\end{lemma}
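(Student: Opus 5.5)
The plan is to differentiate the defining identity of \(\Phi\) implicitly instead of working with \((\nabla R)^{-1}\) directly. (I read \(B\Phi(y)\) in the first display of the statement as the Jacobian \(D\Phi(y)\).)

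\emph{Setup.} Since \(\nabla^2R\succeq I\) on \(U\), the inverse function theorem makes \(\nabla R\) a local \(C^3\) diffeomorphism near \(\Phi(y)\). Hence \(\Phi\) is \(C^3\) near \(y\) and satisfies
\[
\nabla R(\Phi(y'))=\nabla R(y')-\eta\nabla F(y')
\]
for \(y'\) near \(y\). Two facts will be used throughout:
\begin{itemize}
\item \(\|\nabla^2R(\Phi(y))^{-1}\|\le 1\), because \(\nabla^2R\succeq I\) and \(\Phi(y)\in U\);
\item every difference of a derivative of \(R\) between \(y\) and \(\Phi(y)\) can be bounded by the mean value inequality along the segment \([y,\Phi(y)]\subset U\), using \(\|\Phi(y)-y\|\le\Delta\).
\end{itemize}

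\emph{First derivative.} Differentiating the identity once gives
\[
\nabla^2R(\Phi(y))\,D\Phi(y)=\nabla^2R(y)-\eta\nabla^2F(y).
\]
Rearranging,
\[
D\Phi(y)-I=\nabla^2R(\Phi(y))^{-1}\bigl(\nabla^2R(y)-\nabla^2R(\Phi(y))-\eta\nabla^2F(y)\bigr).
\]
The mean value inequality with \(\|D^3R\|\le M_3\) bounds \(\|\nabla^2R(y)-\nabla^2R(\Phi(y))\|\le M_3\Delta\), and \(\|\nabla^2F(y)\|\le N_2\). Together with \(\|\nabla^2R(\Phi(y))^{-1}\|\le 1\) this yields
\[
e:=\|D\Phi(y)-I\|\le M_3\Delta+\eta N_2,
\]
and in particular \(\|D\Phi(y)\|\le 1+e\).

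\emph{Second derivative.} Differentiating once more, applied to directions \(h,k\), gives
\[
D^3R(\Phi)[D\Phi h,D\Phi k,\cdot]+\nabla^2R(\Phi)\,D^2\Phi[h,k]=D^3R(y)[h,k,\cdot]-\eta D^3F(y)[h,k,\cdot].
\]
Hence \(D^2\Phi[h,k]\) equals \(\nabla^2R(\Phi)^{-1}\) applied to
\[
\bigl(D^3R(y)[h,k]-D^3R(\Phi)[D\Phi h,D\Phi k]\bigr)-\eta D^3F(y)[h,k].
\]

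The only real step is to split the difference of third derivatives so that the \(M_4\) term carries no factor \((1+e)^2\). I would not expand around \(y\). Instead I would write it as
\[
\bigl(D^3R(y)-D^3R(\Phi)\bigr)[h,k]-D^3R(\Phi)[(D\Phi-I)h,D\Phi k]-D^3R(\Phi)[h,(D\Phi-I)k].
\]
The three pieces are bounded as follows:
\begin{itemize}
\item the first by \(M_4\Delta\), via the mean value inequality with \(\|D^4R\|\le M_4\) on the segment;
\item the second by \(M_3e(1+e)\);
\item the third by \(M_3e\).
\end{itemize}
Adding \(\eta N_3\) for the \(F\) term and using \(\|\nabla^2R(\Phi)^{-1}\|\le1\) gives
\[
\|D^2\Phi(y)\|\le M_4\Delta+\eta N_3+2M_3e+M_3e^2\le 2\bigl(M_4\Delta+\eta N_3+M_3e+M_3e^2\bigr).
\]
Substituting the bound \(e\le M_3\Delta+\eta N_2\) then gives the claimed inequality.
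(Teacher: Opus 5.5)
Your proof is correct and follows the paper's argument: implicit differentiation of $\nabla R(\Phi(y))=\nabla R(y)-\eta\nabla F(y)$, the bound $\|\nabla^2R(\Phi(y))^{-1}\|\le 1$, mean-value estimates along $[y,\Phi(y)]$, and a split of the third-derivative difference in which $M_4$ multiplies only $\Delta$. The only cosmetic difference is that you group the bilinear expansion into two terms where the paper uses three; both give the same intermediate bound $M_4\Delta+\eta N_3+2M_3e+M_3e^2$.
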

We now introduce the rotating-frame norm used in the nonlinear estimate. Let
\(u_+\) and \(u_-\) be eigenvectors of \(A_\sigma\) with
\[
    A_\sigma u_+=\rho u_+,
    \qquad
    A_\sigma u_-=\rho_-u_-,
\]
where \(\rho>1\) is the expanding eigenvalue from
Lemma~\ref{lem:linearized_endpoint_final}. Normalize \(u_+\) so that
\(\|u_+\|_2=1\), and set
\[
    E_\sigma:=\begin{pmatrix}u_+&u_-\end{pmatrix}.
\]
By Lemma~\ref{lem:linearized_endpoint_final}, this eigenbasis is uniformly
well-conditioned. Thus, for some absolute constant \(\ell\ge1\),
\[
    \|E_\sigma\|,\|E_\sigma^{-1}\|\le \ell.
\]
For \(Y=(y,s)\in\mathbb R^2\times\mathbb R\), write
\[
    E_\sigma^{-1}y=
    \begin{pmatrix}\alpha\\ \beta\end{pmatrix},
\]
and define
\[
    \|Y\|_\#:=|\alpha|+|\beta|+\ell |s|.
\]
Then \(\|\cdot\|_\#\) is uniformly equivalent to the Euclidean norm and
\[
    \|(u_+,0)\|_\#=1.
\]
Finally, define the one-step linearized map in the rotating frame by
\[
    L(y,s):=(A_\sigma y,s).
\]
Then
\(
    \|LY\|_\#\le \rho\|Y\|_\#.
\)
The following lemma gives the desired quadratic control of the nonlinear
remainder in this rotating frame.
\begin{lemma}[Nonlinear control]
\label{lem:nonlinear_endpoint_final}
Let
\(
    b:=\varepsilon\rho^T.
\)
There are absolute constants \(K,c_b>0\) such that, if
\(
    b\le c_b\omega^{-3/2},
\)
then the following holds. If
\[
    Y_t:=\mathcal T_t(w'_t-w_t),
    \quad \text{and} \quad
    [w_t,w'_t]\subset B(w_t,Kb),
\]
then there exists some absolute constant $C_2>0$ such that
\[
    \|Y_{t+1}-LY_t\|_\#
    \le
    C_2\sigma\eta\omega^2(1+\sigma\eta\omega)\|Y_t\|_\#^2.
\]
\end{lemma}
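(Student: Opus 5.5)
The plan is a second-order Taylor expansion of the update map $\Phi$ around the reference point $w_t$, conjugated into the rotating frame. Here $\mathcal T_t$ is the frame change $\mathcal T_t(y,s)=(P(\theta_t)^\top y,s)$, which I take to be the definition implicit in the statement. With $\Delta_t:=w'_t-w_t$, Lemma~\ref{lem:exact_endpoint_final} gives $w_{t+1}=\Phi(w_t)$. Taylor's theorem with integral remainder along $[w_t,w'_t]$ then gives
\[
    w'_{t+1}-w_{t+1}
    =
    D\Phi(w_t)\Delta_t+r_t,
    \qquad
    \|r_t\|_2\le \tfrac12\sup_{y\in[w_t,w'_t]}\|D^2\Phi(y)\|\,\|\Delta_t\|_2^2 .
\]
By Lemma~\ref{lem:linearized_endpoint_final}, comparing $\mathcal J_{t+1}$ with $\mathcal J_t$, the one-step Jacobian has clock block $1$ and transverse block $P(\theta_{t+1})A_\sigma P(\theta_t)^\top$. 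Granting that $D\Phi(w_t)$ is block-diagonal along the reference trajectory (which the lemma's block description presupposes), we get $\mathcal T_{t+1}D\Phi(w_t)\mathcal T_t^{-1}=L$ exactly. Hence $Y_{t+1}-LY_t=\mathcal T_{t+1}r_t$. Since $\mathcal T_{t+1}$ is a Euclidean isometry and $\|\cdot\|_\#$ is equivalent to $\|\cdot\|_2$ with constants depending only on $\ell$, it suffices to show $\sup_{[w_t,w'_t]}\|D^2\Phi\|\lesssim \sigma\eta\omega^2(1+\sigma\eta\omega)$.

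For this I will apply Lemma~\ref{lem:second_derivative_update} on the set $U$ of Lemma~\ref{lem:regularity_endpoint_final}, with $M_3=C_1\omega$, $M_4=C_1\omega^2$, $N_2=C_1\sigma$ and $N_3=C_1\sigma\omega$. The remaining input is the displacement bound $\Delta$. Because $\nabla^2R\succeq I$ on the convex set $U$, $\nabla R$ is strongly monotone there. Hence $\|\Phi(y)-y\|_2\le \eta\|\nabla F(y)\|_2$, and the same monotonicity gives injectivity, so $\Phi$ is well defined locally. From the explicit form $\nabla\widetilde F=(\tfrac{\sigma}{6}H(z)x,\,-\mu+hz+O(\sigma\omega\|x\|^2))$, and using $\mu\lesssim\sigma/\omega$, $hz\lesssim\sigma/\omega$ and $\|x\|\le Kb\le Kc_b\omega^{-3/2}$, I expect $\|\nabla F(y)\|\lesssim\sigma/\omega$. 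This gives $\Delta\lesssim\sigma\eta/\omega$. Substituting into Lemma~\ref{lem:second_derivative_update}:
\begin{itemize}
\item $M_4\Delta\lesssim\sigma\eta\omega$ and $\eta N_3\lesssim\sigma\eta\omega$;
\item $M_3\Delta+\eta N_2\lesssim\sigma\eta$, so $M_3(M_3\Delta+\eta N_2)\lesssim\sigma\eta\omega$;
\item $M_3(M_3\Delta+\eta N_2)^2\lesssim\sigma^2\eta^2\omega$.
\end{itemize}
All of these are dominated by $\sigma\eta\omega^2(1+\sigma\eta\omega)$ because $\omega\ge1$. (The bound is therefore not tight, but only the stated form is needed.)

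The main obstacle is verifying the hypotheses $\Phi(y)\in U$ and $[y,\Phi(y)]\subset U$ for every $y$ on the segment $[w_t,w'_t]$. I will check the two directions separately.
\begin{itemize}
\item \emph{Transverse direction.} Here I bound the transverse displacement more sharply, by $\eta\sigma\|x\|\lesssim\eta\sigma Kb$. Since $y$ lies within $Kb\le Kc_b\omega^{-3/2}$ of the axis, choosing $c_b$ small relative to $\delta$ keeps $\|x\|<(1+\delta)r=(1+\delta)\omega^{-3/2}$ along the whole step.
\item \emph{Clock direction.} The step is of order $\sigma\tilde\eta/(24\omega)$, a single clock increment, up to $O(\sigma\eta b)$ corrections. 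Since $z_t\in[0,z_T]$, the image stays inside $(-(1+\delta)m,\,2z_T+(1+\delta)m)$. The margin $z_T$ at the top absorbs one step for every $T\ge1$, which is exactly why the feasible set extends to $2z_T$. At the bottom, the margin $(1+\delta)m$ exceeds $\|\Delta_t\|\le Kb$ once $c_b\lesssim\gamma/K$.
\end{itemize}
Once these inclusions hold, the bound above applies uniformly on the segment, and absorbing $\ell$ and $C_1$ into $C_2$ completes the argument.
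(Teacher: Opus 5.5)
Your proposal follows the paper's proof. You Taylor-expand $\Phi$ along $[w_t,w'_t]$, bound $\|D^2\Phi\|$ with Lemma~\ref{lem:second_derivative_update} using the constants of Lemma~\ref{lem:regularity_endpoint_final}, and conjugate by the rotating frame via $\mathcal T_{t+1}D\Phi(w_t)\mathcal T_t^{-1}=L$ together with norm equivalence. Your displacement bound $\|\Phi(y)-y\|_2\lesssim\sigma\eta/\omega$ is correct and sharper than the paper's $\lesssim\sigma\eta$. It gives $\|D^2\Phi\|\lesssim\sigma\eta\omega$, which is more than the statement needs.

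The step that fails as written is well-definedness of $\Phi$ on the segment. Strong monotonicity of $\nabla R$ on $U$ gives injectivity, i.e.\ uniqueness of $\Phi(y)$. It does not give existence of a point $u\in U$ with $\nabla R(u)=\nabla R(y)-\eta\nabla F(y)$. Your estimate $\|\Phi(y)-y\|_2\le\eta\|\nabla F(y)\|_2$ and your transverse/clock bookkeeping (which uses $\|D\Phi-I\|\lesssim\sigma\eta$ along the segment) both presuppose that $\Phi(y)$ exists and lies in $U$. So your verification of $\Phi(y)\in U$ is circular.

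You could close this with a continuation argument in $s$ along $y_s=w_t+s(w'_t-w_t)$. The paper's fix is cleaner: apply Lemma~\ref{lem:quantitative_inverse_branch_endpoint_local} at $w_{t+1}$, where $\tau=\operatorname{dist}(w_{t+1},U^c)\gtrsim\gamma\omega^{-3/2}$. Since $\nabla R(w_{t+1})=\nabla R(w_t)-\eta\nabla F(w_t)$ and $\|\nabla^2R\|+\eta\|\nabla^2F\|\lesssim1$ on $U$, you get $\|(\nabla R(y)-\eta\nabla F(y))-\nabla R(w_{t+1})\|_2\lesssim Kb<\tau/4$ once $c_b$ is small. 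The lemma then places $\Phi(y)$ in $B(w_{t+1},\tau/2)\subset U$ directly, on a smooth $1$-Lipschitz inverse branch. The inclusion $[y,\Phi(y)]\subset U$ follows from convexity of $U$, so your separate transverse and clock case analysis is not needed.
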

Finally, we use the following elementary bootstrap lemma to turn the one-step
quadratic error estimate into a uniform-in-time bound.
\begin{lemma}[Quadratic Bootstrap] \label{lem:quadratic_bootstrap}
Let $(q_t)_{t=0}^T$ be a sequence of nonnegative numbers with $q_0=1$, and suppose
\[
q_t\le q_{t-1}+a_t q_{t-1}^2
\qquad\text{for }1\le t\le T,
\]
where $a_t\ge 0$ and
\[
\sum_{t=1}^T a_t\le \frac14.
\]
Then
\[
q_t\le 2
\qquad\text{for all }0\le t\le T.
\]
\end{lemma}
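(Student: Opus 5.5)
We argue by strong induction on \(t\), proving the stronger statement that \(q_s\le 2\) for all \(s\le t\). The base case is immediate: \(q_0=1\le 2\).

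For the inductive step, fix \(t\ge1\) and assume \(q_s\le 2\) for every \(0\le s\le t-1\). We then telescope the recursion instead of applying it only once. Summing \(q_s-q_{s-1}\le a_s q_{s-1}^2\) over \(s=1,\dots,t\) and using \(q_{s-1}^2\le 4\) gives
\[
q_t \le q_0+\sum_{s=1}^t a_s q_{s-1}^2 \le 1+4\sum_{s=1}^T a_s \le 1+4\cdot\tfrac14 = 2 .
\]
This is the only step needing care. Applying the hypothesis to a single step would lose control, since it only gives \(q_t\le q_{t-1}+4a_t\) from \(q_{t-1}\le 2\), which does not close the induction. The cumulative bound avoids this because it uses the budget \(\sum_t a_t\le 1/4\) together with the uniform bound \(4\) on the squared terms from all earlier indices. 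Nonnegativity of the \(a_s\) guarantees that extending the sum from \(t\) to \(T\) only enlarges it.

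This proves \(q_t\le 2\) for every \(0\le t\le T\). In the proof of Theorem~\ref{thm:endpoint_exponential_origin_local_feasible_set}, the lemma will be applied with \(q_t\) a normalized version of \(\|Y_t\|_\#\), for example \(\|Y_t\|_\#/(\varepsilon\rho^t)\). The coefficients \(a_t\) come from Lemma~\ref{lem:nonlinear_endpoint_final} and are proportional to \(\sigma\eta\omega^2\varepsilon\rho^{t-1}\). Their sum is then controlled by choosing \(b=\varepsilon\rho^T\) small relative to \(\omega^{-3/2}\).
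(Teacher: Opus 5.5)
Your proof is correct and matches the paper's: the paper argues by minimal counterexample (take the first $t_*$ with $q_{t_*}>2$, then telescope up to $t_*$ using $q_{t-1}\le 2$ for $t\le t_*$ to get $q_{t_*}\le 1+4\sum_t a_t\le 2$), which is your strong induction in contrapositive form. A minor correction to your closing remark: in the application, the budget $\sum_k a_k\le \tfrac14$ comes from the condition $C_3\omega^2(1+\sigma\eta\omega)b\le \tfrac14$, not from $b\le c_b\omega^{-3/2}$, which is a separate condition that keeps the perturbed iterates inside $\mathcal W$.
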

We are now ready to prove the theorem.
\begin{proof}[Proof of Theorem~\ref{thm:endpoint_exponential_origin_local_feasible_set}]
We use the construction above, with \(\sigma\in(0,1]\) to be chosen below. By
Lemma~\ref{lem:regularity_endpoint_final}, the functions \(R,F\) have all the
required regularity properties on \(\mathcal W\), and
\(\operatorname{diam}(\mathcal W)\le3\). By
Lemma~\ref{lem:exact_endpoint_final}, \(w_t=(0,z_t)\) is an exact MD trajectory.

Let \(u_+\) be a unit expanding eigenvector of \(A_\sigma\), and set
\[
    v:=(u_+,0),
    \qquad
    w'_0:=\varepsilon v.
\]
For this trajectory, define
\[
    Y_t:=\mathcal T_t(w'_t-w_t),
    \qquad 0\le t\le T.
\]
from the notations above. Let
\(
    b:=\varepsilon\rho^T.
\)
Assume for the moment that
\(
    b\le c_b\omega^{-3/2},
\)
for small enough $c_b$ such that $4\ell b \leq \frac{\gamma}{2}\omega^{-3/2}$, where $\ell$ is the constant of the norm equivalence, $\|Y\|_2 \leq \sqrt{2}\ell\|Y\|_\#$. We will show at the end of the proof that we choose $\sigma$ such that this will hold. We will now prove by induction that:
\[
    \|Y_t\|_\#\le2\varepsilon\rho^t, \quad  [w_t, w_t'] \subset B(w_t, 4\ell b)  \qquad
    0\le t\le T.
\] 
For $t=0$ we have $\|Y_t\|_\#, \|w_0 -w_0'\|_2 =\varepsilon \leq b < 4\ell b$. Now assume this holds for all $0 \leq k \leq t-1$ and we will prove for $t$.
Using the inductive assumption Lemma~\ref{lem:nonlinear_endpoint_final}
gives for every $k \in [t-1]$
\[
    \|Y_{k+1}-LY_k\|_\#
    \le
    C_2\sigma\eta\omega^2(1+\sigma\eta\omega) \|Y_k\|_\#^2.
\]
Set
\[
    q_k:=\frac{\|Y_k\|_\#}{\varepsilon\rho^k}.
\]
Since \(\|(u_+,0)\|_\#=1\), we have \(q_0=1\). The nonlinear recurrence gives
\[
    q_{k}
    \le
    q_{k-1}+
    C_2\sigma\eta\left(\omega^2(1+\sigma\eta\omega) \right)\varepsilon\rho^{k-2}q_{k-1}^2.
\]
Since \(\rho\ge1\), we may weaken this to
\[
    q_{k}
    \le
    q_{k-1}+
    C_2\sigma\eta\left(\omega^2(1+\sigma\eta\omega) \right)\varepsilon\rho^{k-1} q_{k-1}^2.
\]
Befine $a_{k} := C\sigma\eta\left(\omega^2(1+\sigma\eta\omega) \right)\varepsilon\rho^{k-1}$ we have:
\[
    q_{k}
    \le
    q_{k-1}+
    a_kq_{k-1}^2.
\]
Moreover,
\[
    \sigma\eta\sum_{k=1}^{t}\varepsilon\rho^{k-1}
    = \sigma\eta\frac{\varepsilon\rho^t}{\rho-1}
    \lesssim b,
\]
because \(b=\varepsilon\rho^T\) and \(\rho-1\ge c_{\rho}\sigma\eta\). Hence there exists an absolute constant $C_3 > 0$ such that,
\[
    \sum_{k=1}^t a_k = \sum_{k=1}^{t}
    C_2\sigma\eta\left(\omega^2(1+\sigma\eta\omega) \right)\varepsilon\rho^{k-1}
    \le
    C_3\omega^2(1+\sigma\eta\omega)b.
\]
If \(C_3\left(\omega^2(1+\sigma\eta\omega) \right) b \leq \frac{1}{4}\) (we will choose $\sigma$ later so that this will hold) , \cref{lem:quadratic_bootstrap} gives
\(
    q_k\le2 \text{ for }
    0\le k\le t.
\)
Thus
\[
    \|Y_{t}\|_\#\le2\varepsilon\rho^t
\]
In particular, by norm equivalence and since rotations are isometric and $\|Y\|_2 \leq \sqrt{2}\ell \|Y\|_\#$,
\[
    \|w'_{t}-w_{t}\|_2\le 2\sqrt{2}\ell b < 4\ell b.
\]
Hence, \([w_t,w_t'] \subset B(w_t, 4\ell b)\). This concludes the induction. Thus we have shown for all $0 \leq t \leq T$, 
\[
    \|Y_t\|_\# \leq 2\varepsilon \rho^t, \quad \|w_t - w_t'\|_2 < 4\ell b 
\]
Since
\(
    \operatorname{dist}(w_t,\mathcal W^c)\ge \frac{\gamma}{2}\omega^{-3/2},
\)
the condition \(b\le c_b\omega^{-3/2}\), for small enough $c_b$, ensures that
\(w'_t\in\mathcal W\) for all \(0\le t\le T\).
For the endpoint comparison, note that
\[
    Y_T-L^TY_0
    =
    \sum_{j=0}^{T-1}L^{T-1-j}\left(Y_{j+1}-LY_j\right).
\]
Using \( \|L Y\|_\# \leq \rho \|Y\|_\#\) and \(\|Y_j\|_\#\le2\varepsilon\rho^j\), we get
\begin{align*}
     \|Y_T-L^TY_0\|_\#
    &\le C_3\sigma\eta\omega^2(1+\sigma\eta\omega)\varepsilon^2
    \sum_{j=0}^{T-1}L^{T-1-j}\|Y_j\|_\#^2 \leq 4C_3\sigma\eta\omega^2(1+\sigma\eta\omega)\varepsilon^2
    \sum_{j=0}^{T-1}\rho^{T-1-j}\rho^{2j}\\
    &\le4C_3\sigma\eta\omega^2(1+\sigma\eta\omega)\varepsilon^2
    \sum_{j=0}^{T-1}\rho^{T-1+j} = 4C_3\sigma\eta\omega^2(1+\sigma\eta\omega)\varepsilon^2
    \cdot \frac{\rho^{2T-1}}{1-\rho}\\
    &\leq 4C_3\sigma\eta\omega^2(1+\sigma\eta\omega)\varepsilon^2
    \cdot \frac{\rho^{2T}}{1-\rho} \tag{$\rho>1$}
\end{align*}
Since \(\rho-1\ge c_{\rho}\sigma\eta\), there exists an absolute constant $C_5>0$ s.t.
\begin{align*}
    \|Y_T-L^TY_0\|_\#  =
    C_5\omega^2(1+\sigma\eta\omega) \varepsilon^2\rho^{2T}
    =
    C_5\omega^2(1+\sigma\eta\omega)  b^2
\end{align*}
Again, from the norm equivalence there exists an absolute constant $C_6 > 0$ such that:
\[
    \|Y_T-L^TY_0\|_2
    \le
    C_6\left(\omega^2(1+\sigma\eta\omega) \right) b^2.
\]
Since \(Y_0=\varepsilon(u_+,0)\), we have \(\|L^TY_0\|_2=b\). Therefore, if
\[
    C_6\left(\omega^2(1+\sigma\eta\omega) \right) b\le\frac12,
\]
then
\[
    \|w'_T-w_T\|_2 = \|Y_T\|_2 \geq \|L^TY_0\|_2 - \|Y_T - L^TY_0\| \ge\frac12 b.
\]
It remains to choose \(\sigma\) so that \(b\) has the desired size and the following two
smallness conditions hold:
\[
b \leq c_b \omega^{-3/2} \quad \text{and} \quad \left(\max\{C_6,2C_3\}\cdot\omega^2(1+\sigma\eta\omega)  b\right)\le\frac12.
\]
Set
\[
    L_\varepsilon:=1+\log\frac1\varepsilon,
    \qquad
    B_\varepsilon:=\frac{a}{L_\varepsilon^3},
\]
where \(a>0\) is a sufficiently small absolute constant that will be chosen later. For a fixed $a$, Let $\varepsilon_0(a)$ be such that $\varepsilon \leq B_{\varepsilon}$ for any $\varepsilon\leq \varepsilon_0(a)$. Indeed there exists such a constant since $ \varepsilon\left(1+\log\frac1\varepsilon\right)^3\downarrow0$ as $\varepsilon\downarrow0$. 
We choose
\(\sigma\in(0,1]\) so that
\[
    \varepsilon\rho(\sigma)^T
    =
    \min\{B_\varepsilon,\varepsilon e^{c\eta T}\}.
\]
for some absolute constant $c>0$. This choice is possible by the intermediate value theorem. Indeed,
\(\rho(\sigma)\) depends continuously on \(\sigma\), and
\(\rho(\sigma)\downarrow1\) as \(\sigma\downarrow0\). Thus
\[
    \varepsilon\rho(\sigma)^T\to\varepsilon.
\]
For $\varepsilon \leq \varepsilon_0(a)$, 
\[
    B_\varepsilon\ge\varepsilon,
\]
On the other hand, when \(\sigma=1\), the lower bound
\(\log\rho(1)\ge c_{\rho}\eta\) gives
\[
    \varepsilon\rho(1)^T\ge\varepsilon e^{c\eta T},
\]
after decreasing the absolute constant \(c>0\) if necessary. Hence such a
\(\sigma\in(0,1]\) exists.
For this choice,
\[
    T\log\rho
    =
    \log\frac{b}{\varepsilon}
    \le
    \log\frac1\varepsilon.
\]
Since \(\sigma\eta\lesssim \log\rho\), we obtain
\[
    \sigma\eta T\lesssim L_\varepsilon.
\]
Therefore
\[
    \omega=1+\sigma\tilde\eta T \lesssim  L_\varepsilon.
\]
Consequently,
\[
    \omega^2(1+\sigma\eta\omega)
    \lesssim L_\varepsilon^3.
\]
Because
\[
    b\le B_\varepsilon=aL_\varepsilon^{-3},
\]
we get that there exists an absolute constant $C_7> 0$ such that 
\[
    \max\{C_6,2C_3\} \cdot \left(\omega^2(1+\sigma\eta\omega) \right) b\le C_7a.
\]
Choosing \(a\leq \frac{1}{2C_7}\) gives
\[
    \max\{C_6,2C_3\}\cdot \left(\omega^2(1+\sigma\eta\omega) \right) b\le\frac14.
\]
Finally, since \(\omega\lesssim L_\varepsilon\)  there exists some absolute constant $C_8$ such that,
\[
    L_\varepsilon^{-3/2} \leq C_8 \omega^{-3/2}.
\]
On the other hand,
\[
    b\le aL_\varepsilon^{-3} \leq aL_\varepsilon^{-3} \leq aL_\varepsilon^{-3/2} \leq C_8a\omega^{-3/2}.
\]
choosing $a \leq \frac{c_b}{C_8}$ would ensure 
\[
    b \leq c_b\omega^{-3/2}
\]
Thus choosing the following absolute constants, 
\[
    a = \min \left\{\frac{1}{e}, \frac{c_b}{C_8},\frac{1}{4C_7} \right\} , \quad \varepsilon_0 = \varepsilon_0(a)
\]
and the $\sigma$ that follows ensures both smallness conditions hold. Therefore
\[
    \|w'_T-w_T\|_2
    \ge
    \frac12 b
    =
    \frac12
    \min\{B_\varepsilon,\varepsilon e^{c\eta T}\}.
\]
Hence,
\[
    \|w'_T-w_T\|_2
    = \Omega \left(\min\left\{
        \frac1{(1+\log(1/\varepsilon))^3},
        \varepsilon e^{c\eta T}
    \right\}\right).
\]
This proves the theorem. 
\end{proof}

\subsection{Proofs of Auxiliary Lemmas}\label{sec:aux-lemmas-proof}
For the proof of the Auxiliary lemmas we will use the following lemmas:

\begin{lemma}[Quantitative inverse branch]
\label{lem:quantitative_inverse_branch_endpoint_local}
Let \(U\subseteq\mathbb R^d\) be open and convex, and let
\(R\in C^2(U)\) satisfy
\[
    \nabla^2R(x)\succeq I
    \qquad
    \forall x\in U.
\]
Let \(u_0\in U\), and set
\[
    \tau:=\operatorname{dist}(u_0,U^c)>0.
\]
Then
\[
    B\left(\nabla R(u_0),\frac \tau4\right)
    \subseteq
    \nabla R\bigl(B(u_0,\tau/2)\bigr).
\]
Moreover, the inverse branch of \(\nabla R\) on this ball is \(1\)-Lipschitz:
if
\(
    u_i\in B(u_0,\tau/2) \text{ for }
    i=1,2,
\)
then
\[
    \|u_1-u_2\|_2\le \|\nabla R(u_1)- \nabla R(u_2)\|_2.
\]
\end{lemma}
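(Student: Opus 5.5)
The plan is to treat $\nabla R$ as a strongly monotone map and prove the two claims in reverse order. The Lipschitz claim is immediate. For the covering claim, I would not invoke the inverse function theorem directly, since it gives no quantitative radius. Instead I would obtain the inverse by minimizing a strongly convex function over the closed ball.

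\emph{Lipschitz claim.} Since $U$ is convex and $\nabla^2R\succeq I$ on $U$, for $u_1,u_2\in U$ the segment $[u_1,u_2]$ lies in $U$. Integrating the Hessian along the segment gives
\[
    \langle\nabla R(u_1)-\nabla R(u_2),u_1-u_2\rangle
    =\int_0^1 (u_1-u_2)^\top\nabla^2R\bigl(u_2+s(u_1-u_2)\bigr)(u_1-u_2)\,ds
    \ge\|u_1-u_2\|_2^2 .
\]
Cauchy--Schwarz then yields $\|u_1-u_2\|_2\le\|\nabla R(u_1)-\nabla R(u_2)\|_2$. In particular $\nabla R$ is injective on $U$, so the inverse branch is well defined and $1$-Lipschitz on any image set.

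\emph{Covering claim.} Fix $g$ with $\|g-\nabla R(u_0)\|_2<\tau/4$ and set $\bar B:=\overline{B}(u_0,\tau/2)$. Because $\tau=\operatorname{dist}(u_0,U^c)$, we have $\bar B\subset B(u_0,\tau)\subseteq U$. Define $\psi(u):=R(u)-\langle g,u\rangle$, which is continuous and $1$-strongly convex on $\bar B$. Hence it has a unique minimizer $u^\star\in\bar B$. The key step is to show $u^\star$ lies in the open ball, so that it is an unconstrained critical point and $\nabla R(u^\star)=g$. For any $u$ with $\|u-u_0\|_2=\tau/2$, strong convexity gives
\[
    \psi(u)-\psi(u_0)\ge\langle\nabla R(u_0)-g,u-u_0\rangle+\tfrac12\|u-u_0\|_2^2
    >-\tfrac{\tau}{4}\cdot\tfrac{\tau}{2}+\tfrac12\cdot\tfrac{\tau^2}{4}=0 .
\]
So every boundary point has strictly larger value than the center, and the minimizer cannot lie on the sphere. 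The first-order condition at an interior minimizer then gives $\nabla R(u^\star)=g$, proving $B(\nabla R(u_0),\tau/4)\subseteq\nabla R(B(u_0,\tau/2))$.

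The main obstacle is the boundary-exclusion inequality. The constants $\tau/4$ and $\tau/2$ are balanced exactly so that the linear term $\tau^2/8$ is strictly dominated by the quadratic term $\tau^2/8$, with strictness coming from $\|g-\nabla R(u_0)\|_2<\tau/4$. An alternative route is to apply the Lipschitz estimate directly: $\|u^\star-u_0\|_2\le\|g-\nabla R(u_0)\|_2<\tau/4$. However, that argument presupposes $\nabla R(u^\star)=g$, so it is circular; the variational comparison above is the clean way to avoid this.
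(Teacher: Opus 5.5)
Your proposal is correct and follows the paper's proof essentially verbatim. The paper also minimizes $R(u)-\langle p,u\rangle$ over $\overline{B}(u_0,\tau/2)$, rules out boundary minimizers by the same strong-convexity comparison against $u_0$ (balancing $\tfrac{\tau}{4}\cdot\tfrac{\tau}{2}$ against $\tfrac12(\tau/2)^2$), and gets the Lipschitz bound from strong monotonicity plus Cauchy--Schwarz; your Hessian-integral justification of strong monotonicity, using convexity of $U$, just makes explicit a step the paper states without proof.
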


\begin{proof}[Proof of the \cref{lem:quantitative_inverse_branch_endpoint_local}]
Fix
\(
    p\in B\left(\nabla R(u_0),\frac \tau4\right).
\)
Consider
\[
    \varphi_p(u):=R(u)-\langle p,u\rangle
\]
on the compact ball \(\overline B(u_0,\tau/2)\subset U\). Let \(u_p\) be a
minimizer of \(\overline B(u_0,\tau/2)\) that is $\varphi_p$. If $u$ is a boundary point of  \(\|u-u_0\|_2=\tau/2\), then strong convexity gives
\begin{align*}
    \varphi_p(u)-\varphi_p(u_0)
    &\ge
    \langle\nabla R(u_0)-p,u-u_0\rangle
    +
    \frac12\|u-u_0\|_2^2 \\
    &\ge
    -\|\nabla R(u_0)-p\|_2\|u-u_0\|_2
    +
    \frac12\|u-u_0\|_2^2 \\
    & >  \frac{\tau}{4} \cdot \frac{\tau}{2} + \frac{1}{2} \cdot \left( \frac{\tau}{2}\right)^2
    =0.
\end{align*}
Thus \(u_p\) is interior, and hence
\[
    \nabla R(u_p)=p.
\]
This proves the image inclusion.
For the Lipschitz estimate, strong monotonicity gives
\[
    \|u_1-u_2\|_2^2\leq  \langle\nabla R(u_1)-\nabla R(u_2),u_1-u_2\rangle
    \leq \|\nabla R(u_1)-\nabla R(u_2)\|_2\|u_1-u_2\|_2.
\]
Therefore
\[
    \|u_1-u_2\|_2\le \|\nabla R(u_1)-\nabla R(u_2)\|_2.
\]
\end{proof}

\subsubsection{Proof of Lemma~\ref{lem:regularity_endpoint_final}}

First we note that the rotating blocks satisfy
\[
    I\preceq M(z)\preceq3I,
    \qquad
    \frac1{10}I\preceq H(z)\preceq\frac{21}{10}I.
\]
Moreover the $j$-th derivative satisfies,
\[
    \|M^{(j)}(z)\|+\|H^{(j)}(z)\|
    \le c_j\omega^j,
    \qquad
    j\ge1.
\]
For some constants $c_j$.
On \(U\), we have
\[
    \|x\|_2
    \le
    (1+\delta)\omega^{-3/2}.
\]
From direct calculation,
\[
    \|M'(z)x\|_2\le 2\omega \cdot (1+\delta)\omega^{-3/2} = 2(1+\delta) \omega^{-1/2},
\]
and
\[
    |x^\top M''(z)x|\le 4\omega^2 \cdot (1+\delta)^2\omega^{-3} = 4(1+\delta)^2\omega^{-1}.
\]
The Hessian of \(\widetilde R\) is
\[
    \nabla^2\widetilde R(x,z)
    =
    \begin{pmatrix}
    M(z)&M'(z)x\\
    x^\top M'(z)&R_{\rm clock}''(z)+\frac12x^\top M''(z)x
    \end{pmatrix}.
\]
On \(U\),
\[
    z\le 2z_T+(1+\delta)m,
    \qquad
    2z_T=\frac{\sigma\tilde\eta T}{12\omega}.
\]
Since \(h=9\sigma/\omega\), the main part of the clock curvature cancels:
\begin{align*}
     R_{\rm clock}''(z) &= \lambda - \frac{24 h \omega}{\sigma}z \geq \lambda - \frac{24 h \omega}{\sigma}(2z_T + (1+\delta)m)
\end{align*}
Since,
\[ 
    \frac{24h\omega}{\sigma}\cdot 2z_T
    =
    2\tilde\eta h T.
\]
This cancels the \(2\tilde\eta h T\) contribution in
\(\lambda=25+2\tilde\eta h T\). The extra loss from the padding is at most
\[
    \frac{24h\omega}{\sigma}(1+\delta)m
    =
    216(1+\delta)\gamma\omega^{-3/2}
    \le
    216(1+\delta)\gamma.
\]
From the choice of $\gamma$, we get
\[
    R_{\rm clock}''(z)
    =
    \lambda-\frac{24h\omega}{\sigma}z
    \ge24
    \qquad
    \text{on }U.
\]
Therefore, for \((y,s)\in\mathbb R^2\times\mathbb R\),
\begin{align*}
    \left\langle
    \nabla^2\widetilde R(x,z)(y,s),(y,s)
    \right\rangle & = y^\top M(z) y + 2sy^\top M'(z)x + s^2\left( R_{\rm clock}''(z)+\frac12x^\top M''(z)x\right)
    \\
    & \geq
    \|y\|_2^2
    -4(1+\delta)\omega^{-1/2}|s|\|y\|_2
    +(24-2(1+\delta)^2\omega^{-1})s^2.
\end{align*}
Now notice that:
\[
    -4(1+\delta)\omega^{-1/2}|s|\|y\|_2 \geq - \frac{1}{5} \|y\|_2^2 - 20(1+\delta)^2\omega^{-1}s^2
\]
Plugging that back,
\begin{align*}
    \left\langle
    \nabla^2\widetilde R(x,z)(y,s),(y,s)
    \right\rangle & \geq 
    \|y\|_2^2
    -\frac{1}{5} \|y\|_2^2 - 20(1+\delta)^2\omega^{-1}s^2
    +(24-2(1+\delta)^2\omega^{-1})s^2\\
    & \geq  \frac{4}{5} \|y\|_2^2
    +(24-22(1+\delta)^2\omega^{-1})s^2 \\
    &\geq  \frac{4}{5} \|y\|_2^2
    +\frac{4}{5}s^2 \tag{$\omega \geq 1$, choice of $\delta$}
\end{align*}

Hence
\[
    \nabla^2\widetilde R\succeq\frac45I.
\]
Since \(R=(5/4)\widetilde R\), this gives
\[
    \nabla^2R\succeq I.
\]
For the upper bound,
\[
    R_{\rm clock}''(z)+\frac12x^\top M''(z)x \leq  25 + 2\tilde\eta h T + 216(1+\delta)\gamma \leq 25 + 18 + 216(1+\delta)\gamma
\]
which is bounded by an absolute constant. We get
\begin{align*}
    \|\nabla^2 \widetilde R(x,z)\| & \leq \|M(z)\| + 2\|M'(z)x\| + |R_{\rm clock}''(z)+\frac12x^\top M''(z)x| \\
    & \leq 4(1+\delta)^2\omega^{-1} + 4(1+\delta)\omega^{-1/2} + |R_{\rm clock}''(z)+\frac12x^\top M''(z)x|
\end{align*}
Hence there exists an absolute constant $C_0$, such that 
\[
    \nabla^2R \preceq C_0I
\]
We now prove convexity of \(\widetilde F\) with explicit constants. Its Hessian is
\[
    \nabla^2\widetilde F(x,z)
    =
    \begin{pmatrix}
    \frac\sigma6H(z)&\frac\sigma6H'(z)x\\
    \frac\sigma6x^\top H'(z)&
    h+\frac\sigma{12}x^\top H''(z)x
    \end{pmatrix}.
\]
Because the eigenvalues of \(Q\) are \(1/10\) and \(21/10\),
\[
    \frac1{10}I\preceq H(z)\preceq\frac{21}{10}I.
\]
Also, directly from the rotating form,
\[
    \|H'(z)\|\le2\omega,
    \qquad
    \|H''(z)\|\le4\omega^2.
\]
The upper-left block satisfies
\[
    \frac\sigma6H(z)
    \succeq
    \frac{\sigma}{60}I,
\]
so its inverse has operator norm at most \(60/\sigma\). The off-diagonal block
satisfies
\[
    \left\|\frac\sigma6H'(z)x\right\|_2
    \le
    \frac\sigma6\cdot2\omega\cdot(1+\delta)\omega^{-3/2}
    =
    \frac{1+\delta}{3}\sigma\omega^{-1/2}.
\]
Therefore the Schur-complement loss is at most
\begin{align*}
      &\left(\frac\sigma6x^\top H'(z)\right)\left(\frac\sigma6H(z)\right)^{-1}\left(\frac\sigma6H'(z)x\right) \leq \frac6\sigma\|H(z)^{-1}\| \cdot \frac\sigma6  \| H'(z)x\|_2^2\\
      & \leq \frac{60}{\sigma} \left( \frac{(1+\delta) \omega^{-1/2} \sigma}{3} \right)^2 = \frac{20}{3}(1+\delta)^2\frac{\sigma}{\omega}.
\end{align*}
For the lower-right block, using \(h=9\sigma/\omega\),
\begin{align*}
    h+\frac\sigma{12}x^\top H''(z)x
    &\ge
    \frac{9\sigma}{\omega}
    -
    \frac\sigma{12}\cdot4\omega^2\cdot(1+\delta)^2\omega^{-3} \\
    &=
    \left(9-\frac{(1+\delta)^2}{3}\right)
    \frac{\sigma}{\omega}.
\end{align*}
Thus the Schur complement is at least
\[
    \left(
        9-\frac{(1+\delta)^2}{3}
        -\frac{20}{3}(1+\delta)^2
    \right)
    \frac{\sigma}{\omega}
    =
    \left(9-7(1+\delta)^2\right)
    \frac{\sigma}{\omega}.
\]
By the choice of \(\delta\), this is nonnegative. Hence
\[
    \nabla^2\widetilde F\succeq0
    \qquad
    \text{on }U.
\]

The same block estimates give the sharper bound
\begin{align*}
    \|\nabla^2 \widetilde F(x,y)\| & \leq \|\frac\sigma6H(z)\| + 2\|\frac\sigma6H'(z)x\|_2 +|h+\frac\sigma{12}x^\top H''(z)x| \\
    & \leq \frac{7\sigma}{20} + 2\cdot \frac{(1+\delta)\sigma \omega^{-1/2}}{3} + 9\sigma + \frac{(1+\delta)^2\sigma \omega^{-1}}{3} \\
    & \leq \left(\frac{7}{20} + \frac{4}{3} + 9 + \frac{2}{3}\right) \sigma \leq 12\sigma  \tag{$\delta \leq 1 \leq  \omega $} 
\end{align*}
Therefore
\[
    \|D^2F\|
    =
    \frac{1}{65}\|\nabla^2\widetilde F\|
    \le
    12\frac{1}{65}\sigma.
\]

Next,
\[
    \|\nabla_x\widetilde F(x,z)\|_2 =\|\frac\sigma6H(z)x\| \leq \frac{(1+\delta)\sigma w^{-1/2}}{3} \leq \frac{2}{3}\sigma.
\]
Also,
\begin{align*}
     |\partial_z\widetilde F(x,z)|
    &=
    |-\mu+
    h z+
    \frac\sigma{12}x^\top H'(z)x| \\
    & \leq\frac{\sigma}{24 \omega} \left(25 + 2 \tilde\eta h T - \frac{\tilde h}{2} \right) + 9\sigma(2z_T + (1+\delta)\gamma \omega^{-3/2}) + \frac{(1+\delta)\omega^{-1/2}\sigma}{3}\\
    & \leq \sigma(25 + 18) + 9\sigma\left(\frac{2}{24} + 2\right) + \frac{2}{3} \sigma \leq 63 \sigma .
\end{align*}
Hence
\[
    \|\nabla\widetilde F\|_2\le 65\sigma.
\]
Since \(\frac{1}{65}< \frac{1}{65}\), \(F=\frac{1}{65}\widetilde F\) convex,
\(1\)-Lipschitz, and \(1\)-smooth.
Finally, we justify the higher derivative bounds. Since
\[
\widetilde R(x,z)=\frac12x^\top M(z)x+R_{\rm clock}(z),
\]
and the first term is quadratic in \(x\), every derivative with more than two
\(x\)-derivatives vanishes. The nonzero third derivatives coming from
\(\frac12x^\top M(z)x\) are bounded by
\[
\|M'(z)\|\lesssim \omega,\qquad
\|M''(z)x\|_2\lesssim\omega^2\omega^{-3/2}=\omega,
\]
and
\[
|x^\top M^{(3)}(z)x|
\lesssim\omega^3\omega^{-3}=\omega.
\]
The nonzero fourth derivatives are bounded by
\[
\|M''(z)\|\lesssim\omega^2,\qquad
\|M^{(3)}(z)x\|_2\lesssim\omega^3\omega^{-3/2}=\omega^2,
\]
and
\[
|x^\top M^{(4)}(z)x|
\lesssim\omega^4\omega^{-3}=\omega^2.
\]
Also,
\[
R_{\rm clock}^{(3)}(z)
=
-\frac{24h\omega}{\sigma}
=
-216,
\qquad
R_{\rm clock}^{(4)}(z)=0.
\]
Thus for and absolute constant $C_R$ large enough, 
\[
\|D^3R\|\le C_R\omega,
\qquad
\|D^4R\|\le C_R\omega^2.
\]

Similarly,
\[
\widetilde F(x,z)
=
-\mu z+\frac h2z^2+\frac{\sigma}{12}x^\top H(z)x.
\]
The clock part is quadratic, so it contributes no third derivatives. The
nonzero third derivatives coming from the rotating block are bounded by
\[
\sigma\|H'(z)\|\lesssim\sigma\omega,
\qquad
\sigma\|H''(z)x\|_2
\lesssim\sigma\omega^2\omega^{-3/2}=\sigma\omega,
\]
and
\[
\sigma |x^\top H^{(3)}(z)x|
\lesssim\sigma\omega^3\omega^{-3}
=\sigma\omega.
\]
Since \(F=\frac{1}{65}\widetilde F\), we obtain for large enough $C_F$
\[
\|D^3F\|\le C_F\frac{1}{65}\sigma\omega .
\]
Taking $C_1= \max\{C_R, C_F\}$ concludes the proof.

\subsubsection{Proof of Lemma~\ref{lem:exact_endpoint_final}}
On the axis \(x=0\), the transverse gradients vanish. It is enough to check the
clock coordinate. Note that
\[
    z_{t+1}-z_t=\frac{\sigma\tilde\eta}{24\omega}.
\]
Since
\[
    R_{\rm clock}'(z)=\lambda z-\frac{12h\omega}{\sigma}z^2,
\]
we have
\begin{align*}
    R_{\rm clock}'(z_{t+1})-R_{\rm clock}'(z_t)
    &=
    (z_{t+1} - z_t)
    \left(
        \lambda-\frac{12h\omega}{\sigma}(z_{t+1}+z_t)
    \right) \\
    &=
    (z_{t+1} - z_t)
    \left(
        \lambda-\frac{24h\omega}{\sigma}z_t
        -\frac{12h\omega}{\sigma}(z_{t+1} - z_t)
    \right).
\end{align*}
Plugging in $(z_{t+1} - z_t) = \frac{\sigma \tilde\eta}{24\omega}$ and $\mu = \frac{\sigma}{24\omega}\left(\lambda - \frac{\tilde\eta h}{2} \right)$, this becomes
\[
    R_{\rm clock}'(z_{t+1})-R_{\rm clock}'(z_t)
    =
    \tilde\eta\mu-\tilde\eta h z_t.
\]
Since
\[
    \partial_z\widetilde F(0,z_t)=-\mu+h z_t,
\]
we get
\[
    \nabla\widetilde R(w_{t+1})
    =
    \nabla\widetilde R(w_t)-\tilde\eta\nabla\widetilde F(w_t).
\]
Finally, \(R=(5/4)\widetilde R\), \(F=\frac{1}{65}\widetilde F\), and
\(\tilde\eta=(4\frac{1}{65}/5)\eta\). Therefore
\[
    \nabla R(w_{t+1})
    =
    \nabla R(w_t)-\eta\nabla F(w_t).
\]

\subsubsection{Proof of Lemma~\ref{lem:linearized_endpoint_final}}

Bifferentiating the update gives
\[
    B\Phi(w_t)
    =
    \nabla^2R(w_{t+1})^{-1}
    \left(
        \nabla^2R(w_t)-\eta\nabla^2F(w_t)
    \right).
\]
Along the axis, the Hessians are block diagonal.

First consider the clock block. For the scaled functions, the clock block is
\[
    \frac{
        \frac54 R_{\rm clock}''(z_t)-\eta\frac{1}{65} h
    }{
        \frac54 R_{\rm clock}''(z_{t+1})
    }.
\]
Since
\[
    R_{\rm clock}''(z_{t+1})-R_{\rm clock}''(z_t)
    =
    -\frac{24h\omega}{\sigma}(z_{t+1}-z_t)
    =
    -\tilde\eta h,
\]
and
\[
    \eta\frac{1}{65}=\frac54\tilde\eta,
\]
we have
\[
    \frac54 R_{\rm clock}''(z_t)-\eta\frac{1}{65} h
    =
    \frac54\left(R_{\rm clock}''(z_t)-\tilde\eta h\right)
    =
    \frac54 R_{\rm clock}''(z_{t+1}).
\]
Thus the clock block is equal to \(1\).

The transverse block is
\[
    M(z_{t+1})^{-1}
    \left(
        M(z_t)-\tilde\eta\frac\sigma6H(z_t)
    \right).
\]
Since
\[
    M(z_t)=P(\theta_t)BP(\theta_t)^\top,
    \qquad
    H(z_t)=P(\theta_t)QP(\theta_t)^\top,
\]
and
\[
    \theta_{t+1}-\theta_t
    =
    \omega(z_{t+1}-z_t)
    =
    \frac{\sigma\tilde\eta}{24},
\]
the transverse block is
\begin{align*}
    P(\theta_{t+1})B^{-1}P(\theta_{t+1})^\top
    P(\theta_t)
    \left(
        B-\frac{\sigma\widetilde\eta}{6}Q
    \right)
    P(\theta_t)^\top  &=
    P(\theta_{t+1})B^{-1}
    P(\theta_t-\theta_{t+1})
    \left(
        B-\frac{\sigma\widetilde\eta}{6}Q
    \right)
    P(\theta_t)^\top \\
    P(\theta_{t+1})B^{-1}
    P\left(-\frac{\sigma \tilde\eta}{24}\right)
    \left(
        B-\frac{\sigma\widetilde\eta}{6}Q
    \right)
    P(\theta_t)^\top &=
    P(\theta_{t+1})A_\sigma P(\theta_t)^\top.
\end{align*}
Hence using telescoping terms the \(t\)-step linearized map along the reference trajectory. Then the
transverse part of \(\mathcal J_t\) is
\[
    P(\theta_t)A_\sigma^tP(\theta_0)^\top = P(\theta_t)A_\sigma^t
\]
From first order Taylor expansion, 
\[
\cos \theta = 1+ O(\theta^2), \qquad \sin \theta = \theta + O(\theta^3)
\]
hence,
\[
    P(\theta) = \begin{pmatrix}
    1&-\theta\\
    \theta&1
    \end{pmatrix} + O(\theta^2) = I + \begin{pmatrix}
    0&-1\\
    1&0
    \end{pmatrix}\theta + O(\theta^2).
\]
Let
\(
    s:=\sigma\tilde\eta,
\)
then
\begin{align*}
    A_\sigma & = B^{-1}\left( I + \begin{pmatrix}
    0&-1\\
    1&0
    \end{pmatrix}\left(-\frac{s}{24}\right) + O(s^2)\right)\left(B - \frac{s}{6}Q \right) \\
    & = I - \frac{s}{6}B^{-1}Q - B^{-1}\begin{pmatrix}
    0&-1\\
    1&0
    \end{pmatrix}\frac{s}{24}B + O(s^2).
\end{align*}
Let 
\[
    G=-\frac16B^{-1}Q-\frac1{24}B^{-1}\begin{pmatrix}
    0&-1\\
    1&0
    \end{pmatrix}B
\]
As \(s\downarrow0\),
\[
    A_\sigma=I+sG+O(s^2),
\]
A direct computation gives
\[
    G=
    \begin{pmatrix}
    -\frac{11}{180}&-\frac1{24}\\
    -\frac7{24}&-\frac{11}{60}
    \end{pmatrix},
    \qquad
    \det(G)=-\frac{41}{43200}<0.
\]
Thus \(G\) has one positive and one negative real eigenvalue. Standard
perturbation theory gives an expanding eigenvalue \(\rho>1\) of \(A_\sigma\)
with $\rho -1 = \Theta(s)$, thus since \(s=\sigma\tilde\eta\) and \(\tilde\eta\) is an absolute multiple of \(\eta\) there exist some absolute constants $c,C>0$ such that 
\[
    c\sigma \eta \leq \rho-1\leq C\sigma\eta.
\]
Since $\rho -1 \geq 0$ choosing $\eta_0$ to be small enough such that $\rho -1 \leq 1$ for any $\sigma \in (0,1]$ we also have
\[
    \frac{c\sigma\eta}{2}\leq\frac{\rho - 1}{2}\le \log\rho\le \rho - 1 \leq C\sigma\eta.
\]
Choosing $c_{\rho} = \frac{c}{2}$ and $C_{\rho} = C$ finished this part. Finally, since \(B\) has two distinct real eigenvalues, its eigenbasis is nondegenerate.
After decreasing the absolute constant \(\eta_0>0\), the parameter
\(s=\sigma\tilde\eta\) is uniformly small for all \(0<\sigma\le1\) and
\(0<\eta\le\eta_0\). Hence the eigenvectors of
\(A_\sigma=I+sG+O(s^2)\) are uniformly close to the eigenvectors of \(G\), and
the corresponding eigenbasis of \(A_\sigma\) is uniformly well-conditioned.

\subsubsection{Proof of Lemma~\ref{lem:second_derivative_update}}

The update satisfies
\[
    \nabla R(\Phi(y))
    =
    \nabla R(y)-\eta\nabla F(y).
\]
Bifferentiating once gives
\[
    \nabla^2R(\Phi(y))D\Phi(y)
    =
    \nabla^2R(y)-\eta\nabla^2F(y).
\]
Subtracting \(\nabla^2R(\Phi(y))\) from both sides gives
\[
    \nabla^2R(\Phi(y))(D\Phi(y)-I)
    =
    \nabla^2R(y)-\nabla^2R(\Phi(y))
    -
    \eta\nabla^2F(y).
\]
Since \(\nabla^2R\succeq I\),
\[
    \|(\nabla^2R(\Phi(y)))^{-1}\|\le1.
\]
Using the \(D^3R\) bound along the segment \([y,\Phi(y)]\subset U\), we get
\[
\begin{aligned}
    \|D\Phi(y)-I\|
    &\le
    \|\nabla^2R(y)-\nabla^2R(\Phi(y))\|
    +
    \eta\|\nabla^2F(y)\| \\
    &\le
    M_3\|\Phi(y)-y\|+\eta N_2 \\
    &\le
    M_3\Delta+\eta N_2.
\end{aligned}
\]

Differentiating the implicit equation twice in unit directions \(p,q\) on both sides gives
\[
\begin{aligned}
    \nabla^2R(\Phi(y))D^2\Phi(y)[p,q] +
    D^3R(\Phi(y))[D\Phi(y)p,D\Phi(y)q] =
    D^3R(y)[p,q]-\eta D^3F(y)[p,q].
\end{aligned}
\]
Using again \(\|(\nabla^2R(\Phi(y)))^{-1}\|\le1\), and adding and subtracting
\(D^3R(\Phi(y))[p,q]\), we obtain
\[
\begin{aligned}
    \|D^2\Phi(y)\|
    &\le
    \|D^3R(y)-D^3R(\Phi(y))\|
    +
    \eta\|D^3F(y)\| \\
    &\quad+
    \sup_{\|p\|=\|q\|=1}
    \left\|
    D^3R(\Phi(y))[D\Phi(y)p,D\Phi(y)q]
    -
    D^3R(\Phi(y))[p,q]
    \right\|.
\end{aligned}
\]
Since \([y,\Phi(y)]\subset U\), the \(D^4R\) bound gives
\[
    \|D^3R(y)-D^3R(\Phi(y))\|
    \le
    M_4\|\Phi(y)-y\|
    \le
    M_4\Delta.
\]
Let
\[
    E:=\|D\Phi(y)-I\|.
\]
For unit \(p,q\),
\[
    D\Phi(y)p=p+(D\Phi(y)-I)p,
    \qquad
    D\Phi(y)q=q+(D\Phi(y)-I)q.
\]
By bilinearity in the first two slots,
\[
\begin{aligned}
& D^3R(\Phi(y))[D\Phi(y)p,D\Phi(y)q]
    -
    D^3R(\Phi(y))[p,q] \\
&=
D^3R(\Phi(y))[(D\Phi(y)-I)p,q]
+
D^3R(\Phi(y))[p,(D\Phi(y)-I)q] \\
&\quad+
D^3R(\Phi(y))[(D\Phi(y)-I)p,(D\Phi(y)-I)q].
\end{aligned}
\]
Therefore
\[
    \left\|
    D^3R(\Phi(y))[D\Phi(y)p,D\Phi(y)q]
    -
    D^3R(\Phi(y))[p,q]
    \right\|
    \le
    2 M_3(E+E^2).
\]
Combining the estimates,
\[
    \|D^2\Phi(y)\|
    \le
    2\left(
        M_4\Delta
        +\eta N_3
        +M_3E
        +M_3E^2
    \right).
\]
Using
\[
    E\le M_3\Delta+\eta N_2,
\]
we conclude
\[
    \|D^2\Phi(y)\|
    \le
    2\Bigl(
        M_4\Delta
        +\eta N_3
        +M_3(M_3\Delta+\eta N_2)
        +M_3(M_3\Delta+\eta N_2)^2
    \Bigr).
\]

\subsubsection{Proof of Lemma~\ref{lem:nonlinear_endpoint_final}}

We first record the local well-definedness of the update. Since
\(
    \operatorname{dist}(w_t,U^c)\ge \frac{\gamma}{2}\omega^{-3/2},
\)
the assumption \(b\le c_b\omega^{-3/2}\), after decreasing \(c_b>0\), implies
\[
    B(w_t,Kb)\subset U,
    \qquad 0\le t<T.
\]
\cref{lem:quantitative_inverse_branch_endpoint_local} applied at \(w_{t+1}\), together with
\(\nabla^2R\succeq I\), gives a \(1\)-Lipschitz inverse branch of \(\nabla R\)
through \(w_{t+1}\) on a ball of radius comparable to \(\omega^{-3/2}\) around
\(\nabla R(w_{t+1})\).
By Lemma~\ref{lem:exact_endpoint_final},
\[
    \nabla R(w_{t+1})
    =
    \nabla R(w_t)-\eta\nabla F(w_t).
\]
Moreover, on \(U\),
\[
    \|\nabla^2R\|+\eta\|\nabla^2F\|\le C_0 + \eta .
\]
Hence for every \(y\in B(w_t,Kb)\) taking $\eta_0 \leq 1$,
\[
\begin{aligned}
    &\left\|
    \bigl(\nabla R(y)-\eta\nabla F(y)\bigr)
    -
    \bigl(\nabla R(w_t)-\eta\nabla F(w_t)\bigr)
    \right\|\lesssim Kb.
\end{aligned}
\]
Thus, after decreasing \(c_b\) further if necessary, \(\Phi\) is well-defined on
each ball \(B(w_t,Kb)\).
Next, from \cref{lem:quantitative_inverse_branch_endpoint_local}
\[
    \|\Phi(y)-y\|_2
    \le
    \|\nabla R(\Phi(y))-\nabla R(y)\|_2.
\]
Using
\[
    \nabla R(\Phi(y))-\nabla R(y)
    =
    -\eta\nabla F(y),
\]
we obtain
\[
    \|\Phi(y)-y\|_2
    \le
    \eta\|\nabla F(y)\|_2
    \le
    C_1\frac{1}{65}\sigma\eta .
\]

We now apply Lemma~\ref{lem:second_derivative_update}. By
Lemma~\ref{lem:regularity_endpoint_final}, on \(U\),
\[
    \|D^3R\|\le C_1\omega,
    \qquad
    \|D^4R\|\le C_1\omega^2,
\]
and
\[
    \|D^2F\|\le C_1\frac{1}{65}\sigma,
    \qquad
    \|D^3F\|\le C_1\frac{1}{65}\sigma\omega.
\]
Together with
\[
    \|\Phi(y)-y\|_2\le C_1\frac{1}{65}\sigma\eta,
\]
Lemma~\ref{lem:second_derivative_update} gives, on the relevant segments,  
\[
\begin{aligned}
    \|D^2\Phi(y)\|
    &\le
    2\Bigl(
        \omega^2\frac{1}{65}\sigma\eta
        +\eta\frac{1}{65}\sigma\omega
        +\omega(\omega\frac{1}{65}\sigma\eta+\eta\frac{1}{65}\sigma)        +\omega(\omega\frac{1}{65}\sigma\eta+\eta\frac{1}{65}\sigma)^2
    \Bigr) \\
    &\le
    16\sigma\eta\omega^2(1+\sigma\eta\omega)
\end{aligned}
\]

Now assume
\[
    [w_t,w'_t]\subset B(w_t,Kb).
\]
Taylor's theorem applied to \(\Phi\) on the segment \([w_t,w'_t]\) yields
\[
\begin{aligned}
    &\left\|
    \Phi(w'_t)-\Phi(w_t)
    -
    D\Phi(w_t)(w'_t-w_t)
    \right\|_2 \le
    16\sigma\eta\omega^2(1+\sigma\eta\omega)\|w'_t-w_t\|_2^2.
\end{aligned}
\]
Passing to the rotating frame the nonlinear error is exactly the Taylor remainder of the update map \(\Phi\) Indeed, since
\(w_{t+1}=\Phi(w_t)\) and \(w'_{t+1}=\Phi(w'_t)\), we have
\[
    Y_{t+1}
    =
    \mathcal T_{t+1}(w'_{t+1}-w_{t+1})
    =
    \mathcal T_{t+1}\bigl(\Phi(w'_t)-\Phi(w_t)\bigr).
\]
On the other hand, using
\[
    \mathcal T_{t+1}D\Phi(w_t)\mathcal T_t^{-1}=L, 
    \qquad
    Y_t=\mathcal T_t(w'_t-w_t),
\]
we get
\[
\begin{aligned}
    LY_t
    &=
    L\mathcal T_t(w'_t-w_t) \\
    &=
    \mathcal T_{t+1}D\Phi(w_t)\mathcal T_t^{-1}
    \mathcal T_t(w'_t-w_t) \\
    &=
    \mathcal T_{t+1}D\Phi(w_t)(w'_t-w_t).
\end{aligned}
\]
Therefore,
\[
\begin{aligned}
    Y_{t+1}-LY_t
    &=
    \mathcal T_{t+1}\bigl(\Phi(w'_t)-\Phi(w_t)\bigr)
    -
    \mathcal T_{t+1}D\Phi(w_t)(w'_t-w_t) \\
    &=
    \mathcal T_{t+1}
    \left[
        \Phi(w'_t)-\Phi(w_t)
        -
        D\Phi(w_t)(w'_t-w_t)
    \right].
\end{aligned}
\]
Now, since \(\mathcal T_t\) is an isometry in Euclidean norm and that
\(\|\cdot\|_\#\) is uniformly equivalent to the Euclidean norm, there exists an absolute constant $C_2 > 0$ such that
\[
    \|Y_{t+1}-LY_t\|_\#
    \le
    C_2\sigma\eta\omega^2(1+\sigma\eta\omega) \|Y_t\|_\#^2,
\]
which concludes the proof.

\subsubsection{Proof of Lemma~\ref{lem:quadratic_bootstrap}}\cref{lem:quadratic_bootstrap}
Let $t_*$ be the first index such that $q_{t_*}>2$, if such an index exists. Then $t_*\ge 1$ and $q_{t-1}\le 2$ for every $1\le t<t_*$. Summing the recursion up to time $t_*$ yields
\[
q_{t_*}
\le
1+\sum_{t=1}^{t_*} a_t q_{t-1}^2
\le
1+4\sum_{t=1}^{t_*} a_t
\le
2,
\]
a contradiction. Hence no such $t_*$ exists.

\section{Proofs for Section~\ref{sec:KL}}\label{sec:KL-upper-proof}
\subsection{Proof of Theorem~\ref{thm:kl-upper-bound}}
We first record two elementary stability properties of the reweighting map
that appears in entropy mirror descent.
\begin{lemma}\label{lem:stability-reweighting}
For any vector \(a\in\mathbb{R}_{>0}^d\) and \(\w,\w'\in\simp\), define
\[
    \Psi_a(\w)(i)
    =
    \frac{a(i)\w(i)}{\sum_{j=1}^d a(j)\w(j)}.
\]
The following hold:
\begin{enumerate}
    \item \(\|\Psi_a(\w)-\Psi_a(\w')\|_1
    \le
    \frac{\max_i a(i)}{\min_i a(i)}\|\w-\w'\|_1\).
    \item For any \(a'\in\mathbb{R}_{>0}^d\),
    \[
        \|\Psi_a(\w)-\Psi_{a'}(\w)\|_1
        \le
        \frac{\max_i a(i)/a'(i)}{\min_i a(i)/a'(i)}-1.
    \]
\end{enumerate}
\end{lemma}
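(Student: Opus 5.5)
Both parts come from writing $\Psi_a(w)-\Psi_a(w')$ as a sum of two simple differences and then applying the triangle inequality. Set $S=\sum_j a(j)w(j)$ and $S'=\sum_j a(j)w'(j)$. For part 1, decompose coordinatewise:
\[
\Psi_a(w)(i)-\Psi_a(w')(i)=\frac{a(i)\bigl(w(i)-w'(i)\bigr)}{S}+a(i)w'(i)\Bigl(\frac1S-\frac1{S'}\Bigr).
\]
Summing absolute values, the first term contributes at most $\max_i a(i)\,\|w-w'\|_1/S$. Since $\sum_i a(i)w'(i)=S'$, the second term contributes exactly $|S'-S|/S$. Moreover $|S-S'|=|\sum_j a(j)(w(j)-w'(j))|\le \max_j a(j)\,\|w-w'\|_1$. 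Together these give a bound of $2\max a\,\|w-w'\|_1/S$, and $S\ge\min a$ because $w\in\simp$.

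This leaves an extra factor $2$, which I would remove by exploiting $\sum_i(w(i)-w'(i))=0$. The cleanest route is to subtract $\min_j a(j)$ (or any constant $c$) inside $S-S'$: we have $S-S'=\sum_j(a(j)-c)(w(j)-w'(j))$, and choosing $c=\min a$ gives $|S-S'|\le(\max a-\min a)\|w-w'\|_1$. The total is then at most $(2\max a-\min a)\|w-w'\|_1/\min a$, which still exceeds $\max a/\min a$.

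Sharper is to take $c=(\max a+\min a)/2$, so that $|S-S'|\le \tfrac12(\max a-\min a)\|w-w'\|_1$. For the first term, use the same splitting idea on the signed sum: write $w-w'=u^+-u^-$ with $\|u^\pm\|_1=\|w-w'\|_1/2$. A cleaner alternative is to view $\Psi_a$ as a map whose derivative along the segment $w_s=w'+s(w-w')$ is $\frac{d}{ds}\Psi_a(w_s)=\frac{\operatorname{diag}(a)h}{S_s}-\Psi_a(w_s)\frac{\langle a,h\rangle}{S_s}$, where $h=w-w'$ satisfies $\sum_i h(i)=0$. The $\ell_1$ norm of this vector is $\sum_i|a(i)h(i)-\pi(i)\langle a,h\rangle|/S_s$ with $\pi=\Psi_a(w_s)$.

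My main plan is to bound this by $\max a\,\|h\|_1/S_s$ via an extreme-point argument. The expression is convex in $h$, and the set $\{h:\sum_i h(i)=0,\ \|h\|_1=1\}$ has extreme points $(e_i-e_j)/2$. For $h=(e_i-e_j)/2$ one computes
\[
\tfrac12\bigl(a(i)(1-\pi(i))+a(j)(1-\pi(j))+|a(i)-a(j)|(1-\pi(i)-\pi(j))\bigr)\le\max a .
\]
This yields $\|d\Psi/ds\|_1\le \max a\,\|h\|_1/\min a$, and integrating over $s\in[0,1]$ proves part 1. This extreme-point verification is the main obstacle, and it is where the exact constant comes from.

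For part 2, note that $\Psi_a(w)=\Psi_{a/a'}(\Psi_{a'}(w))$, since reweighting composes multiplicatively. So it suffices to show that for $\pi\in\simp$ and $r=a/a'$ we have $\|\Psi_r(\pi)-\pi\|_1\le \max r/\min r-1$. Write $\bar r=\sum_j r(j)\pi(j)$, so that
\[
\|\Psi_r(\pi)-\pi\|_1=\sum_i\pi(i)\,\frac{|r(i)-\bar r|}{\bar r}.
\]
Since $\min r\le\bar r\le\max r$, each ratio satisfies $|r(i)-\bar r|/\bar r\le (\max r-\min r)/\min r=\max r/\min r-1$. Averaging against $\pi$ gives the claim.
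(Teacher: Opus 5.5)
Your part 2 matches the paper's argument. The paper shows that $\Psi_a(w)(i)/\Psi_{a'}(w)(i)=r_i/\bar r$ lies in $[r_{\min}/r_{\max},\,r_{\max}/r_{\min}]$ and then sums against $\Psi_{a'}(w)$. That is your identity $\Psi_r(\pi)(i)-\pi(i)=\pi(i)\bigl(r(i)-\bar r\bigr)/\bar r$ with $\pi=\Psi_{a'}(w)$, written differently.

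For part 1 you take a different route from the paper, and your main plan is valid. The paper does not use calculus:
\begin{itemize}
\item it writes $\|\Psi_a(w)-\Psi_a(w')\|_1=2\bigl(1-\sum_i\min\{\Psi_a(w)(i),\Psi_a(w')(i)\}\bigr)$;
\item it lower-bounds the overlap by $\sum_i a(i)\min\{w(i),w'(i)\}/\max\{Z,Z'\}$, where $Z=\sum_j a(j)w(j)$ and $Z'=\sum_j a(j)w'(j)$;
\item it observes that $\max\{Z,Z'\}-\sum_i a(i)\min\{w(i),w'(i)\}\le\tfrac12\max_i a(i)\,\|w-w'\|_1$.
\end{itemize}
The extra factor of $2$ that you could not remove in your first attempts disappears there automatically. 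The numerator only counts the mass $\tfrac12\|w-w'\|_1$ that actually moves.

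Your approach differentiates along $w_s$, reduces by convexity and homogeneity to the pair directions $\tfrac12(e_i-e_j)$, and integrates. It gives the same constant. It needs more machinery: the extreme points of the zero-sum $\ell_1$ ball, or equivalently a transport decomposition of $h$ plus sublinearity. Its intermediate bound $\max_i a(i)\,\|h\|_1\int_0^1 ds/S_s$ is also never better than the paper's $\max_i a(i)\,\|h\|_1/\max\{Z,Z'\}$. This is because $S_s=Z'+s(Z-Z')$ never exceeds $\max\{Z,Z'\}$.

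Two repairs are needed in your write-up.

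\textbf{The value at $h=\tfrac12(e_i-e_j)$ is miscomputed.} The $i$-th entry of the vector $\bigl(a(k)h(k)-\pi(k)\langle a,h\rangle\bigr)_k$ is $\tfrac12\bigl(a(i)(1-\pi(i))+\pi(i)a(j)\bigr)$. The $j$-th entry is $-\tfrac12\bigl(a(j)(1-\pi(j))+\pi(j)a(i)\bigr)$. You dropped the cross terms $\pi(i)a(j)+\pi(j)a(i)$. For example, with $a\equiv 1$ your formula gives $1-\tfrac12(\pi(i)+\pi(j))$, but the true value is $\|h\|_1=1$. Redoing the sum with $a(i)\ge a(j)$ gives exactly $a(i)-\pi(i)\bigl(a(i)-a(j)\bigr)\le\max_k a(k)$. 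So your conclusion survives.

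\textbf{The set you use is not convex.} The set $\{h:\sum_i h(i)=0,\ \|h\|_1=1\}$ is not convex. Run the extreme-point argument on $\{h:\sum_i h(i)=0,\ \|h\|_1\le 1\}$ instead, whose extreme points are exactly the $\tfrac12(e_i-e_j)$. Then use homogeneity of your expression in $h$.
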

\begin{proof}[Proof of \cref{lem:stability-reweighting}]
We prove the two claims separately. For the first claim, use
\[
    \sum_i \bigl(\w(i)-\min\{\w(i),\w'(i)\}\bigr)
    =
    \sum_i \bigl(\w'(i)-\min\{\w(i),\w'(i)\}\bigr)
    =
    \frac12\|\w-\w'\|_1.
\]
Let \(Z=\sum_i a(i)\w(i)\) and \(Z'=\sum_i a(i)\w'(i)\). Since
\[
    \sum_i \min\{\Psi_a(\w)(i),\Psi_a(\w')(i)\}
    \ge
    \frac{\sum_i a(i)\min\{\w(i),\w'(i)\}}{\max\{Z,Z'\}},
\]
and since \(\Psi_a(\w)\) and \(\Psi_a(\w')\) are probability vectors,
\[
    \|\Psi_a(\w)-\Psi_a(\w')\|_1
    \le
    2\,
    \frac{
    \max\{Z,Z'\}-\sum_i a(i)\min\{\w(i),\w'(i)\}
    }{\max\{Z,Z'\}}.
\]
The numerator is at most
\(\frac12(\max_i a(i))\|\w-\w'\|_1\), while the denominator is at least
\(\min_i a(i)\). This proves the first claim.

For the second claim, let \(r_i=a(i)/a'(i)\), \(r_{\min}=\min_i r_i\), and
\(r_{\max}=\max_i r_i\). For every \(i\),
\[
    \frac{\Psi_a(\w)(i)}{\Psi_{a'}(\w)(i)}
    =
    \frac{r_i}{\sum_j r_j a'(j)\w(j)/\sum_j a'(j)\w(j)}.
\]
The denominator on the right-hand side is a weighted average of the \(r_j\)'s,
and therefore lies in \([r_{\min},r_{\max}]\). Hence
\[
    \frac{r_{\min}}{r_{\max}}
    \le
    \frac{\Psi_a(\w)(i)}{\Psi_{a'}(\w)(i)}
    \le
    \frac{r_{\max}}{r_{\min}},
\]
which implies
\[
    \left|
    \frac{\Psi_a(\w)(i)}{\Psi_{a'}(\w)(i)}-1
    \right|
    \le
    \frac{r_{\max}}{r_{\min}}-1.
\]
Multiplying by \(\Psi_{a'}(\w)(i)\) and summing over \(i\) proves the second
claim.
\end{proof}
We are now ready to prove \cref{thm:kl-upper-bound}.
\begin{proof}[Proof of \cref{thm:kl-upper-bound}]
Fix \(w_0,w_0'\in\simp\), and let \(\{\w_t\}_{t=0}^T\) and
\(\{\w_t'\}_{t=0}^T\) be the corresponding MD trajectories. For each
\(t=0,\ldots,T-1\), define the coordinatewise exponentials
\[
    g_t=e^{-\eta\nabla F(\w_t)},
    \qquad
    g_t'=e^{-\eta\nabla F(\w_t')}.
\]
The negative-entropy MD update can be written as
\[
    \w_{t+1}=\Psi_{g_t}(\w_t),
    \qquad
    \w_{t+1}'=\Psi_{g_t'}(\w_t').
\]
Therefore,
\begin{align*}
    \|\w_{t+1}-\w_{t+1}'\|_1
    &\le
    \|\Psi_{g_t}(\w_t)-\Psi_{g_t}(\w_t')\|_1
    +
    \|\Psi_{g_t}(\w_t')-\Psi_{g_t'}(\w_t')\|_1 .
\end{align*}

We bound the two terms separately. By \cref{lem:stability-reweighting},
\[
    \|\Psi_{g_t}(\w_t)-\Psi_{g_t}(\w_t')\|_1
    \le
    \frac{\max_i g_t(i)}{\min_i g_t(i)}
    \|\w_t-\w_t'\|_1.
\]
For any \(i,j\), the tangent direction \(e_i-e_j\) has \(\ell_1\)-norm \(2\).
Since \(F\) is \(G\)-Lipschitz on the open simplex,
\(|\nabla F(\w_t)(i)-\nabla F(\w_t)(j)|\le2G\). Hence
\[
    \frac{\max_i g_t(i)}{\min_i g_t(i)}
    \le
    e^{2\eta G},
\]
and the first term is at most \(e^{2\eta G}\|\w_t-\w_t'\|_1\).

For the second term, another application of
\cref{lem:stability-reweighting} gives
\[
    \|\Psi_{g_t}(\w_t')-\Psi_{g_t'}(\w_t')\|_1
    \le
    \frac{\max_i g_t(i)/g_t'(i)}{\min_i g_t(i)/g_t'(i)}-1.
\]
By \(L\)-smoothness,
\[
    \max_i\bigl(\nabla F(\w_t)(i)-\nabla F(\w_t')(i)\bigr)
    -
    \min_i\bigl(\nabla F(\w_t)(i)-\nabla F(\w_t')(i)\bigr)
    \le
    2L\|\w_t-\w_t'\|_1.
\]
Therefore,
\[
    \|\Psi_{g_t}(\w_t')-\Psi_{g_t'}(\w_t')\|_1
    \le
    e^{2\eta L\|\w_t-\w_t'\|_1}-1.
\]
Since \(\|\w_t-\w_t'\|_1\le2\), the monotonicity of
\(z\mapsto(e^{2\eta Lz}-1)/z\) on \(z>0\), with continuity at \(z=0\), implies
\[
    e^{2\eta L\|\w_t-\w_t'\|_1}-1
    \le
    \frac{e^{4\eta L}-1}{2}\|\w_t-\w_t'\|_1.
\]
Combining the two bounds,
\begin{align*}
    \|\w_{t+1}-\w_{t+1}'\|_1
    &\le
    \left(e^{2\eta G}+\frac{e^{4\eta L}-1}{2}\right)
    \|\w_t-\w_t'\|_1 \\
    &\le
    e^{(2G+4L)\eta}\|\w_t-\w_t'\|_1.
\end{align*}
Iterating for \(T\) steps gives
\[
    \|\w_T-\w_T'\|_1
    \le
    e^{(2G+4L)\eta T}\|\w_0-\w_0'\|_1.
\]
Also, since both iterates remain in \(\simp\),
\(\|\w_T-\w_T'\|_1\le2\). Taking the supremum over all admissible
\(w_0'=w_0+p\) with \(\|p\|_1\le\varepsilon\) proves the theorem.
\end{proof}

\section{Proofs for Section~\ref{sec:stabilizing}}\label{sec:stabilizing-proof}
To prove the lower bounds for \cref{alg:anchored-MD-init,alg:anchored-MD-fixed}, We will use two standard facts about relative smoothness and relative strong
convexity. The first, adapted from \citet{attia2022uniform}, shows that adding
a multiple of the regularizer makes the objective relatively strongly convex.

\begin{lemma}
\label{lem:relative-attia}
Let \(F\) be convex and \(L\)-smooth with respect to \(\|\cdot\|\), and let
\(R\) be \(1\)-strongly convex with respect to \(\|\cdot\|\). Then, for every
\(\mu>0\), the function
\[
    F^\mu(\w) = F(\w)+\mu R(\w)
\]
is \((L+\mu)\)-smooth and \(\mu\)-strongly convex relative to \(R\).
\end{lemma}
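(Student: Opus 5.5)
The plan is to verify the two defining inequalities of relative smoothness and relative strong convexity directly through Bregman divergences. Recall that \(H\) is \(\ell\)-smooth and \(m\)-strongly convex relative to \(R\) when, for all \(w,w'\in\dom\),
\[
    m\,D_R(w,w')\;\le\; D_H(w,w')\;\le\;\ell\,D_R(w,w').
\]
The key observation is that Bregman divergences are linear in the generating function. The linear term \(\langle\nabla H(w'),w-w'\rangle\) is linear in \(H\), so
\[
    D_{F^\mu}(w,w') = D_F(w,w') + \mu\,D_R(w,w').
\]
Everything therefore reduces to sandwiching \(D_F\) between multiples of \(D_R\).

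For the lower bound, I will use convexity of \(F\), which gives \(D_F(w,w')\ge 0\). Hence \(D_{F^\mu}(w,w')\ge \mu D_R(w,w')\), and this is exactly \(\mu\)-strong convexity relative to \(R\).

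For the upper bound, I will combine \(L\)-smoothness of \(F\) with respect to \(\|\cdot\|\) and \(1\)-strong convexity of \(R\). Smoothness gives, via the standard descent lemma obtained by integrating \(\nabla F\) along the segment \([w',w]\) and applying the dual-norm Lipschitz bound,
\[
    D_F(w,w')\le \tfrac L2\|w-w'\|^2.
\]
Strong convexity of \(R\) gives \(\tfrac12\|w-w'\|^2\le D_R(w,w')\). Chaining these yields \(D_F(w,w')\le L\,D_R(w,w')\). Consequently \(D_{F^\mu}(w,w')\le (L+\mu)D_R(w,w')\), which is \((L+\mu)\)-smoothness relative to \(R\).

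No step is a real obstacle. The only point needing care is the domain: the descent-lemma bound requires the segment \([w',w]\) to lie in \(\dom\), which holds because \(\dom\) is convex. Where \(R\) is only differentiable on the relative interior (as for entropy), the inequalities should be stated for \(w'\) there, matching where MD iterates live. The same argument applies verbatim to \(F+\mu D_R(\cdot,w_0)\) or \(F+\mu D_R(\cdot,w_a)\), since these differ from \(F+\mu R\) by an affine function, which leaves every Bregman divergence unchanged. This is how the lemma will be used for \cref{alg:anchored-MD-init,alg:anchored-MD-fixed}.
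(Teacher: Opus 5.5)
Your proof is correct and is the standard argument. The paper gives no proof of its own and cites the lemma from \citet{attia2022uniform}, where it follows exactly from $D_{F^\mu}=D_F+\mu D_R$ together with $0\le D_F(w,w')\le \frac{L}{2}\|w-w'\|^2\le L\,D_R(w,w')$. Your remark that affine terms leave every Bregman divergence unchanged also matches how the paper applies the lemma to $F+\mu D_R(\cdot,w_0)$ and $F+\mu D_R(\cdot,w_a)$.
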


The second, due to \citet{lu2018relatively}, gives the convergence guarantees
of MD under relative smoothness and relative strong convexity.

\begin{lemma}
\label{lem:relative-lu}
Let \(\Phi\) be \(L\)-smooth and \(\mu\)-strongly convex relative to
\(R\), and let
\(
    \w^\star \in \arg\min_{\w\in\dom}\Phi(\w).
\)
Then the trajectory \(\{\w_t\}_{t=0}^T\) generated by MD with $\eta = 1/L$ satisfies, for all \(t\ge1\),
\[
    D_R(\w^\star,\w_t)
    \le
    \left(1-\frac{\mu}{L}\right)^t
    D_R(\w^\star,\w_0),
\]
and
\[
    \Phi(\w_t)-\Phi(\w^\star)
    \le
    \frac{\mu D_R(\w^\star,\w_0)}
    {\left(1+\frac{\mu}{L-\mu}\right)^t-1}.
\]
\end{lemma}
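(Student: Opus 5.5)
We prove the lemma with the standard one-step descent inequality for MD under relative smoothness and relative strong convexity, and then unroll it in two ways. Throughout, with $\eta=1/L$ the update reads
\[
    \w_{t+1}=\arg\min_{\w\in\dom}\bigl\{\langle\nabla\Phi(\w_t),\w\rangle+L\,D_R(\w,\w_t)\bigr\}.
\]
We also assume $\mu<L$; this holds automatically, since relative smoothness and relative strong convexity together force $\mu\le L$, and it is needed for the second bound to make sense.

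\emph{Step 1 (three-point inequality).} From the first-order optimality condition of the update and the three-point identity for Bregman divergences, for every $u\in\dom$,
\[
    \langle\nabla\Phi(\w_t),\w_{t+1}-u\rangle\le L\bigl[D_R(u,\w_t)-D_R(u,\w_{t+1})-D_R(\w_{t+1},\w_t)\bigr].
\]

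\emph{Step 2 (one-step inequality).} We use two consequences of the assumptions on $\Phi$:
\[
    \Phi(\w_{t+1})\le\Phi(\w_t)+\langle\nabla\Phi(\w_t),\w_{t+1}-\w_t\rangle+L\,D_R(\w_{t+1},\w_t)
\]
by relative smoothness, and
\[
    \Phi(u)\ge\Phi(\w_t)+\langle\nabla\Phi(\w_t),u-\w_t\rangle+\mu D_R(u,\w_t)
\]
by relative strong convexity. Adding these to the inequality of Step 1, the $D_R(\w_{t+1},\w_t)$ terms cancel, and we obtain for every $u\in\dom$
\[
    \Phi(\w_{t+1})\le\Phi(u)+(L-\mu)D_R(u,\w_t)-L\,D_R(u,\w_{t+1}).
\]

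\emph{Step 3 (Bregman contraction).} Take $u=\w^\star$ in Step 2 and write $D_t:=D_R(\w^\star,\w_t)$. Since $\Phi(\w_{t+1})\ge\Phi(\w^\star)$, the inequality gives $L\,D_{t+1}\le(L-\mu)D_t$, that is, $D_{t+1}\le(1-\mu/L)D_t$. Iterating from $t=0$ yields the first claim.

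\emph{Step 4 (monotonicity).} Take $u=\w_t$ in Step 2. Then $D_R(\w_t,\w_t)=0$, and since the remaining Bregman term is nonnegative we get $\Phi(\w_{t+1})\le\Phi(\w_t)$. So the function values are nonincreasing along the trajectory.

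\emph{Step 5 (function values).} Set $q:=L/(L-\mu)$, so that $L-\mu=L/q$. Multiplying the Step 2 inequality with $u=\w^\star$ by $q^k$ gives
\[
    q^k\bigl(\Phi(\w_{k+1})-\Phi(\w^\star)\bigr)\le Lq^{k-1}D_k-Lq^kD_{k+1}.
\]
The right-hand side telescopes. Summing over $k=0,\dots,t-1$ and dropping the final negative term,
\[
    \sum_{k=0}^{t-1}q^k\bigl(\Phi(\w_{k+1})-\Phi(\w^\star)\bigr)\le Lq^{-1}D_0=(L-\mu)D_0.
\]
By Step 4, each gap on the left is at least $\Phi(\w_t)-\Phi(\w^\star)$. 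Hence the left-hand side is at least
\[
    \bigl(\Phi(\w_t)-\Phi(\w^\star)\bigr)\sum_{k=0}^{t-1}q^k=\bigl(\Phi(\w_t)-\Phi(\w^\star)\bigr)\frac{q^t-1}{q-1}.
\]
Since $q-1=\mu/(L-\mu)$, combining the two displays gives
\[
    \Phi(\w_t)-\Phi(\w^\star)\le\frac{(L-\mu)(q-1)D_0}{q^t-1}=\frac{\mu D_0}{q^t-1},
\]
and $q=1+\frac{\mu}{L-\mu}$, which is the second claim.

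The only delicate step is choosing the weights $q^k$ in Step 5 so that the sum telescopes and $q-1$ combines with $L-\mu$ to give exactly the constant $\mu$. The remaining steps are standard.
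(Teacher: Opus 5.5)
Your proof is correct and matches the standard argument of \citet{lu2018relatively}: the three-point inequality, the one-step bound $\Phi(w_{t+1})\le\Phi(u)+(L-\mu)D_R(u,w_t)-L\,D_R(u,w_{t+1})$, and the $q^k$-weighted telescoping combined with monotonicity of $\Phi(w_t)$. The paper only cites this argument and does not reproduce it. One small slip: you say $\mu<L$ ``holds automatically'', but the two relative conditions only force $\mu\le L$ (e.g.\ $\Phi=LR$ plus an affine term gives $\mu=L$), so $0<\mu<L$ should be assumed as part of the hypothesis, as Lu et al.\ do; this is harmless here because the paper applies the lemma with relative smoothness constant $L+\mu>\mu$.
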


We now prove \cref{thm:stabilizing-alg-init,thm:stabilizing-alg-fixed}.

\subsection{Proof of Theorem~\ref{thm:stabilizing-alg-init}} 
Fix two initializations \(w_0,w_0'\in\dom\). Let
$\{\w_t\}_{t=0}^T$ be the trajectory of \cref{alg:anchored-MD-init} initialized
at \(w_0\), and let $\{\w_t'\}_{t=0}^T$ be the trajectory initialized at \(w_0'\).
Thus each trajectory is anchored at its own initialization:
\[
    F_{w_0}^\mu(\w)=F(\w)+\mu D_R(\w,w_0),
    \qquad
    F_{{w_0}'}^\mu(\w)=F(\w)+\mu D_R(\w,{w_0}').
\]
Since
\[
    \nabla F_{w_0}^\mu(\w)
    =
    \nabla F(\w)+\mu\bigl(\nabla R(\w)-\nabla R({w_0})\bigr),
\]
\cref{alg:anchored-MD-init} is MD applied to the regularized objective $F^\mu_{w_0}$ with $\eta = 1/(L+\mu)$. First we prove that $F^{\mu}_{w_0}$ is $\mu$ strongly-convex and $L+\mu$-smooth relative to $R$. For the proof we use \cref{lem:relative-attia}. Indeed,
\[
    F^\mu_{w_0}(\w)
    =
    F(\w)
    +
    \mu R(\w)
    -
    \mu R({w_0})
    -
    \mu\langle \nabla R({w_0}),\w-u\rangle .
\]
The last two terms are affine in \(\w\), and thus do not affect relative
smoothness or relative strong convexity. Hence \(F^\mu_{w_0}\) is \((\mu+L)\)-smooth and \(\mu\)-strongly convex relative to \(R\). The same holds for $w_0'$.  We may therefore apply \cref{lem:relative-lu} to these two cases.
For each anchor \(v\in\dom\), let
\(
    w^\star_{\mu,v}\in\arg\min_{w\in\dom}F^\mu_v(w).
\)
We first bound the movement of these regularized minimizers as the anchor
changes. By the first-order optimality conditions for \(w^\star_{\mu,{w_0}}\) and
\(w^\star_{\mu,{w_0}'}\), applied with the other minimizer as a comparison point,
\begin{align*}
&\left\langle
    \nabla F(w^\star_{\mu,{w_0}})
    +
    \mu\left(\nabla R(w^\star_{\mu,{w_0}})-\nabla R({w_0})\right),
    w^\star_{\mu,{w_0}'}-w^\star_{\mu,{w_0}}
\right\rangle
\ge0,\\
&\left\langle
    \nabla F(w^\star_{\mu,{w_0}'})
    +
    \mu\left(\nabla R(w^\star_{\mu,{w_0}'})-\nabla R({w_0}')\right),
    w^\star_{\mu,{w_0}}-w^\star_{\mu,{w_0}'}
\right\rangle
\ge0.
\end{align*}
Adding the two inequalities and using the monotonicity of \(\nabla F\) gives
\[
    \left\langle
        \nabla R(w^\star_{\mu,{w_0}})-\nabla R(w^\star_{\mu,{w_0}'}),
        w^\star_{\mu,{w_0}}-w^\star_{\mu,{w_0}'}
    \right\rangle
    \le
    \left\langle
        \nabla R({w_0})-\nabla R({w_0}'),
        w^\star_{\mu,{w_0}}-w^\star_{\mu,{w_0}'}
    \right\rangle .
\]
Since \(R\) is \(1\)-strongly convex,
\[
    \|w^\star_{\mu,{w_0}}-w^\star_{\mu,{w_0}'}\|
    \le
    \|\nabla R({w_0})-\nabla R({w_0}')\|_* .
\]

Next we bound the distance from each trajectory to its own regularized
minimizer. Since \(F^\mu_{w_0}(w^\star_{\mu,{w_0}})\le F^\mu_{w_0}({w_0})=F({w_0})\), we have
\[
    \mu D_R(w^\star_{\mu,{w_0}},{w_0})
    \le
    F({w_0})-F(w^\star_{\mu,{w_0}})
    \le
    GD.
\]
By \cref{lem:relative-lu},
\[
    D_R(w^\star_{\mu,{w_0}},\w_T)
    \le
    \left(1+\frac{\mu}{L}\right)^{-T}
    D_R(w^\star_{\mu,{w_0}},{w_0})
    \le
    \left(1+\frac{\mu}{L}\right)^{-T}
    \frac{GD}{\mu}.
\]
The same bound holds for the trajectory initialized at \(u'\). Using strong
convexity of \(R\) and the triangle inequality,
\begin{align*}
    \|\w_T-\w_T'\|
    &\le
    \|\w_T-w^\star_{\mu,{w_0}}\|
    +
    \|w^\star_{\mu,{w_0}}-w^\star_{\mu,{w_0}'}\|
    +
    \|w^\star_{\mu,{w_0}'}-\w_T'\|\\
    &\le
    \|\nabla R({w_0})-\nabla R({w_0}')\|_*
    +
    2\sqrt{2\left(1+\frac{\mu}{L}\right)^{-T}\frac{GD}{\mu}}.
\end{align*}
With \(\mu=8L\log T/T\), the elementary bound
\[
    \left(1+\frac{8\log T}{T}\right)^T\ge T^2
    \qquad (T\ge2)
\]
implies
\[
    \|\w_T-\w_T'\|
    \le
    \|\nabla R({w_0})-\nabla R({w_0}')\|_*
    +
    \sqrt{\frac{GD}{L T\log T}}.
\]
Letting $w_0' = w_0 + p$ and taking the supremum over all admissible perturbations \(p\) proves the first stability bound. If \(R\) is
\(\beta\)-smooth, then
\[
    \|\nabla R(w_0)-\nabla R(w_0+p)\|_*
    \le
    \beta\|p\|
    \le
    \beta\varepsilon,
\]
which gives the stated smooth-regularizer form.

It remains to prove the optimization guarantee for the trajectory initialized at \(w_0\). Let
\(
    w^\star_{\mu,w_0}\in\arg\min_{w\in\dom}F^\mu_{w_0}(w).
\)
\begin{align*}
    F(\w_T) - F(\w^\star)
    & = F^{\mu}_{w_0}(\w_T) - F^{\mu}_{w_0}(\w^{\star}) + \mu(D_R(\w^\star, w_0) - D_R(\w_T, w_0))\\
    & \leq F^{\mu}_{w_0}(\w_T) - F^{\mu}_{w_0}(\w^{\star}_{\mu,w_0}) + \mu D_R(\w^\star, w_0).
\end{align*}
From \cref{lem:relative-lu},
\begin{align*}
    F^{\mu}_{w_0}(\w_T) - F^{\mu}_{w_0}(\w^\star_{\mu,w_0})
    \leq
    \frac{\mu D_R(\w^\star_{\mu,w_0},w_0)}{\left(1 + \frac{\mu}{L}\right)^T -1}
    \leq
    \frac{\mu D_R(\w^\star_{\mu,w_0},w_0)}{T^2-1}.
\end{align*}
The last inequality follows from the choice of $\mu$ and
$\left(1+\frac{8\log T}{T}\right)^T\ge T^2$. Overall,
\begin{align*}
    F(\w_T) - F(\w^\star)
    & \leq
    \frac{\mu D_R(\w^\star_{\mu,w_0},w_0)}{T^2-1}
    +
    \mu D_R(\w^\star, w_0).
\end{align*}
Finally, as above, \(F^\mu_{w_0}(w^\star_{\mu,w_0})\le F^\mu_{w_0}(w_0)=F(w_0)\), so
\[
    \mu D_R(w^\star_{\mu,w_0},w_0)
    \le
    F(w_0)-F(w^\star_{\mu,w_0})
    \le
    GD.
\]
Substituting this and \(\mu=8L\log T/T\), we obtain
\[
    F(\w_T)-F(w^\star)
    \le
    \frac{GD}{T^2-1}
    +
    \frac{8L\log T}{T}D_R(w^\star,w_0).
\]
Since \(T\ge2\), we have \((T^2-1)^{-1}\le 2/T^2\), and therefore
\[
    F(\w_T)-F(w^\star)
    \le
    \frac{8L\log T}{T}D_R(w^\star,w_0)
    +
    \frac{2GD}{T^2}.
\]

\subsection{Proof of Theorem~\ref{thm:anchored-kl-lower}}\label{sec:ill-conditioned-anchor}

Let
\[
    i_m\in\arg\min_{i\in[d]}\w_0(i),
    \qquad
    \w_0(i_m)=w_0^{\min}.
\]
Since \(d\ge2\), we have \(w_0^{\min}\le 1/2\), and therefore
\[
    1-w_0^{\min}\ge \frac12> \frac{\varepsilon}{2}.
\]
Define \(p\in\mathbb{R}^d\) by
\[
    p(i_m)=\frac{\varepsilon}{2},
    \qquad
    p(i)=
    -\frac{\varepsilon}{2} \cdot \frac{\w_0(i)}{1-w_0^{\min}}
    \quad\text{for }i\neq i_m.
\]
Then \(\sum_{i=1}^d p(i)=0\). Moreover, since
\(\frac{\varepsilon}{2}< 1-w_0^{\min}\), for every \(i\neq i_m\),
\[
    \w_0(i)+p(i)
    =
    \w_0(i)\left(
    1-\frac{\varepsilon}{2}\cdot \frac{1}{1-w_0^{\min}}
    \right)
    >
    0.
\]
Also,
\[
    \w_0(i_m)+p(i_m)
    =
    w_0^{\min}+\frac{\varepsilon}{2}
    >
    0.
\]
Therefore
\[
    \w_0':=\w_0+p\in\simp.
\]
Furthermore,
\[
    \|p\|_1
    =
    \frac{\varepsilon}{2}
    +
    \sum_{i\neq i_m}
    \frac{\varepsilon}{2}\cdot \frac{\w_0(i)}{1-w_0^{\min}}
    = \varepsilon.
\]

Now define
\[
    K:=
    \min\left\{
    \exp\left(
        \frac{1}{\mu}
        \left[
        1-\left(\frac{L}{L+\mu}\right)^T
        \right]
    \right),
    \frac{1}{w_0^{\min}},
    \frac{1}{\varepsilon}
    \right\}.
\]
Let \(F(\w)=\langle g,\w\rangle\), where
\[
    g(i)=
    \begin{cases}
    -\dfrac{\log K}{
        \frac{1}{\mu}
        \left[
        1-\left(\frac{L}{L+\mu}\right)^T
        \right]
    }, & i=i_m,\\[1.2em]
    0, & i\neq i_m.
    \end{cases}
\]
By the definition of \(K\),
\[
    0
    \le
    \frac{\log K}{
        \frac{1}{\mu}
        \left[
        1-\left(\frac{L}{L+\mu}\right)^T
        \right]
    }
    \le 1.
\]
Hence \(\|g\|_\infty\le1\), so \(F\) is \(1\)-Lipschitz with respect to
\(\|\cdot\|_1\). 

We now compute the dynamics of \cref{alg:anchored-MD-init} for this linear
objective. Fix an initialization \(w_0\in\simp\), and let
\(\{\w_t\}_{t=0}^T\) be the trajectory of the algorithm initialized at \(w_0\), from symmetry the same proof will follow for $w_0'$. 
Since \(F\) is linear, \(\nabla F(\w_t)=g\) for every \(t\), and the update is
\[
    \w_{t+1}
    =
    \arg\min_{\w\in\Delta_d}
    \left\{
    \left\langle
    g+\mu(\nabla R(\w_t)-\nabla R(w_0)),
    \w-\w_t
    \right\rangle
    +
    (\mu+L)D_R(\w,\w_t)
    \right\}.
\]
We first show that the iterates remain in the interior of the simplex. We prove by induction on $t$. For the base case indeed $w_0 \in \simp$, now suppose
that \(\w_t\in\simp\), and consider the objective minimized
in the update. Let \(\bar w\in \Delta_d\) be a minimizer. We claim that
\(\bar w\in\simp\). Suppose toward contradiction that
\(\bar w(i)=0\) for some coordinate \(i\). Since \(\bar w\in\Delta_d\), there
exists a coordinate \(j\) such that \(\bar w(j)>0\). For sufficiently small
\(\alpha>0\), the point
\[
    \bar w^\alpha=\bar w+\alpha e_i-\alpha e_j
\]
belongs to \(\Delta_d\). The directional derivative of the Bregman term in this
direction is
\[
    \left\langle
    \nabla D_R(\w,\w_t), e_i-e_j
    \right\rangle
    =
    \log\frac{\w(i)}{\w_t(i)}
    -
    \log\frac{\w(j)}{\w_t(j)}.
\]
As \(\w(i)\to0\), this quantity tends to \(-\infty\), whereas the
directional derivative of the linear part of the update, \(
\left\langle
g, e_i-e_j
\right\rangle
\)
is finite. Hence, for
all sufficiently small \(\alpha>0\), moving from \(\bar w\) to
\(\bar w^\alpha\) strictly decreases the objective, contradicting the
optimality of \(\bar w\). Therefore the minimizer cannot place zero mass on any
coordinate, and so \(\w_{t+1}\in\simp\). This proves the claim.  

Hence from KKT conditions, there exists a scalar \(\lambda_t\) such that, for every coordinate \(i\),
\[
    g(i)
    +
    \mu\bigl(\nabla R(\w_t)(i)-\nabla R(w_0)(i)\bigr)
    +
    (\mu+L)\bigl(\nabla R(\w_{t+1})(i)-\nabla R(\w_t)(i)\bigr)
    +
    \lambda_t
    =
    0.
\]
Subtracting the conditions for coordinates \(i\) and \(j\), the scalar
\(\lambda_t\) cancels. Since for negative entropy
\[
    \nabla R(\w)(i)-\nabla R(\w)(j)
    =
    \log\frac{\w(i)}{\w(j)},
\]
we obtain
\[
    \log\frac{\w_{t+1}(i)}{\w_{t+1}(j)}
    =
    \frac{L}{L+\mu}
    \log\frac{\w_t(i)}{\w_t(j)}
    +
    \frac{\mu}{L+\mu}
    \log\frac{w_0(i)}{w_0(j)}
    -
    \frac{1}{L+\mu}
    \bigl(g(i)-g(j)\bigr).
\]
Unfolding the recursion gives
\[
    \log\frac{\w_T(i)}{\w_T(j)}
    =
    \log\frac{w_0(i)}{w_0(j)}
    -
    \frac{1}{\mu}
    \left[
    1-\left(\frac{L}{L+\mu}\right)^T
    \right]
    \bigl(g(i)-g(j)\bigr).
\]
Equivalently,
\[
    \w_T(i)
    =
    \frac{
    w_0(i)
    \exp\left(
    -
    \frac{1}{\mu}
    \left[
    1-\left(\frac{L}{L+\mu}\right)^T
    \right]
    g(i)
    \right)
    }{
    \sum_{j=1}^d
    w_0(j)
    \exp\left(
    -
    \frac{1}{\mu}
    \left[
    1-\left(\frac{L}{L+\mu}\right)^T
    \right]
    g(j)
    \right)
    }.
\]

Applying this formula first with \(\w_0\), and then with
\(\w_0'\), we get
\[
    \w_T(i_m)
    =
    \frac{K w_0^{\min}}{
    1+(K-1)w_0^{\min}
    },
\]
and
\[
    \w_T'(i_m)
    =
    \frac{K(w_0^{\min}+\frac{\varepsilon}{2})}{
    1+(K-1)(w_0^{\min}+\frac{\varepsilon}{2})
    }.
\]
Hence
\[
    \|\w_T-\w_T'\|_1
    \ge
    2\left(\w_T'(i_m)-\w_T(i_m)\right).
\]
By direct computation,
\[
\begin{aligned}
    2\left(\w_T'(i_m)-\w_T(i_m)\right)
    &=
    2\left(
    \frac{K(w_0^{\min}+\frac{\varepsilon}{2})}{
    1+(K-1)(w_0^{\min}+\frac{\varepsilon}{2})}
    -
    \frac{K w_0^{\min}}{
    1+(K-1)w_0^{\min}}
    \right) \\
    &=
    \frac{K\varepsilon}{
    \left(1+(K-1)(w_0^{\min}+\frac{\varepsilon}{2})\right)
    \left(1+(K-1)w_0^{\min}\right)
    }.
\end{aligned}
\]
We now bound the denominator. Since \(K\le 1/w_0^{\min}\),
\[
    1+(K-1)w_0^{\min}
    \le
    1+K w_0^{\min}
    \le
    2.
\]
Also, since \(K\le 1/w_0^{\min}\) and \(K\le1/\varepsilon\),
\[
    1+(K-1)(w_0^{\min}+\frac{\varepsilon}{2})
    \le
    1+K w_0^{\min}+\frac{K\varepsilon}{2}
    \le
    1+1+\frac12
    =
    \frac52.
\]
Thus
\[
    \|\w_T-\w_T'\|_1
    \ge
    \frac{K\varepsilon}{5}.
\]

By the definition of \(K\),
\[
    K\varepsilon
    =
    \min\left\{
    \varepsilon
    \exp\left(
        \frac{1}{\mu}
        \left[
        1-\left(\frac{L}{L+\mu}\right)^T
        \right]
    \right),
    \frac{\varepsilon}{w_0^{\min}},
    1
    \right\}.
\]
Therefore
\[
    \delta_A(\w_0,\varepsilon)
    \ge
    \frac{1}{5}
    \min\left\{
    1,\,
    \varepsilon
    \exp\left(
        \frac{1}{\mu}
        \left[
        1-\left(\frac{L}{L+\mu}\right)^T
        \right]
    \right),
    \frac{\varepsilon}{w_0^{\min}}
    \right\}.
\]

It remains to prove the simpler lower bound. Since
\[
    \left(\frac{L}{L+\mu}\right)^T
    =
    \left(1-\frac{\mu}{L+\mu}\right)^T
    \le
    \exp\left(-\frac{\mu T}{L+\mu}\right),
\]
we have
\[
    \frac{1}{\mu}
    \left[
    1-\left(\frac{L}{L+\mu}\right)^T
    \right]
    \ge
    \frac{1}{\mu}
    \left[
    1-\exp\left(-\frac{\mu T}{L+\mu}\right)
    \right].
\]
Using the elementary inequality
\[
    1-e^{-x}
    \ge
    (1-e^{-1})\min\{x,1\},
    \qquad x\ge0,
\]
with \(x=\frac{\mu T}{L+\mu}\), we get
\[
    \frac{1}{\mu}
    \left[
    1-\left(\frac{L}{L+\mu}\right)^T
    \right] \geq \frac{1}{\mu}
    \left[
    1-\exp\left(-\frac{\mu T}{L+\mu}\right)
    \right]
    \ge
    (1-e^{-1})
    \min\left\{
    \frac{T}{L+\mu},
    \frac{1}{\mu}
    \right\}.
\]
Plugging this into the exact lower bound gives
\[
    \delta_A(\w_0,\varepsilon)
    \ge
    \frac{1}{5}
    \min\left\{
    1,\,
    \varepsilon
    \exp\left(
        (1-e^{-1})
        \min\left\{
        \frac{T}{L+\mu},
        \frac{1}{\mu}
        \right\}
    \right),
    \frac{\varepsilon}{w_0^{\min}}
    \right\}.
\]
If additionally $d \geq 1/\epsilon$ then $w_0^{\min} \leq 1/d$ hence $\epsilon/w_0^{\min} \geq 1$ and the bound becomes,
\[
    \delta_A(\w_0,\varepsilon)
    \ge
    \frac{1}{5}
    \min\left\{
    1,\,
    \varepsilon
    \exp\left(
        (1-e^{-1})
        \min\left\{
        \frac{T}{L+\mu},
        \frac{1}{\mu}
        \right\}
    \right)
    \right\}.
\]
Finally, using the same arguments as in the proof of \cref{thm:kl-lower}, we have 
\(
    \min\left\{1,\frac{\varepsilon}{w_0^{\min}}\right\}
    \ge
    \frac12
    \min\{1,\varepsilon\initbeta\},
\)
which concludes the proof.

\subsection{Proof of Theorem~\ref{thm:stabilizing-alg-fixed}}

Let
$\{\w_t\}_{t=0}^T$ be the trajectory of \cref{alg:anchored-MD-fixed} initialized at \(w_0\), and let $\{\w_t'\}_{t=0}^T$ be the trajectory initialized at \(w_0'\). Denote $F^{\mu}(\w) = F(\w) + \mu D_R(w,w_a) $. It is easy to see that \cref{alg:anchored-MD-fixed} simply runs MD updates on $F^{\mu}$.  
As a corollary of \cref{lem:relative-attia}, since Bregman divergence is unaffected by the addition of affine functions, \(F^\mu\) is \((\mu+L)\)-smooth and \(\mu\)-strongly convex relative to \(R\). We may therefore apply \cref{lem:relative-lu}. We will start with initialization stability. The following holds:
\begin{align*}
    \|\w_T - \w_T'\|^2 & \leq 2\| \w_T - \w^\star_{\mu}\|^2 + 2\|\w_T' - \w^\star_{\mu}\|^2 \\
    & \leq 4D_R( \w^\star_{\mu}, \w_T) + 4D_R(\w^\star_{\mu}, \w_T') \tag{$R$ is $1$-strongly convex} \\
    & \leq 4\left(1- \frac{\mu}{\mu + L} \right)^T \left( D_R(\w^\star_{\mu},\w_0)+ D_R(\w^\star_{\mu},\w_0') \right) \tag{\cref{lem:relative-lu}} \\
    & \leq 4\left( 1+ \frac{\mu}{L} \right)^{-T} \cdot \left( D_R(\w^\star_{\mu},\w_0)+ D_R(\w^\star_{\mu},\w_0') \right).
\end{align*} 
Plugging in $\mu = \frac{8L \log T}{T}$, using $(1+\frac{8\log T}{T})^T \geq T^2$ for $T\geq 2$, we have
\begin{align*}
    \|\w_T - \w_T'\|_1^2 &\leq  4\left( D_R(\w^\star_{\mu},\w_0)+ D_R(\w^\star_{\mu},\w_0') \right) \left(1+ \frac{8\log T}{T}\right)^{-T} \\
    & \leq \frac{4}{T^2}\left( D_R(\w^\star_{\mu},\w_0)+ D_R(\w^\star_{\mu},\w_0') \right) 
\end{align*}
Regarding optimization we have the following, 
\begin{align*}
    F(\w_T) - F(\w^\star) & = F^{\mu}(\w_T) - F^{\mu}(\w^{\star}) + \mu(D_R(\w^\star,w_a) - D_R(\w_T,w_a))\\
    & \leq F^{\mu}(\w_T) - F^{\mu}(\w^{\star}_{\mu}) +\mu D_R(\w^\star,w_a) 
\end{align*}
From \cref{lem:relative-lu} and the choice of $\mu$
\begin{align*}
    F(\w_T) - F(\w^\star)
    &\leq
    \frac{\mu D_R(\w^\star_{\mu},\w_0)}
    {\left(1 + \frac{\mu}{L}\right)^T -1}
    + \mu D_R(\w^\star,w_a) \\
    &\leq
    \frac{8L \log T \cdot D_R(\w^\star_{\mu},\w_0)}
    {T(T^2 - 1)}
    +
    \frac{8L \log T  D_R(\w^\star,w_a)}{T}.
\end{align*}
which concludes the proof.

\end{document}